\documentclass{article}

\PassOptionsToPackage{numbers, compress}{natbib}
 \usepackage[preprint]{neurips_2026}

\usepackage[utf8]{inputenc} % allow utf-8 input
\usepackage[T1]{fontenc}    % use 8-bit T1 fonts
\usepackage{hyperref}       % hyperlinks
\usepackage{url}            % simple URL typesetting
\usepackage{booktabs}       % professional-quality tables
\usepackage{amsfonts}       % blackboard math symbols
\usepackage{nicefrac}       % compact symbols for 1/2, etc.
\usepackage{microtype}      % microtypography
\usepackage{xcolor}         % colors
\usepackage{amsmath}
\usepackage{amsthm}
\usepackage{mathtools}
\usepackage{bm}
\usepackage{tikz}
\usetikzlibrary{positioning, arrows.meta}
\usepackage{booktabs}
\usepackage{tabularx}
\usepackage{array}
\usepackage{amsmath}
\usepackage{float}
\usepackage{subcaption}
\usepackage[normalem]{ulem}
\newcolumntype{Y}{>{\raggedright\arraybackslash}X}

\newtheorem{definition}{Definition}

\newtheorem{proposition}{Proposition}
\newtheorem{theorem}{Theorem}

\newtheorem{remark}{Remark}

\title{A Hilbert-Valued Functional Decomposition Framework for Explaining Time-Dependent Outputs}

\author{%
  Sophie Hanna Langbein\textsuperscript{1,2} \qquad
  Niklas Koenen\textsuperscript{1} \qquad
  Marvin N. Wright\textsuperscript{1,2} \qquad
  Julia Herbinger\textsuperscript{1} \\[0.6em]
  \textsuperscript{1}Leibniz Institute for Prevention Research and Epidemiology -- BIPS, Bremen, Germany \\
  \textsuperscript{2}Faculty of Mathematics and Computer Science, University of Bremen, Germany \\
}

\begin{document}

\maketitle

\begin{abstract}
Feature-based explanations quantify features' influence on model predictions, but are primarily designed for scalar outputs. In many applications, however, outputs are functional or multivariate, such as time-dependent trajectories in demand forecasting. Consequently, existing approaches typically explain each output location independently, ignoring dependencies across the output components. We address this limitation by developing a unified framework for feature-based explanations of time-dependent outputs. Specifically, we generalize functional decomposition to Hilbert-valued prediction functions and extend an existing feature-based explanation framework to this setting. Our framework introduces kernel-based output representations that enable time-dependency-aware explanations at multiple levels of temporal granularity, including \emph{time-specific}, \emph{time-resolved}, and \emph{time-aggregated}, while providing a unified view in which existing methods arise as special cases. We validate our framework on synthetic and real-world data, including intraday financial market volatility prediction and energy demand forecasting.
\end{abstract}

\section{Introduction}\label{sec:intro}

% XAI is important --> a lot of development including unifying framework
Machine learning models achieve strong predictive performance across many applications, but are widely considered black boxes. However, in high-stakes settings such as healthcare, finance, or policy making, understanding how predictions are formed is crucial \citep{rudin2019stop,bracke2019machine,tjoa2021survey}. This has led to the development of explainable artificial intelligence (XAI), including a large class of feature-based methods that quantify how input features influence model predictions, either \emph{locally} (individual predictions) \citep{lundberg2017shap, goldstein2015peeking} or \emph{globally} (model behavior across the data distribution) \citep{breiman2001random, fisher2019all, covert2020sage}. 
Given the large number of such methods, recent work has focused on unifying perspectives that reveal connections between them \citep{covert2021explaining,lundstrom2023unifying, deng2024unifying, fumagalli2025unifying}. In particular, the framework proposed by \cite{fumagalli2025unifying} formalizes feature-based explanations through a common set of operators, providing a general view on existing approaches. 

% all just scalar
However, these methods and their unifying formulations are predominantly developed for \emph{scalar-valued outputs}, as encountered in standard regression or classification tasks. In many applications, models instead produce \emph{multivariate} or \emph{functional outputs}, such as time-dependent trajectories \citep{salinas2020deepar,choi2016doctor}. In practice, XAI methods are routinely applied pointwise to functional outputs, explaining each output location in isolation \citep{vanzyl2024harnessing, neubauer2025explainable, gutierrez2024multioutput, huang2024improving, yang2022multivariate, donizy2022machine}. This has two limitations: it ignores \emph{dependencies across the output domain}, treating each output location as if it were unrelated to its neighbors, and it offers no principled mechanism for \emph{aggregating} feature effects \emph{across time}. The result is a gap between the natural structure of time-dependent outputs and the levels of temporal granularity at which they can be meaningfully explained. 

% the following example shows why this might be problematic
To illustrate these limitations, consider a patient in intensive care where a biomarker (e.g., blood lactate) is monitored over time. A model $F$ predicts a 24h trajectory (i.e., $F(\mathbf{x})$ is a function over $t$) (Fig.~\ref{fig:intro}, top right), where $X_1$ controls the recovery trend, and $X_2, X_3$ represent clinically meaningful, early shock ($t=10$h) and late deterioration ($t=18$h) events, respectively. A natural goal is to understand how each feature influences the trajectory. Pointwise explanations attribute feature effects independently at each $t$. Yet, the influence of $X_2$ is not confined to $t=10$h, but spreads over a temporal neighborhood, and clinicians may care less about attribution at a single moment than about the aggregate influence over a clinically meaningful phase. Pointwise methods support neither: they ignore dependencies across the output domain and lack principled basis for aggregation.

This highlights the need for explanation methods that account for dependencies across time in the output and support different levels of aggregation. To address this, we propose the \emph{Hilbert-valued explanation framework}, which extends feature-based explanations to multivariate and functional outcomes while taking into account the outcomes' dependence structure through kernel-based aggregation. Fig.~\ref{fig:intro} illustrates this contrast: an identity kernel produces isolated pointwise attributions, while a time-dependency-aware kernel (correlation kernel) yields phase-aware effects examined at a specific time, across the trajectory, or aggregated into a single importance.

\textbf{Contributions.}

\textbf{1. Hilbert-valued functional decomposition:} We introduce the Hilbert-valued functional decomposition ($\mathcal{H}$-FD), which extends classical functional decomposition to time-dependent model outputs (multivariate or functional), yielding pure feature effects in a Hilbert space.

\textbf{2. Hilbert-valued explanation framework:} We extend the unifying framework of \cite{fumagalli2025unifying} to time-dependent outputs, establishing a unified class of feature-based explanations that capture both dependencies across the output domain and different levels of aggregation. Classical pointwise explanations and time-dependent Sobol indices arise as special cases.

\textbf{3. Practical guidance:} We provide guidance on selecting appropriate kernels based on the desired interpretation and demonstrate their impact on the resulting explanations in real-world applications.

\begin{figure}[ht]
  \centering
  \includegraphics[width=\textwidth]{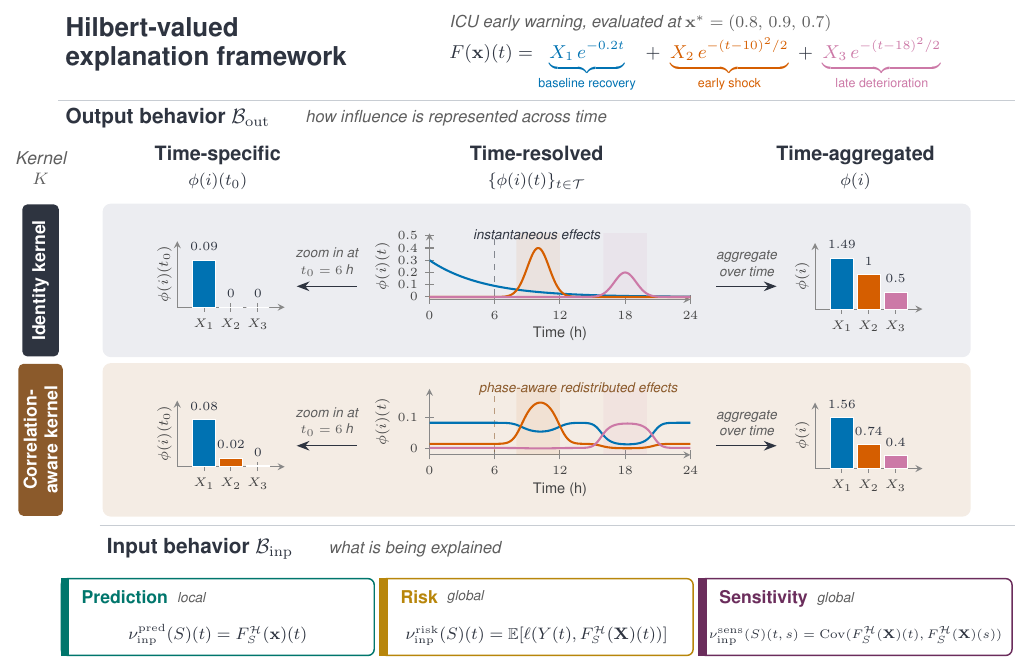}
  \caption{\textbf{Hilbert-valued explanation framework.} Illustration on an ICU early-warning toy model (\emph{top}). Two orthogonal axes structure time-dependent attribution. The \emph{output behavior} (middle) controls \emph{how} influence is represented across time, parameterized by a kernel $K$: the identity kernel produces instantaneous effects $\phi(i)(t)$ (\emph{middle, row~1}), a time-dependency-aware kernel (correlation kernel) redistributes them phase-awarely (\emph{middle, row~2}). \emph{Resolved} trajectories can be sampled at \emph{specific} time points $t_0$ or \emph{aggregated} into a scalar $\phi(i)$. The \emph{input behavior} (\emph{bottom}) specifies \emph{what} is being explained: prediction, risk, or sensitivity.}
  \label{fig:intro}
\end{figure}

\section{Related Work}

\textbf{Feature-based explanations for time-dependent outputs} largely adapt prediction-level methods to forecasting settings by treating multi-step outputs independently. TsSHAP \citep{raykar2023tsshap} explains individual forecast time points, ShapTime \citep{zhang2023shaptime} focuses on aggregated quantities, and PAX-TS \citep{kreuzer2025pax} combines both perspectives. Beyond forecasting, SurvSHAP(t) \citep{krzyzinski2023survshap} explains time-dependent survival functions pointwise, with SurvSHAP-IQ \citep{langbein2026functional} extending this to interaction effects. These methods remain confined to Shapley-style, prediction-level explanations operating on individual output locations or predefined aggregations, lacking a unified treatment of dependencies across the output domain.

%\textbf{Sensitivity analysis for multivariate and functional outputs.} 
\textbf{Outside the XAI literature, sensitivity analysis} has long addressed multivariate and functional outputs. Extensions of \emph{Sobol indices} define importance measures by aggregating variance across the output domain \citep{campbell2006sensitivity,lamboni2011multivariate,gamboa2014sensitivity}, while also accounting for temporal dependence through covariance-based formulations \citep{shi2018cross,alexanderian2020variance}. However, these approaches are restricted to variance-based notions of importance and have not been introduced from a feature-based XAI perspective. 
%Outside the XAI literature, sensitivity analysis has long addressed multivariate and functional outputs. \cite{campbell2006sensitivity} first extended Sobol indices to functional outputs via PCA decomposition, an approach later formalized by \cite{lamboni2011multivariate} through \emph{generalized sensitivity indices} aggregating variance across the output domain. \cite{gamboa2014sensitivity} established consistent estimators for multidimensional and functional Sobol indices. \cite{shi2018cross} and \cite{alexanderian2020variance} both incorporate the output's temporal dependence structure, through cross-covariance indices and Karhunen–Loève decomposition respectively. However, these methods are restricted to variance-based behavior and do not connect to the cooperative game-theoretic perspective underpinning modern feature-based XAI.

\section{Background}
Work on time-dependent outputs mainly arises in sensitivity analysis with limited links to feature-based explanations; we introduce the necessary concepts from both areas.

\textbf{General Notation.}
Consider a supervised learning setting with a function 
$F^\mathcal{H}: \mathcal{X} \to \mathcal{H}$, where $\mathcal{X} \subseteq \mathbb{R}^p$ and $\mathcal{H}$ is a Hilbert space with inner product $\langle \cdot, \cdot \rangle_\mathcal{H}$. This covers \emph{multivariate outputs} $F^\mathcal{H}(\mathbf{x}) \in \mathbb{R}^T$ ($T \in \mathbb{N}$) and \emph{functional outputs}, $F^\mathcal{H}(\mathbf{x}) \in L^2(\mathcal{T)}$. Scalar-valued outputs are written $F: \mathcal{X} \to \mathbb{R}$. For $t \in \mathcal{T}$, define $F_t(\mathbf{x}) := F^\mathcal{H}(\mathbf{x})(t) \in \mathbb{R}$.
Let $\mathbf{X} = (\mathbf{X}_1, \dots, \mathbf{X}_p)$ denote the input random vector.
For a subset $S \subseteq \{1,\dots,p\}$, let $\mathbf{X}_S$ denote subvectors and by $\mathbf{X}_{-S}$ complements. %Observations are $\mathbf{x}^{(i)} = (x_1^{(i)}, \dots, x_p^{(i)})$, $i=1,\dots,n$. 
While all results apply to general Hilbert-valued outputs, we focus on time-dependent trajectories.

\subsection{Sensitivity Analysis}

Sensitivity analysis attributes output variability to input features and their interactions via functional decomposition, inducing a variance decomposition that underlies closed and total Sobol indices.

\textbf{Functional and variance decomposition.}\label{sec:variance_decomposition} Let $F:\mathcal{X}\to\mathbb{R}$ denote a prediction function and let $P_{\mathbf X}$ be a reference distribution over $\mathcal{X}$. Functional decomposition represents $F$ as a sum of pure effects $f_S$ over all feature subsets $S\subseteq\{1,\dots,p\}$, where each $f_S$ captures effects beyond its subsets,
\begin{align}\label{eq:fun_decomposition}
    F(\mathbf{x}) = \sum_{S \subseteq \{1,\dots,p\}} f_S(\mathbf{x}), & \quad \text{with} & f_S(\mathbf{x}) := F_S(\mathbf{x}) - \sum_{L \subset S} f_L(\mathbf{x})
\end{align}
and $F_S(\mathbf{x}) = \mathbb{E}_{\mathbf{X}_{-S} \sim P_{\mathbf{X}_{-S}}}[F(\mathbf{X}) \mid \mathbf{X}_S = \mathbf{x}_S]$ denotes the marginal effect of features in $S$ \citep{hoeffding1948,sobol2001,hooker2004discovering}. The decomposition depends on a reference distribution, typically marginal or conditional \citep{fumagalli2025unifying}. Under feature independence and square-integrability of $F$, the components are orthogonal and induce the variance decomposition $\textstyle \mathbb{V}[F(\mathbf{X})] = \sum_{S \subseteq \{1,\dots,p\}} \mathbb{V}[f_S(\mathbf{X})]$. For time-dependent outputs, this decomposition applies pointwise via $F_t(\mathbf{x}) = F^\mathcal{H}(\mathbf{x})(t)$ with $f_L$ replaced by $f_{L,t}(\mathbf{x}) = f_L^\mathcal{H}(\mathbf{x})(t)$.

Based on this variance decomposition, Sobol indices quantify the contribution of feature subsets to the total variance. The \textbf{closed Sobol index} for a subset $S$ is defined as
\begin{equation}
    \xi_S
     =
    \frac{\mathbb{V}[F_S(\mathbf{X})]}
         {\mathbb{V}[F(\mathbf{X})]}
    =
    \frac{\sum_{L \subseteq S} \mathbb{V}[f_L(\mathbf{X})]}
         {\mathbb{V}[F(\mathbf{X})]},
\end{equation}
where the second equality holds under independent inputs \citep{sobol1993,sobol2001}. It captures all effects involving only features in $S$, while the \textbf{total Sobol index} additionally includes all interaction effects involving at least one feature in $S$ \citep{saltelli2008global}. %For time-dependent outputs, both indices are defined pointwise \citep{lamboni2011multivariate,gamboa2014sensitivity}.

\textbf{Time-dependent Sobol indices.} For multivariate or functional outputs, variance is replaced by the covariance kernel $\mathrm{Cov}(F_t, F_s)$, leading to Sobol-type indices that account for dependencies across the output domain. A \textbf{time-specific} \emph{closed} Sobol index can then be defined as \citep{shi2018cross}:
\begin{equation}\label{eq:sobol_time}
    \xi_S^{\mathrm{spec}}(t)
    =
    \frac{
    \int_{\mathcal{T}} 
    \mathrm{Cov}\!\big(F_{S,t}(\mathbf{X}), F_{S,s}(\mathbf{X})\big)\, ds
    }{
    \int_{\mathcal{T}} 
    \mathrm{Cov}\!\big(F_t(\mathbf{X}), F_s(\mathbf{X})\big)\, ds
    },
\end{equation}
accounting for time-dependencies relative to a fixed $t$. The corresponding \emph{total} index is obtained analogously by aggregating all effects involving at least one feature in $S$. Integrating these covariance contributions over time, yields scalar \textbf{time-aggregated} closed and total Sobol indices \citep{alexanderian2020variance}. Neglecting cross-time covariance reduces it to a sum of marginal variances.

\subsection{Feature-based Explanations}\label{sec:unifying_framework}
% define unifying framework
Feature-based explanations quantify how features influence model behavior. Following \cite{fumagalli2025unifying, herbinger2026granite}, they can be expressed as compositions of \emph{masking} ($\mathcal{M}$), \emph{behavior} ($\mathcal{B}$), and \emph{interaction} ($\mathcal{I}$) operators.

\textbf{Masking operator.} The masking operator  $\mathcal{M}$ constructs a predictor depending only on features in $S$,
$$
F_S(\mathbf{x}) := (\mathcal{M} F)(\mathbf{x},S),
$$
by removing (masking) features in $-S$ using a reference distribution $P_{X_{-S}}$ (e.g., marginal or conditional). Notably, $F_S$ corresponds to the marginal effect used in Eq.~\eqref{eq:fun_decomposition}.

\textbf{Behavior operator.} The behavior operator $\mathcal{B}$ maps masked predictors to a set function $\nu$
$$
\nu(S) := (\mathcal{B}(\mathcal{M}F))(S),
$$
which defines the quantity of interest, yielding local predictions $F_S(\mathbf{x})$ or global quantities such as prediction variance $\mathrm{Var}(F_S(\mathbf{X}))$ or expected loss $\mathbb{E}[\ell(Y, F_S(\mathbf{X}))]$.

\textbf{Interaction operator.} The interaction operator $\mathcal{I}$ maps $\nu$ to feature-based explanations $\phi$
$$
\phi(j) := (\mathcal I(\mathcal B(\mathcal{M}F)))(j), \qquad j\in \{1,\dots,p\}.
$$
This operator quantifies the influence of individual features by accounting for higher-order effects encoded in $\nu$.  
This yields different notions of explanations: \emph{pure effects}, $\phi(j)=\nu(j)-\nu(\emptyset)$, quantify the standalone contribution of feature $j$ and do not include interactions; 
\emph{partial effects}, $\phi(j)=\frac{1}{p}\sum_{S \subseteq -j}  \binom{p-1}{|S|}^{-1}[\nu(S \cup j)-\nu(S)]$, correspond to Shapley values and distribute interactions across features \citep{shapley1951}; 
and \emph{full effects}, $\phi(j)=\nu(\{1,\dots, p\})-\nu(-j)$, compare the full model to one where feature $j$ is excluded, thereby removing all contributions involving $j$ and capturing its total effect including interactions;
Extensions to feature group explanations are defined in \cite{fumagalli2025unifying}.

This operator view recovers many established methods as special cases. Prediction-based behavior with partial effects yields Shapley values \citep{shapley1951}, pure effects recover (conditional) partial dependence (PD) functions \citep{friedman2001greedy}, and risk-based behavior with full effects yields (conditional) permutation feature importance (PFI) \citep{breiman2001random, fisher2019all, strobl2008conditional}. An overview is provided in Tab.~\ref{tab:method_mapping} with more details in \cite{fumagalli2025unifying}. Closed and total Sobol indices arise analogously via marginal masking, variance-based behavior, and pure or full effects, linking the work to classical sensitivity analysis. However, this formulation is defined for scalar outputs $F$ and does not directly extend to Hilbert-valued outputs $F^{\mathcal H}$.

\section{Methodology}\label{sec:method}

We extend feature-based explanations to Hilbert-valued outputs via a functional decomposition in $\mathcal{H}$.

\subsection{Hilbert-valued Functional Decomposition ($\mathcal{H}$-FD)}\label{sec:hfd}
We generalize the functional decomposition of Eq.~\eqref{eq:fun_decomposition} to Hilbert-valued outputs $F^\mathcal{H}(\mathbf{x})$. The decomposition is performed over the input space as before, while the resulting effects take values in a Hilbert space $\mathcal{H}$. 
Intuitively, each effect contributes not a single value but a \emph{function} (or vector) describing its influence across the output domain.

\begin{definition}[Hilbert-Valued Functional Decomposition]
The Hilbert-valued prediction function $F^\mathcal{H}(\mathbf{x})$ admits a decomposition into \textbf{Hilbert-valued pure effects} $f^\mathcal{H}_S$:
\begin{equation}
    F^\mathcal{H}(\mathbf{x}) = \sum_{S \subseteq \{1,\dots,p\}} f^\mathcal{H}_S(\mathbf{x}), 
    \quad 
    f^\mathcal{H}_S(\mathbf{x}) := F^\mathcal{H}_S(\mathbf{x}_S) - \sum_{L \subset S} f^\mathcal{H}_L(\mathbf{x}).
\end{equation}
Here, $F^\mathcal{H}_S$ denotes the Hilbert-valued marginal effect with respect to a reference distribution $P_{\mathbf{X}_{-S}}$:
\begin{equation}
    F^\mathcal{H}_S(\mathbf{x}_S) := \mathbb{E}_{\mathbf{X}_{-S} \sim P_{\mathbf{X}_{-S}}}[F^\mathcal{H}(\mathbf{X}) \mid \mathbf{X}_S = \mathbf{x}_S],\quad \text{where} \quad F^\mathcal{H}_\emptyset := \mathbb{E}[F^\mathcal{H}(\mathbf{X})]
\end{equation}
\end{definition}

Thus, $\mathcal{H}$-FD follows the scalar construction, but the effects $f^\mathcal{H}_S(\mathbf{x})$ are functions or vectors in $\mathcal{H}$. 

\subsection{Hilbert-Valued Explanation Framework}

Now, we extend the framework introduced in  Sec.~\ref{sec:unifying_framework} to Hilbert-valued outputs $F^\mathcal{H}$ based on the $\mathcal{H}$-FD. A detailed overview of operator choices is provided in App.~\ref{app:operators}.

The \textbf{masking operator} $\mathcal{M}$ remains unchanged and connects directly to the $\mathcal{H}$-FD: the masked predictor $F_S^\mathcal{H}(\mathbf{x}) := (\mathcal M F^\mathcal H)(\mathbf{x},S)$ corresponds to the marginal effect and is itself a function (or vector) over the output domain.

Since $F_S^\mathcal{H}$ is defined over the output domain, the \textbf{behavior operator} $\mathcal{B}$ must specify both \emph{what} is explained and \emph{how} it is represented across that domain, in particular with respect to \emph{dependencies between output locations} and the \emph{level of aggregation}. We therefore decompose it as
\[
\mathcal B(F_S^\mathcal{H}) := \mathcal B_{\mathrm{out}}\big(\mathcal B_{\mathrm{inp}}(F_S^\mathcal{H})\big) = (\mathcal B_{\mathrm{out}} \circ \mathcal B_{\mathrm{inp}})(F_S^\mathcal{H}),
\]
where $\mathcal B_{\mathrm{inp}}$ defines the quantity of interest in the \textbf{input space} (as in the scalar case), and $\mathcal B_{\mathrm{out}}$ specifies its representation over the \textbf{output domain}.

This yields the extended pipeline
\[
F^\mathcal H
\;\xrightarrow{\mathcal M}\;
F_S^\mathcal H
\;\xrightarrow{\mathcal B_{\mathrm{inp}}}\;
\nu_{\mathrm{inp}}(S)
\;\xrightarrow{\mathcal B_{\mathrm{out}}}\;
\nu(S)
\;\xrightarrow{\mathcal I}\;
\phi(j).
\]

\textbf{Input behavior operator $\mathcal B_{\mathrm{inp}}$.}
The input behavior operator maps masked predictors to Hilbert-valued quantities of interest,
\[
\nu_{\mathrm{inp}}(S) := (\mathcal B_{\mathrm{inp}}(\mathcal M F^\mathcal H))(S),
\]
analogous to the scalar case. Typical choices include local prediction-based behavior $\nu_{\mathrm{inp}}^{\mathrm{pred}}(S)(t)=F_S^\mathcal H(\mathbf x)(t)$, 
global risk-based behavior $\nu_{\mathrm{inp}}^{\mathrm{risk}}(S)(t)=\mathbb E[\ell(Y(t),F_S^\mathcal H(\mathbf X)(t))]$, 
and sensitivity-based behavior $\nu_{\mathrm{inp}}^{\mathrm{sens}}(S)(t,s)=\mathrm{Cov}(F_S^\mathcal H(\mathbf X)(t),F_S^\mathcal H(\mathbf X)(s))$. 
Depending on the choice, $\nu_{\mathrm{inp}}(S)$ is a function over $\mathcal{T}$ or a covariance surface over $\mathcal{T}\times\mathcal{T}$.

\textbf{Output behavior operator $\mathcal B_{\mathrm{out}}$.} The output behavior operator determines how Hilbert-valued quantities are aggregated across the time domain,
\begin{equation}\label{eq:output_behavior}
    \nu(S) := (\mathcal B_{\mathrm{out}}(\nu_{\mathrm{inp}}))(S),
    \quad
    (\mathcal B_{\mathrm{out}}^{\mathrm{spec}} \nu_{\mathrm{inp}})(S,\cdot)
    := \int_{\mathcal T} K(t,s)\, \nu_{\mathrm{inp}}(S)(\cdot)\,ds,
\end{equation}
where $K$ is a symmetric positive semi-definite kernel encoding dependencies across time. The time-specific operator $\mathcal B_{\mathrm{out}}^{\mathrm{spec}}$ defines the base construction, yielding a \textbf{time-specific} explanation at each $t$, which may incorporate information from other time points through $K$. Evaluating it for all $t \in \mathcal{T}$ gives \textbf{time-resolved} explanations, while aggregating over $\mathcal{T}$, i.e.,
$(\mathcal B_{\mathrm{out}}^{\mathrm{aggr}} \nu_{\mathrm{inp}})(S)
= \int_{\mathcal T} (\mathcal B_{\mathrm{out}}^{\mathrm{spec}} \nu_{\mathrm{inp}})(S,\cdot)\,dt$,
yields \textbf{time-aggregated} explanations.

For time-aggregated explanations, kernel aggregation admits an explicit integral representation, characterizing when feature rankings are invariant to the choice of kernel.

\begin{theorem}\label{thm:kernel_effect}
Let $\mathcal B_{\mathrm{out}}^{\mathrm{spec}}$ be defined as in Eq.~\eqref{eq:output_behavior}. Then, for any subset $S\subseteq\{1,\dots,p\}$, the time-aggregated effect with prediction-based input behavior $\nu_{\mathrm{inp}}(S)(t) = F_S^\mathcal{H}(\mathbf{x})(t)$ admits
\[
(\mathcal B_{\mathrm{out}}^{\mathrm{aggr}} \nu_{\mathrm{inp}})(S)
=
\int_{\mathcal T} w_K(s)\,\nu_{\mathrm{inp}}(S)(s)\,ds,
\qquad
w_K(s):=\int_{\mathcal T}K(t,s)\,dt.
\]
If $\mathcal B_{\mathrm{inp}}$ is \textbf{linear} in $F^\mathcal H$ and $\nu_{\mathrm{inp}}(\{i\})(s)\ge \nu_{\mathrm{inp}}(\{j\})(s)
\text{ for all } s\in\mathcal T$,
the corresponding time-aggregated pure effects \textbf{preserve the ordering} of features $i$ and $j$ for any kernel $K$ satisfying $K(t,s)\ge 0$. Analogous results hold for partial and full effects; see App.~\ref{app:proof_linearity}.
\end{theorem}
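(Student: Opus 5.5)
The plan is to read Eq.~\eqref{eq:output_behavior} with the argument of $\nu_{\mathrm{inp}}$ being the integration variable $s$, that is, $(\mathcal B_{\mathrm{out}}^{\mathrm{spec}}\nu_{\mathrm{inp}})(S,t)=\int_{\mathcal T}K(t,s)\,\nu_{\mathrm{inp}}(S)(s)\,ds$. Then the integral representation follows by integrating over $t$ and exchanging the order of integration. Concretely,
\[
(\mathcal B_{\mathrm{out}}^{\mathrm{aggr}}\nu_{\mathrm{inp}})(S)=\int_{\mathcal T}\int_{\mathcal T}K(t,s)\,\nu_{\mathrm{inp}}(S)(s)\,ds\,dt=\int_{\mathcal T}\Big(\int_{\mathcal T}K(t,s)\,dt\Big)\nu_{\mathrm{inp}}(S)(s)\,ds .
\]
The exchange is justified by Fubini--Tonelli. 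It suffices that $(t,s)\mapsto K(t,s)\,\nu_{\mathrm{inp}}(S)(s)$ is absolutely integrable on $\mathcal T\times\mathcal T$. I will impose a standing assumption, e.g.\ $\mathcal T$ of finite measure, $K\in L^2(\mathcal T\times\mathcal T)$ (or $\sup_s\int|K(t,s)|\,dt<\infty$), and $F_S^{\mathcal H}(\mathbf x)\in L^2(\mathcal T)$. With that assumption, Cauchy--Schwarz gives integrability, and this is the main technical point to state carefully. In the multivariate case $\mathcal T=\{1,\dots,T\}$ with counting measure, the exchange is just swapping two finite sums. This gives the identity with $w_K(s)=\int_{\mathcal T}K(t,s)\,dt$.

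For the ordering claim, I use that the map $\nu_{\mathrm{inp}}\mapsto\int w_K\,\nu_{\mathrm{inp}}$ is linear and, when $K\ge0$, also monotone, since then $w_K(s)\ge0$ for all $s$. The time-aggregated pure effect of feature $i$ is $\phi(i)=\nu(\{i\})-\nu(\emptyset)$. Applying the representation to both terms and using linearity of the aggregated output operator gives
\[
\phi(i)-\phi(j)=\int_{\mathcal T}w_K(s)\big[\nu_{\mathrm{inp}}(\{i\})(s)-\nu_{\mathrm{inp}}(\{j\})(s)\big]\,ds .
\]
The $\nu(\emptyset)$ terms cancel. The integrand is a product of two nonnegative functions, so $\phi(i)\ge\phi(j)$ for every admissible nonnegative kernel, and the ordering is preserved.

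Linearity of $\mathcal B_{\mathrm{inp}}$ in $F^{\mathcal H}$ is what makes the first representation carry over beyond the prediction behavior. For any linear $\mathcal B_{\mathrm{inp}}$, such as prediction, or expectations of linear functionals of $F_S^{\mathcal H}$, the set function $\nu_{\mathrm{inp}}(S)$ is a function on $\mathcal T$. The Fubini argument above then applies verbatim.

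For partial and full effects, I will note that both are fixed linear combinations of values $\nu(S)$:
\[
\phi(j)=\sum_S c_S\,\nu(S),\qquad \phi(j)=\nu(\{1,\dots,p\})-\nu(-j).
\]
By the representation, each is equal to $\int w_K\,\psi_j$, where $\psi_j$ is the same combination of the $\nu_{\mathrm{inp}}(S)$. So if the hypothesis is stated at the level of these Shapley-type or full-effect input combinations, $\psi_i\ge\psi_j$ pointwise, the same monotonicity argument gives ordering preservation. The main obstacle I expect is being precise about which pointwise dominance hypothesis is needed for partial and full effects. Dominance of the singletons alone does not imply dominance of the Shapley combinations, so the appendix statement should assume dominance of the corresponding pointwise, identity-kernel effects.
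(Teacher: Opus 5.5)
Your derivation of the weighted-integral representation and of the ordering for pure effects matches the paper's proof. Both use Fubini to extract $w_K(s)=\int_{\mathcal T}K(t,s)\,dt$, and then observe that $w_K\ge 0$ multiplies a pointwise nonnegative difference. You are more explicit than the paper about integrability and about the cancellation of $\nu(\emptyset)$, which the paper leaves implicit.

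Where you genuinely differ is the extension to partial and full effects.

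\emph{The paper's route.} It regroups the Shapley difference over coalitions $A\subseteq P\setminus\{i,j\}$ as
$\phi_i-\phi_j=\sum_A(\alpha_{|A|}+\alpha_{|A|+1})\big[\nu(A\cup\{i\})-\nu(A\cup\{j\})\big]$.
It then assumes coalition-wise dominance, $\nu_{\mathrm{inp}}(A\cup\{i\})(s)\ge\nu_{\mathrm{inp}}(A\cup\{j\})(s)$ for every such $A$ and every $s$. For full effects only $A=P\setminus\{i,j\}$ is needed.

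\emph{Your route.} You observe that the pure, partial and full operators are fixed linear combinations of the values $\nu(S)$, so they commute with aggregation: $\phi^K(i)=\int_{\mathcal T}w_K(s)\,\psi_i(s)\,ds$, where $\psi_i$ is the time-resolved identity-kernel effect. Pointwise dominance $\psi_i\ge\psi_j$ then suffices.

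\emph{Comparison.} For full effects the two hypotheses coincide. For Shapley values yours is weaker, since the paper's condition implies it by the same regrouping applied pointwise in $s$. Yours also carries over verbatim to any other linear interaction operator. The paper's condition has two advantages: it can be checked directly on the input game without computing time-resolved Shapley values, and it reads as ``$i$ dominates $j$ in every coalition context''.

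One small correction. Linearity of $\mathcal B_{\mathrm{inp}}$ is not what makes $\nu_{\mathrm{inp}}(S)$ a function on $\mathcal T$; the nonlinear risk behavior is one as well. In fact neither your argument nor the paper's uses linearity anywhere. What matters is that $\nu_{\mathrm{inp}}(S)$ lives pointwise on $\mathcal T$, together with the dominance hypothesis.
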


\begin{remark}
For \textbf{nonlinear} input behaviors such as sensitivity or risk, the ordering invariance in Thm.~\ref{thm:kernel_effect} does not generally hold. Although the same weighted-integral representation applies, different kernels may induce different feature rankings.
\end{remark}\label{rmk:kernel_effect}

Finally, note that the proposed framework preserves the additive decomposition: combining masking with input and output behavior operators always yields a valid set function over feature subsets, ensuring that feature-based explanations remain well-defined.

\begin{proposition}\label{thm:moebius}
Let 
$\nu(S) := (\mathcal B_{\mathrm{out}}(\mathcal B_{\mathrm{inp}}(\mathcal M F^\mathcal H)))(S)$.
Then the Möbius decomposition 
\[
\nu(\{1,\dots,p\})
=
\sum_{S\subseteq\{1,\dots,p\}}\sum_{L\subseteq S}(-1)^{|S|-|L|}\nu(L)
\]
holds \citep{rota1964foundations, grabisch1999axiomatic}. Consequently, time-specific, time-resolved, and time-aggregated explanations all admit a unique additive decomposition for fixed choices of $\mathcal M$, $\mathcal B_{\mathrm{inp}}$, and $\mathcal B_{\mathrm{out}}$.
\end{proposition}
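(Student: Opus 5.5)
The plan is to treat $\nu$ as an arbitrary real-valued set function on $2^{[p]}$, with $[p]:=\{1,\dots,p\}$, and prove the Möbius identity purely combinatorially. The only structural input is that the composition $\mathcal B_{\mathrm{out}}\circ\mathcal B_{\mathrm{inp}}\circ\mathcal M$ assigns a well-defined value $\nu(S)$ to every $S\subseteq[p]$. This holds because $\mathcal M$ produces $F^\mathcal H_S$ for each $S$, and $\mathcal B_{\mathrm{inp}}$, $\mathcal B_{\mathrm{out}}$ act on that object separately for each $S$. For time-specific explanations, $\nu(S)=\nu(S,t)$ is real for fixed $t$; for time-resolved ones it is an element of a vector space of functions of $t$; for time-aggregated ones it is a real number. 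The argument only uses additive group structure, so it covers all three cases at once, and for the time-resolved case we can apply it pointwise in $t$.

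Next, define the Möbius transform
\[
m(S):=\sum_{L\subseteq S}(-1)^{|S|-|L|}\nu(L).
\]
The claim is then $\nu([p])=\sum_{S\subseteq[p]}m(S)$. To verify it, swap the order of summation:
\[
\sum_{S\subseteq[p]}m(S)=\sum_{L\subseteq[p]}\nu(L)\sum_{S:\,L\subseteq S\subseteq[p]}(-1)^{|S|-|L|}.
\]
Writing $S=L\cup R$ with $R\subseteq[p]\setminus L$, the inner sum equals $\sum_{k=0}^{n}\binom{n}{k}(-1)^k=(1-1)^n$, where $n=p-|L|$. This vanishes unless $L=[p]$, in which case it equals $1$, and the identity follows. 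The same argument applied to any $T$ in place of $[p]$ gives $\nu(T)=\sum_{S\subseteq T}m(S)$.

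For uniqueness, suppose $\nu(T)=\sum_{S\subseteq T}g(S)$ holds for all $T$. This system is triangular with respect to inclusion, so induction on $|T|$ forces $g(S)=\nu(S)-\sum_{L\subsetneq S}g(L)$, which is exactly the recursion defining $m$. Hence $g=m$, and this mirrors the recursive construction of $f_S^\mathcal H$ in the $\mathcal H$-FD. Consequently the additive decomposition is unique once $\mathcal M$, $\mathcal B_{\mathrm{inp}}$ and $\mathcal B_{\mathrm{out}}$ are fixed.

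The only step needing care is confirming that the time-resolved and time-aggregated $\nu$ are genuinely well-defined: the integrals in Eq.~\eqref{eq:output_behavior} must be finite, for example by assuming $K$ and $\nu_{\mathrm{inp}}(S)$ are square-integrable. Once that is granted, the combinatorics is routine.
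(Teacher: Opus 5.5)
Your proposal is correct and follows the same route as the paper's proof: define the Möbius transform $m(S)=\sum_{L\subseteq S}(-1)^{|S|-|L|}\nu(L)$, apply the inversion $\nu(T)=\sum_{S\subseteq T}m(S)$ at $T=\{1,\dots,p\}$, and obtain uniqueness from the uniqueness of the Möbius transform. The only difference is that you prove the inversion yourself (via $(1-1)^{p-|L|}$) and the uniqueness (via triangular induction on $|T|$), and you handle the vector-valued time-resolved case pointwise in $t$, whereas the paper simply cites Möbius inversion for a real-valued set function.
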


\textbf{Interaction operator and resulting explanations.}
The interaction operator $\mathcal I$ maps the set function $\nu$ to feature-based explanations $\phi(j) := (\mathcal I(\nu))(j)$. As illustrated in Fig. \ref{fig:intro}, the resulting explanation depends on both behavior operators: $\mathcal B_{\mathrm{inp}}$ determines \emph{what} is explained, while $\mathcal B_{\mathrm{out}}$ determines \emph{how} feature influence is represented across time. This yields scalar-valued time-specific and time-aggregated explanations, as well as time-resolved explanations as functions (or vectors) over $\mathcal{T}$, with the kernel controlling whether temporal dependencies are ignored or incorporated.

\subsection{Unifying Existing Methods}

\textbf{Time point feature-based explanations.}
If $K = I$ (identity kernel), the output behavior $\mathcal B_{\mathrm{out}}$ ignores cross-time dependencies: time-specific and time-resolved explanations reduce to pointwise explanations, and time-aggregated explanations to simple averages over the output domain.
Within this setting, existing methods arise as special cases combining $K=I$ with prediction-based input behavior (see App.~\ref{app:mapping_existing}). In particular, TsSHAP corresponds to time-specific Shapley values. ShapTime operates on aggregated output quantities, yielding time-aggregated explanations, while PAX-TS provides time-specific and time-aggregated explanations. SurvSHAP(t) produces time-resolved explanations over the trajectory; SurvSHAP-IQ extends this to interaction effects via $\mathcal I$.

In summary, existing approaches are confined to prediction-based explanations that are either pointwise (time-specific/time-resolved) or based on predefined aggregations (time-aggregated), without explicitly modeling dependencies across the output domain. Our framework removes both restrictions: general kernels $K$ enable time-dependency-aware explanations and risk and sensitivity-based input behaviors extend the framework to global notions of importance.

\textbf{Time-dependent Sobol indices.}
Beyond prediction-based methods, our framework also recovers Sobol measures for Hilbert-valued outputs via sensitivity-based input behavior and a constant kernel.

\begin{theorem}\label{thm:sobol_indices}
Let
$\nu_{\mathrm{inp}}(S)(t,s)
=
\mathrm{Cov}\!\big(F_S^\mathcal H(\mathbf X)(t),F_S^\mathcal H(\mathbf X)(s)\big)$
and choose a constant kernel $K(t,s)=1$. Then the output behavior yields
\[
\nu(S)(t)=\int_{\mathcal T}\nu_{\mathrm{inp}}(S)(t,s)\,ds,
\qquad
\nu(S)=\int_{\mathcal T}\int_{\mathcal T}\nu_{\mathrm{inp}}(S)(t,s)\,dt\,ds.
\]
Under feature independence, the corresponding pure and full interaction operators $\mathcal{I}$ recover the closed and total time-specific and time-aggregated Sobol indices.
\end{theorem}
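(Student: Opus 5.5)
The plan is to compute $\nu(S)$ explicitly, apply the interaction operators, and then match the results with the time-dependent Sobol indices of \citet{shi2018cross} and \citet{alexanderian2020variance}.

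\textbf{Step 1: compute $\nu(S)$.} Substitute the sensitivity input behavior into Eq.~\eqref{eq:output_behavior} with $K\equiv 1$. For a covariance surface, the time-specific operator integrates out the second time argument against the kernel:
\[
\nu(S)(t)=\int_{\mathcal T} K(t,s)\,\nu_{\mathrm{inp}}(S)(t,s)\,ds=\int_{\mathcal T}\nu_{\mathrm{inp}}(S)(t,s)\,ds .
\]
Integrating once more over $t$ gives the aggregated form. All integrals are finite because $F^\mathcal H(\mathbf X)\in L^2$ with $\mathbb E\|F^\mathcal H(\mathbf X)\|^2<\infty$. Cauchy--Schwarz then gives $|\mathrm{Cov}(F_{S,t},F_{S,s})|\le \sqrt{\mathbb V F_{S,t}\,\mathbb V F_{S,s}}$, and Jensen gives $\mathbb V F_{S,t}\le \mathbb V F_t$. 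Fubini therefore applies whenever $\mathcal T$ has finite measure. For the aggregated form one can also write $\nu(S)=\mathbb V\big[\int_{\mathcal T} F_S^\mathcal H(\mathbf X)(t)\,dt\big]$.

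\textbf{Step 2: apply the interaction operators.} The pure effect is $\nu(S)-\nu(\emptyset)$, and $\nu(\emptyset)=0$ because $F_\emptyset^\mathcal H$ is constant. So the pure effect of $S$, normalized by $\nu(\{1,\dots,p\})$, is
\[
\frac{\int_{\mathcal T}\mathrm{Cov}(F_{S,t},F_{S,s})\,ds}{\int_{\mathcal T}\mathrm{Cov}(F_t,F_s)\,ds}.
\]
This is exactly $\xi_S^{\mathrm{spec}}(t)$ in Eq.~\eqref{eq:sobol_time}. The same computation with the double integral gives the time-aggregated closed index. The normalization is a division by $\nu(\{1,\dots,p\})$ and is harmless as long as the denominator is nonzero, which is the same well-definedness condition the Sobol indices themselves require.

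\textbf{Step 3: the closed-form expansion.} To connect with the expansion $\sum_{L\subseteq S}\mathbb V[f_L]$, use the $\mathcal H$-FD. Under feature independence with marginal masking, $F_S^\mathcal H=\sum_{L\subseteq S} f_L^\mathcal H$, and the components satisfy $\mathbb E[f_L^\mathcal H(\mathbf X)(t)\,f_{L'}^\mathcal H(\mathbf X)(s)]=0$ for $L\ne L'$. This orthogonality holds for every pair $(t,s)$ and follows exactly as in the scalar Hoeffding argument: condition on a coordinate in $L\setminus L'$ and use $\mathbb E[f_L^\mathcal H\mid \mathbf X_{L\setminus\{k\}}]=0$, applied pointwise in $t$. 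Hence
\[
\mathrm{Cov}(F_{S,t},F_{S,s})=\sum_{L\subseteq S}\mathrm{Cov}(f_{L,t},f_{L,s}),
\]
so the pure effect agrees with the closed index of \citet{shi2018cross} in both its forms.

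\textbf{Step 4: the total index.} The full effect is $\nu(\{1,\dots,p\})-\nu(-S)$. By the cross-covariance expansion of Step 3 applied to $-S$, this equals $\sum_{L:\,L\cap S\neq\emptyset}\int\mathrm{Cov}(f_{L,t},f_{L,s})\,ds$, which is the total time-specific index. Integrating over $t$ gives the total time-aggregated index.

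\textbf{Main obstacle.} The delicate step is the orthogonality of the Hilbert-valued components at the level of cross-time covariances, together with the measurability and Fubini justifications. I would prove it pointwise in $(t,s)$ with the scalar Hoeffding argument, then integrate using the square-integrability bounds from Step 1.
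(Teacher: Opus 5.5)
Your proposal is correct and follows essentially the same route as the paper's proof. You evaluate the output behavior with $K\equiv 1$ to get the single and double covariance integrals, use orthogonality of the $\mathcal H$-FD components under independence to expand $\nu(S)(t)$ as $\sum_{L\subseteq S}\int_{\mathcal T}\mathrm{Cov}(f_{L,t},f_{L,s})\,ds$, and then normalize by $\nu(\{1,\dots,p\})$ to identify the pure effect with the closed index and the full effect $\nu(\{1,\dots,p\})-\nu(-S)$ with the total index.

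Your version is more explicit than the paper's, which only asserts the orthogonality and the full-effect identity. You sketch the pointwise-in-$(t,s)$ Hoeffding argument, note $\nu(\emptyset)=0$, and justify Fubini via the Cauchy--Schwarz bound on a finite-measure $\mathcal T$.
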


\section{Experiments}\label{sec:experiments}

We empirically evaluate the proposed framework on synthetic and real-world datasets, with a \emph{focus on kernel choice} and its impact on resulting explanations. A Python implementation of our framework along with all experiments is available at \href{https://anonymous.4open.science/r/Hilbert_functional_decomposition-FD8F/README.md}{GitHub}, with full computational details in App.~\ref{app:experiments}. We additionally validate the \emph{recovery of feature effects and Sobol indices} (Thm.~\ref{thm:sobol_indices}), on a synthetic model with known ground truth across multiple model classes and varying sample sizes, confirming consistent convergence behavior (see App.~\ref{app:ground_truth}).

\subsection{Kernel Guidance}\label{sec:kernel_guidance}

The kernel $K$ encodes assumptions about how feature influence is distributed over the output domain; it should reflect the \emph{research question} rather than be tuned as a hyperparameter. We summarize guidance via three principles, illustrated on synthetic scenarios with known ground truth using pure first-order local effects $f_i(\mathbf{x})$ for a random observation (Fig.~\ref{fig:condensed_kernel_guidance}; see also App.~\ref{app:kernel_guidance}, Tab.~\ref{tab:kernel_guidance} for an extended kernel guidance reference).

\textbf{Principle 1: Match temporal scope to the notion of contribution.} If the question is \emph{which feature drives the output at an instant}, use the identity kernel ($K = I$). For \emph{sustained influence} or \emph{alignment with trajectory shape}, use smoothing or correlation-aware kernels. The ICU example (Fig.~\ref{fig:condensed_kernel_guidance}, col.~1) illustrates this: the identity kernel yields sharp peaks for $X_2$ and $X_3$ at $t=10$h and $t=18$h, tracking instantaneous events. The OU kernel $K_{\mathrm{OU}}(t,s) = \sigma^2\exp(-|t-s|/\ell)$ broadens these into windows of width $\ell$, reflecting that a shock event influences a patient's state for some time before and after it occurs. The correlation kernel reshapes attribution entirely, weighting a feature effect by alignment with output covariance. Since $X_1$'s slow exponential decay drives the trajectory between shocks, its attribution is amplified there, while $X_2$ and $X_3$ retain localized peaks at $t=10$h and $t=18$h. These are not competing estimates --- they answer \emph{different questions}, each valid in context.

\textbf{Principle 2: Match kernel symmetry to temporal causality.} Symmetric kernels (Gaussian, OU) implicitly assume that influence at time $t$ can be informed by past and future output values. This may produce temporally incoherent attributions when the underlying process is causal. Fig.~\ref{fig:condensed_kernel_guidance}, col.~2 considers a market price response driven by a pulse event ($X_1$), a transient temperature effect ($X_2$), and a constant baseline ($X_3$). The unprecetended and short-lived pulse $X_1$ active on $[0.5, 1.0)$h drives the response, but a Gaussian kernel assigns attribution to $X_1$ before the pulse occurs. A causal exponential kernel $K_{\mathrm{causal}}(t,s) = \exp(-(t-s)/\ell)\,\mathbf{1}_{t \geq s}$ correctly restricts attribution to $t \geq 0.5$h. In general, kernel structure should mirror the data-generating process: symmetric for bidirectional dependence, causal for events, interventions, or autoregressive dynamics.

\textbf{Principle 3: Match kernel to feature heterogeneity.} Using a single kernel imposes the same temporal structure across features, which can misattribute effects when feature behaviors differ. A pharmacokinetic response curve for some medication over 3 days demonstrates both the value and the risk of the periodic kernel (Fig.~\ref{fig:condensed_kernel_guidance}, col.~3): it correctly captures the daily recurrence of $X_1$ (8am dosing) but spuriously induces periodicity for $X_2$, a one-off event. Kernel choice should reflect feature-specific temporal behavior; in mixed settings, selecting kernels per-feature may be appropriate. This matters in practice, since recurring covariates are routinely mixed with event-driven ones.

\textbf{When kernel choice matters less.} Time-aggregated attributions yield a single importance score per feature (Fig.~\ref{fig:condensed_kernel_guidance}, row~3). For prediction-based input behavior, feature rankings are often stable across kernels when per-feature attribution trajectories maintain a consistent ordering over time (Thm.~\ref{thm:kernel_effect}), as observed in all three examples. Kernels then mainly affect the temporal \emph{shape} of attribution and can be chosen on interpretive grounds. This stability does not generally hold for sensitivity- or risk-based input behavior, where kernel choice can reorder features (App.~\ref{app:ground_truth}, Fig.~\ref{fig:ranking_games}).

\begin{figure}[ht]
    \centering
    \includegraphics[width=\textwidth]{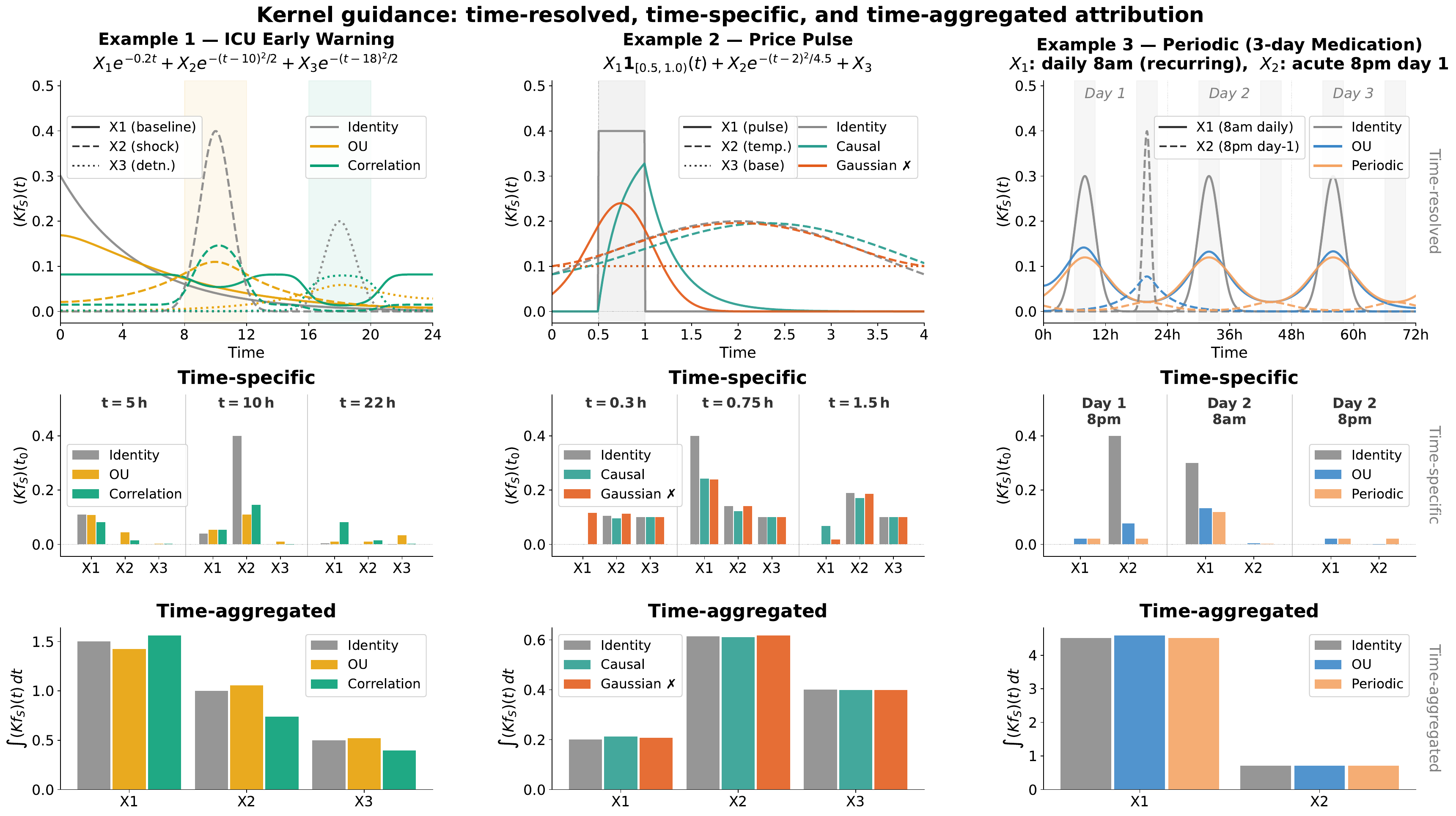}
    \caption{Kernel guidance for three synthetic examples with distinct temporal structures: an ICU early warning model (\emph{col.~1}), a market price pulse (\emph{col.~2}), and a 3-day medication response (\emph{col.~3}). Each example shows \emph{pure local effects} at three explanation levels: time-resolved (\emph{row~1}), time-specific at selected time points (\emph{row~2}), and time-aggregated (\emph{row~3}) for marginal masking $\mathcal{M}$, prediction-based input behavior $\mathcal{B}_{\text{inp}}$, and pure interaction operator $\mathcal{I}$ of a randomly chosen observation.}
    \label{fig:condensed_kernel_guidance}
\end{figure}

\subsection{Real-world Examples}\label{sec:real-world}

\textbf{Intraday SPY volatility prediction.} 
We model intraday volatility for the SPDR S\&P 500 ETF (SPY), using five-minute absolute log-returns (78 time steps) as the target (a standard proxy for intraday volatility) and six pre-market features, including prior-day VIX \footnote{CBOE Volatility Index (VIX), the so-called ``fear index''.} (\texttt{vix\_prev})  and a macro announcement indicator (\texttt{ann\_indicator}). A multivariate random forest is trained on 560 trading days (Jan 2022–Apr 2024)~\citep{polygon2024}. We subtract the average diurnal pattern to focus on deviations from the typical U-shape. In Fig.~\ref{fig:SPY_main} we study July 13, 2022, a high-volatility day during the Fed’s rate-hiking cycle, when the Beige Book --- a qualitative indicator of the US economic situation -- was released at 14:00 ET; full results are in App.~\ref{app:intraday_spy_volatility}. For a trader or risk manager, the key question is not just \emph{which} features matter, but \emph{when} and \emph{how} they drive volatility over the day.

Time-aggregated pure local effects (Fig.~\ref{fig:SPY_main}, col.~1) identify \texttt{vix\_prev} as the dominant driver. We additionally analyze \texttt{ann\_indicator}, whose effect is of direct interest given the scheduled Beige Book release on this day. The temporal structure of their effects depends critically on the kernel. Under the identity kernel (row~1), pure effects (col.~2) are noisy and difficult to interpret. Using temporally structured kernels (row~2) reveals economically meaningful patterns: For \texttt{vix\_prev}, an OU kernel yields a smooth U-shape, indicating elevated volatility at the open and closing times. In contrast, a causal kernel localizes the effect of \texttt{ann\_indicator} sharply at 14:00 without pre-event leakage, whereas a temporally smoothing kernel such as OU would induce spurious attribution before the announcement. The identity kernel also recovers the spike but does not distribute influence post-event, discarding the economically meaningful post-announcement decay. Partial effects (col.~3) show that \texttt{ann\_indicator}’s time-aggregated contribution increases by $1.61\times$ relative to its pure effect, indicating strong interaction with other features. The interaction term with \texttt{vix\_prev} (col.~4) reveals a clear regime shift: before 14:00, the two features are mildly redundant, both reflecting elevated expected volatility; after the announcement, they become strongly synergistic, with the release amplifying the impact of prior market fear. A scalar interaction measure would average over this shift and miss the change in regime. Overall, the framework reveals when volatility drivers act and how their role shifts intraday, matching how traders interpret market events.

\begin{figure}[ht]
    \centering
    \includegraphics[width=\textwidth]{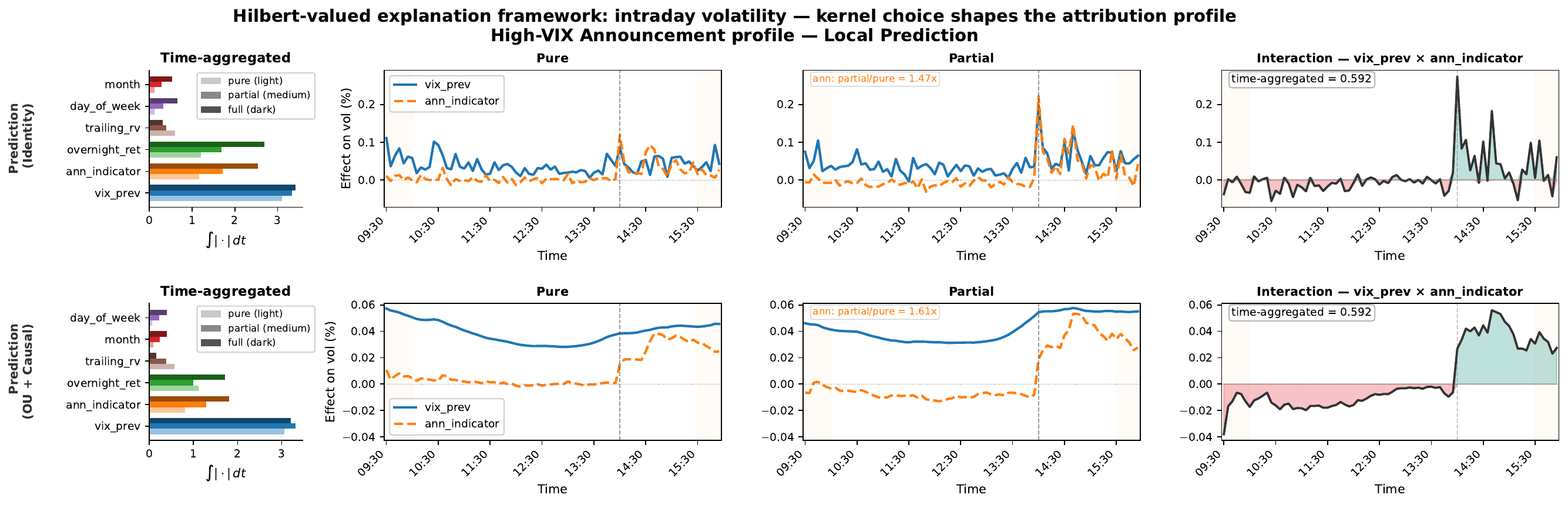}
    \caption{Local prediction effects for July 13, 2022 (\texttt{vix\_prev}~$= 27.3$, \texttt{ann\_indicator}~$= 1$), showing \texttt{vix\_prev} and \texttt{ann\_indicator}. \emph{Col.~1}: Time-aggregated pure, partial, and full importances across all features under the identity kernel (\emph{row~1}) and OU/causal kernel (\emph{row~2}). \emph{Col.~2}: Pure effects; \emph{Col.~3}: Partial effects; \emph{Col.~4}: Pure interaction effect between \texttt{vix\_prev} and \texttt{ann\_indicator}. The dashed vertical line marks 14:00 ET, the Beige Book release time. Note that pure prediction-based input behavior recovers the PD effect \citep{friedman2001greedy} and partial prediction-based input behavior recovers Shapley values \citep{shapley1951, lundberg2017shap} (see App. \ref{app:operators} for more details).}
    \label{fig:SPY_main}
\end{figure}

\textbf{Electricity demand comparison.}
Grid operators must manage forecast uncertainty, as high demand variance requires costly reserves. We ask which features drive this uncertainty and whether it is localized in time or spans the full day. This can be studied using the full sensitivity effect (time-dependent total Sobol index, see Thm.~\ref{thm:sobol_indices}). The kernel determines how prediction variance is attributed over time: the identity kernel treats each time step independently yielding pointwise variances, while the correlation kernel captures temporal dependence of covariances. We evaluate on the UCI Individual Household Electric Power Consumption dataset (IHEPC, $T=24$ hourly periods, ${\approx}1400$ days, one \emph{individual} household)~\citep{dua2019uci} and the GB National Electricity System Operator historic demand dataset (NESO, $T=48$ half-hourly periods, ${\approx}1800$ days, \emph{national} demand)~\citep{neso2024demand} . We follow the same modeling setup as in the SPY application; full details in App.~\ref{app:energy_comparison}).

The datasets differ markedly in temporal dependence (Fig.~\ref{fig:energy_main}, col.~1, 4). IHEPC shows strong correlation between nearby time points decaying with temporal distance, while NESO demand is strongly correlated across the entire day. The network plots (col.~2, 5) summarize full sensitivity effects, with node size encoding total Sobol effects and edge width full pairwise interaction strength. IHEPC is most strongly driven by \texttt{lag\_daily\_mean} (\texttt{LDM}), with diffuse interactions reflecting the idiosyncratic nature of single-household demand. NESO variance is dominated by \texttt{month} (\texttt{Mon}) and \texttt{season} (\texttt{Sea}), with stronger interactions with each other, \texttt{lag\_evening} (\texttt{LEv}) and \texttt{lag\_morning} (\texttt{LMo}). The two kernels then tell complementary stories. For IHEPC, identity kernel full sensitivity effects reveal sharp peaks in the AM and PM high-demand windows (red and blue shaded areas), whereas the correlation kernel smooths these into a shallower, broader elevation, since nearby hours are tightly coupled and attribution at any peak hour borrows strength from its neighbors (col.~3). For NESO, intraday variation under the identity kernel collapses to near-flat trajectories under the correlation kernel, consistent with a regime shift affecting the entire day (col.~6). In both cases, hourly reserve adjustments mis-target the structure of forecast variance: IHEPC calls for peak-window provisioning, NESO for daily capacity planning. Overall, the correlation kernel matches effect resolution to the dataset's dependence scale: local smoothing for IHEPC, full-day coherence for NESO.

\begin{figure}[ht]
    \centering
    \includegraphics[width=\textwidth]{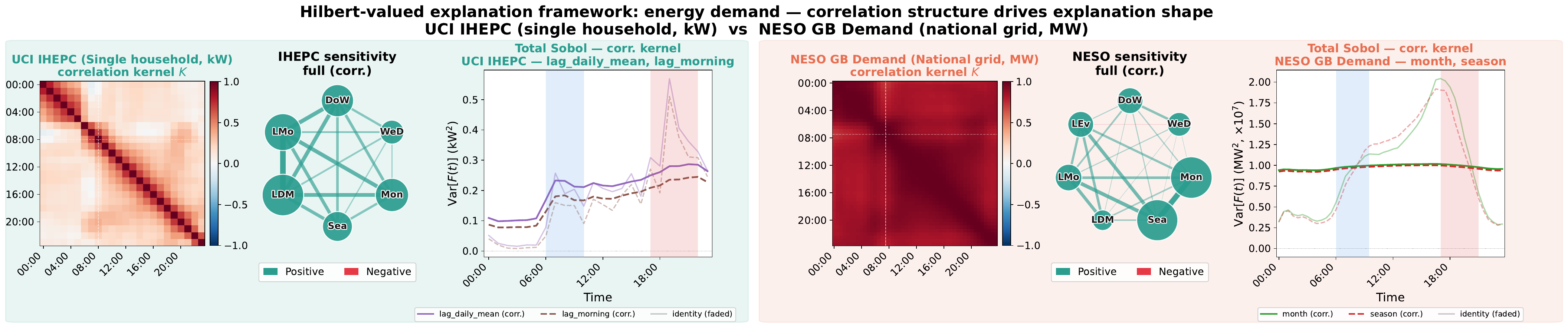}
    \caption{For each dataset (teal: IHEPC single household; orange: NESO national grid): empirical correlation heatmap (\emph{col.~1}); full sensitivity network under the correlation kernel (\emph{col.~2}); full sensitivity-based effect (total Sobol) for the two dominant features under the identity kernel (faded) and correlation kernel (solid/dashed) (\emph{col.~3}). Blue and red shading marks AM and PM peak windows.}
    \label{fig:energy_main}
\end{figure}

\section{Discussion and Limitations}\label{sec:limitations}

We introduced a Hilbert-valued explanation framework that generalizes feature-based explanations to time-dependent outputs and unifies existing methods under a common operator view. Our results show that this extension matters in practice: 
the kernel choice directly affects the explanations obtained, surfacing temporal structure that pointwise approaches cannot capture. Rather than treating this choice as a nuisance, we view it as a modeling decision that encodes the notion of \emph{contribution}—whether instantaneous, persistent, phase-dependent, or causal. Consequently, kernel selection should be guided by the scientific question and domain knowledge (see Tab.~\ref{tab:kernel_guidance}).
The same holds for hyperparameters: for example, the OU length-scale in the SPY experiment was chosen to reflect the empirical half-life of volatility shocks, but different applications may require different temporal scales.

A key limitation is computational cost, as Shapley-based interaction operators remain expensive \citep{lundberg2017shap,covert2021explaining}, although standard approximation methods apply \citep{strumbelj2014explaining} and fast implementations can be extended \citep{jethani2022fastshap,fumagalli2023shapiq}. In contrast, pure and full effects can be computed more efficiently. Beyond computation, an important limitation is the reliance on user-specified kernels, which introduces an additional modeling choice that may influence results if misspecified. While this flexibility is a strength, it also requires principled kernel choices aligned with the interpretive goal.

The framework extends naturally to general Hilbert-valued outputs, including spatial fields or images, or other structured outputs. A promising direction is to combine it with (functional) principal component analysis, to obtain low-dimensional and more interpretable representations of output variability before attribution. Another avenue is the use of operator-valued kernels \citep{kadri2016operator}, allowing output dependencies to vary with the input. Finally, the framework can accommodate time-dependent inputs through suitable masking choices, which we leave for future work.

\begin{ack}
Sophie Hanna Langbein and Niklas Koenen have been funded by the German Research Foundation (DFG) as part of the Research Unit ``Lifespan AI: From Longitudinal Data to Lifespan Inference in Health" (DFG FOR 5347), Grant 459360854. Julia Herbinger and Marvin N. Wright gratefully acknowledge funding by the German Research Foundation (DFG), Emmy Noether Grant 437611051.
\end{ack}

\newpage

\bibliographystyle{plainnat}
\bibliography{bib}
%References follow the acknowledgments in the camera-ready paper. Use unnumbered first-level heading for
%the references. Any choice of citation style is acceptable as long as you are
%consistent. It is permissible to reduce the font size to \verb+small+ (9 point)
%when listing the references.
%Note that the Reference section does not count towards the page limit.

%%%%%%%%%%%%%%%%%%%%%%%%%%%%%%%%%%%%%%%%%%%%%%%%%%%%%%%%%%%%
\newpage

\appendix

%\section{Technical Appendices and Supplementary Material}
%Technical appendices with additional results, figures, graphs and proofs may be submitted with the paper submission before the full submission deadline (see above), or as a separate PDF in the ZIP file below before the supplementary material deadline. There is no page limit for the technical appendices.

\section{Proofs}\label{app:proofs}

Here, we provide the proofs for the theoretical statements in Sec.\ref{sec:method}.

\subsection{Proof of Thm.~\ref{thm:kernel_effect}}\label{app:proof_linearity}

\begin{proof}
By definition of the time-specific output behavior operator for prediction-based input behavior $\nu_\mathrm{inp}(S)(t) = F^\mathcal{H}_S(\mathbf{x})(t)$,
\[
(\mathcal B_{\mathrm{out}}^{\mathrm{spec}} \nu_{\mathrm{inp}})(S,t)
=
\int_{\mathcal T} K(t,s)\,\nu_{\mathrm{inp}}(S)(s)\,ds.
\]
Assuming the integrals are well-defined, Fubini's theorem yields
\begin{align*}
 (\mathcal B_{\mathrm{out}}^{\mathrm{aggr}} \nu_{\mathrm{inp}})(S)
&=
\int_{\mathcal T} (\mathcal B_{\mathrm{out}}^{\mathrm{spec}} \nu_{\mathrm{inp}})(S,t)\,dt\\
&=
\int_{\mathcal T}\int_{\mathcal T} K(t,s)\,\nu_{\mathrm{inp}}(S)(s)\,ds\,dt\\
&=
\int_{\mathcal T}\left(\int_{\mathcal T} K(t,s)\,dt\right)\nu_{\mathrm{inp}}(S)(s)\,ds.   
\end{align*}

Defining
\[
w_K(s) := \int_{\mathcal T} K(t,s)\,dt,
\]
we obtain the stated representation.

For the ordering, consider individual pure feature effects and let $i,j \in P:=\{1,\dots,p\}$. If $\mathcal B_{\mathrm{inp}}$ is \textbf{linear} in $F^\mathcal H$ and
\[
\nu_{\mathrm{inp}}(\{i\})(s)\ge \nu_{\mathrm{inp}}(\{j\})(s)
\qquad \text{for all } s\in\mathcal T,
\]
then, since $K(t,s)\ge 0$, it follows that
\[
w_K(s)=\int_{\mathcal T} K(t,s)\,dt \ge 0
\qquad \text{for all } s\in\mathcal T.
\]
Hence,
\[
w_K(s)\big(\nu_{\mathrm{inp}}(\{i\})(s) - \nu_{\mathrm{inp}}(\{j\})(s)\big)\ge 0
\qquad \text{for all } s\in\mathcal T,
\]
and therefore
\[
\int_{\mathcal T} w_K(s)\big(\nu_{\mathrm{inp}}(\{i\})(s) - \nu_{\mathrm{inp}}(\{j\})(s)\big)\,ds \ge 0.
\]
Using the representation above, this yields
\[
(\mathcal B_{\mathrm{out}}^{\mathrm{aggr}} \nu_{\mathrm{inp}})(\{i\})
\ge
(\mathcal B_{\mathrm{out}}^{\mathrm{aggr}} \nu_{\mathrm{inp}})(\{j\}),
\]
which proves the result for individual pure feature effects.

\textbf{Extension to partial and full effects.}
For partial effects (Shapley values), the attribution of a feature $i$ is given by
\[
\phi_i^{\mathrm{Shap}}
=
\sum_{S \subseteq P \setminus \{i\}} \alpha_S
\big[\nu(S \cup \{i\}) - \nu(S)\big],
\]
where $\alpha_S > 0$ are the Shapley weights.

To compare two features $i$ and $j$, consider the difference
\[
\phi_i^{\mathrm{Shap}} - \phi_j^{\mathrm{Shap}}.
\]
Substituting the definition and collecting terms yields
\[
\phi_i^{\mathrm{Shap}} - \phi_j^{\mathrm{Shap}}
=
\sum_{S \subseteq P \setminus \{i\}} \alpha_S
\big[\nu(S \cup \{i\}) - \nu(S)\big]
-
\sum_{S \subseteq P \setminus \{j\}} \alpha_S
\big[\nu(S \cup \{j\}) - \nu(S)\big].
\]
By reindexing the sums and grouping terms over coalitions 
$A \subseteq P \setminus \{i,j\}$, this expression can be written as a sum of terms of the form
\[
\nu(A \cup \{i\}) - \nu(A \cup \{j\}),
\]
each with a positive weight.

Hence, if for all $A \subseteq P \setminus \{i,j\}$ and all $s \in \mathcal T$,
\[
\nu_{\mathrm{inp}}(A \cup \{i\})(s)
\ge
\nu_{\mathrm{inp}}(A \cup \{j\})(s),
\]
then, by the same argument as in the main proof,
\[
\nu(A \cup \{i\}) - \nu(A \cup \{j\})
=
\int_{\mathcal T} w_K(s)
\big[
\nu_{\mathrm{inp}}(A \cup \{i\})(s)
-
\nu_{\mathrm{inp}}(A \cup \{j\})(s)
\big] ds
\ge 0,
\]
and therefore $\phi_i^{\mathrm{Shap}} \ge \phi_j^{\mathrm{Shap}}$.

For full effects,
\[
\phi_i^{\mathrm{full}}
=
\nu(P) - \nu(P \setminus \{i\}),
\]
so that
\[
\phi_i^{\mathrm{full}} - \phi_j^{\mathrm{full}}
=
\nu(P \setminus \{j\}) - \nu(P \setminus \{i\}).
\]
This corresponds to the same condition with
$A = P \setminus \{i,j\}$, since
\[
P \setminus \{j\} = A \cup \{i\},
\quad
P \setminus \{i\} = A \cup \{j\}.
\]
Hence, if
\[
\nu_{\mathrm{inp}}(A \cup \{i\})(s)
\ge
\nu_{\mathrm{inp}}(A \cup \{j\})(s)
\quad \forall s \in \mathcal T,
\]
then $\phi_i^{\mathrm{full}} \ge \phi_j^{\mathrm{full}}$.

\end{proof}

\subsection{Proof of Proposition~\ref{thm:moebius}}\label{app:proof_moebius}

\begin{proof}
By construction,
\[
\nu:2^{\{1,\dots,p\}}\to\mathbb R,
\qquad
S\mapsto (\mathcal B_{\mathrm{out}}(\mathcal B_{\mathrm{inp}}(\mathcal M F^\mathcal H)))(S),
\]
is a well-defined set function on the power set $2^{\{1,\dots,p\}}$.

Let $m_\nu$ denote its Möbius transform, defined by
\[
m_\nu(S):=\sum_{L\subseteq S}(-1)^{|S|-|L|}\nu(L),
\]
which captures the contributions of individual subsets via inclusion–exclusion.

By Möbius inversion \citep{grabisch1997mobius}, $\nu$ admits the representation
\[
\nu(T)=\sum_{S\subseteq T} m_\nu(S)
\qquad\text{for all }T\subseteq\{1,\dots,p\}.
\]
Applying this to $T=\{1,\dots,p\}$ and substituting the definition of $m_\nu$ yields
\[
\nu(\{1,\dots,p\})
=
\sum_{S\subseteq\{1,\dots,p\}}\sum_{L\subseteq S}(-1)^{|S|-|L|}\nu(L),
\]
which is the claimed decomposition.

Since the Möbius transform is unique, this representation defines a unique additive decomposition of $\nu$ into contributions associated with feature subsets. As the construction of $\nu$ is independent of the specific choices of $\mathcal M$, $\mathcal B_{\mathrm{inp}}$, and $\mathcal B_{\mathrm{out}}$, the result holds for all corresponding time-specific, time-resolved, and time-aggregated explanations.
\end{proof}

\subsection{Proof of Theorem \ref{thm:sobol_indices}}\label{app:proof_sobol}
\begin{proof}
By definition of the sensitivity input behavior,
\[
\nu_{\mathrm{inp}}(S)(t,s)
=
\mathrm{Cov}\!\big(F_S^\mathcal H(\mathbf X)(t),
F_S^\mathcal H(\mathbf X)(s)\big).
\]
For the constant kernel \(K(t,s)=1\), the output behavior in Eq.~\eqref{eq:output_behavior} reduces to integration over the second output argument:
\[
(\mathcal B_{\mathrm{out}}\nu_{\mathrm{inp}}(S))(t)
=
\int_{\mathcal T}\nu_{\mathrm{inp}}(S)(t,s)\,ds.
\]
Thus,
\[
\nu(S)(t)
=
\int_{\mathcal T}
\mathrm{Cov}\!\big(F_S^\mathcal H(\mathbf X)(t),
F_S^\mathcal H(\mathbf X)(s)\big)\,ds,
\]
and integrating once more over \(t\) yields
\[
\nu(S)
=
\int_{\mathcal T}\nu(S)(t)\,dt
=
\int_{\mathcal T}\int_{\mathcal T}
\nu_{\mathrm{inp}}(S)(t,s)\,dt\,ds.
\]

Under marginal masking,
\[
F_S^\mathcal H(\mathbf X)
=
\mathbb E_{\mathbf{X}_{-S}\sim P_{\mathbf{X}_{-S}}}\!\left[F^\mathcal H(\mathbf X)\mid \mathbf X_S = \mathbf{x}_S\right],
\]
so $\nu(S)(t)$ and $\nu(S)$ are exactly the covariance-based Sobol value functions for time-specific and time-aggregated trajectory outputs.

Under the feature independence assumption, the Hilbert-valued functional decomposition yields orthogonal pure effects $f_L^\mathcal H$,
and therefore
\[
\nu(S)(t)
=
\sum_{L\subseteq S}
\int_{\mathcal T}
\mathrm{Cov}\!\big(f_{L,t}^\mathcal H(\mathbf X),
f_{L,s}^\mathcal H(\mathbf X)\big)\,ds.
\]
Thus, the pure interaction operator, which aggregates contributions over $L\subseteq S$, recovers the closed Sobol contribution. Analogously, the full interaction operator aggregates all contributions involving at least one feature in $S$, i.e. all $L$ with $L\cap S\neq\emptyset$, and therefore recovers the total Sobol contribution.

Normalizing these quantities by the corresponding total covariance quantity,
\[
\nu(\{1,\dots,p\})(t)
=
\int_{\mathcal T}
\mathrm{Cov}\!\big(F_t^\mathcal H(\mathbf X),
F_s^\mathcal H(\mathbf X)\big)\,ds,
\]
gives the closed and total time-specific Sobol indices. Integrating the same covariance contributions over \(t\in\mathcal T\) yields the corresponding time-aggregated closed and total Sobol indices. Hence, the proposed framework recovers these Sobol indices as a special case.
\end{proof}

\section{Operator Choices in the Hilbert-Valued Explanation Framework}
\label{app:operators}

We provide additional details on the operator choices underlying the Hilbert-valued explanation framework. Building on \cite{fumagalli2025unifying,herbinger2026granite}, we extend masking, behavior, and interaction operators to Hilbert-valued outputs by decomposing the behavior operator into input and output components, $\mathcal B_{\mathrm{inp}}$ and $\mathcal B_{\mathrm{out}}$.

\begin{table}[h]
\centering
\caption{Operator choices in the Hilbert-valued explanation framework.}
\begin{tabular}{lll}
\toprule
\textbf{Concept} & \textbf{Operator} & \textbf{Definition} \\
\midrule

\multicolumn{3}{c}{\textbf{Masking Operators} ($\mathcal{M}^{(\cdot)}$) } \\\midrule
baseline & $\mathcal{M}^b$ & $F_S^\mathcal H(\mathbf{x}) = F^\mathcal H(\mathbf{x}_S, b_{-S})$ \\
marginal & $\mathcal{M}^m$ & $F_S^\mathcal H(\mathbf{x}) = \mathbb{E}_{\mathbf{X}_{-S}}[F^\mathcal H(\mathbf{x}_S, \mathbf{X}_{-S})]$ \\
conditional & $\mathcal{M}^c$ & $F_S^\mathcal H(\mathbf{x}) = \mathbb{E}_{\mathbf{X}_{-S}\mid \mathbf{X}_S=\mathbf{x}_S}[F^\mathcal H(\mathbf{x}_S, \mathbf{X}_{-S})]$ \\

\midrule
\multicolumn{3}{c}{\textbf{Input Behavior Operators} ($\mathcal B_{\mathrm{inp}}^{(\cdot)}$)} \\\midrule
prediction & $\mathcal B_{\mathrm{inp}}^{\mathrm{pred}}$ 
& $\nu_\mathrm{inp}^\mathrm{pred}(S)(t) = F_S^\mathcal H(\mathbf{x})(t)$ \\
sensitivity & $\mathcal B_{\mathrm{inp}}^{\mathrm{sens}}$ 
& $\nu_\mathrm{inp}^\mathrm{sens}(S)(t, s) = \mathrm{Cov}(F_S^\mathcal H(\mathbf{X})(t),F_S^\mathcal H(\mathbf{X})(s))$ \\
risk & $\mathcal B_{\mathrm{inp}}^{\mathrm{risk}}$ 
& $\nu_\mathrm{inp}^\mathrm{risk}(S)(t) =  \mathbb{E}[\ell(\mathbf{Y}(t),F_S^\mathcal H(\mathbf{X})(t))]$ \\

\midrule
\multicolumn{3}{c}{\textbf{Output Behavior Operators} ($\mathcal B_{\mathrm{out}}^{(\cdot)}$)} \\\midrule
time-specific & $\mathcal B_{\mathrm{out}}^{\mathrm{spec}}$ 
& $\nu^{\mathrm{spec}}(S,t) = \int K(t,s)\, g_S(s)\,ds$ \\

time-resolved & $\mathcal B_{\mathrm{out}}^{\mathrm{res}}$ 
& $\nu^{\mathrm{res}}(S) = \{\nu^{\mathrm{spec}}(S,t)\}_{t\in\mathcal T}$ \\

time-aggregated & $\mathcal B_{\mathrm{out}}^{\mathrm{aggr}}$ 
& $\nu^{\mathrm{aggr}}(S) = \int \int K(t,s)\, g_S(s)\,ds\,dt$ \\

\midrule
\multicolumn{3}{c}{\textbf{Interaction Operators} ($\mathcal{I}^{(\cdot)}$)} \\\midrule
pure & $\mathcal I^{\mathrm{pure}}$ & $\phi^{\mathrm{pure}}(j) = \nu(j) - \nu(\emptyset)$ \\
partial & $\mathcal I^{\mathrm{partial}}$ &  $\phi^{\mathrm{partial}}(j) = \sum_{S \subseteq -j} \frac{1}{p \binom{p-1}{|S|}}[\nu(S \cup j)-\nu(S)]$\\
full & $\mathcal I^{\mathrm{full}}$ & $\phi^{\mathrm{full}}(j) = \nu(D) - \nu(-j)$ \\

\bottomrule
\end{tabular}
\end{table}

\textbf{Masking operators.}
Masking operators $\mathcal{M}$ define how features outside a subset $S$ are removed by different masking strategies. Baseline masking fixes features to a reference value $b_{-S}$, marginal masking integrates them out under the marginal distribution, and conditional masking preserves all feature dependencies. Under independence, conditional and marginal masking coincides with the functional decomposition used in sensitivity analysis (functional ANOVA) \citep{sobol2001, hooker2004discovering}.

\textbf{Input behavior operators.}
The input behavior operator $\mathcal B_{\mathrm{inp}}$ defines the quantity of interest prior to aggregation across the output domain. Prediction-based behavior captures local effects, sensitivity-based behavior captures variability via covariance, and risk-based behavior captures expected loss. Depending on the choice, $\nu_{\mathrm{inp}}(S)$ is defined either pointwise over $\mathcal T$ or over pairs $(t,s)$.

\textbf{Output behavior operators.}
The output behavior operator $\mathcal B_{\mathrm{out}}$ determines how Hilbert-valued quantities are aggregated across the output domain. Time-specific operators yield explanations at a fixed output location, time-resolved operators evaluate time-specific explanations for all $t \in \mathcal{T}$, and time-aggregated operators summarize contributions over the entire domain $\mathcal{T}$ into a scalar. All variants depend on a kernel $K$, which controls how dependencies across the output domain are incorporated.

\textbf{Interaction operators.}
Interaction operators map set functions to feature-based explanations. Pure effects ignore interactions, partial effects (Shapley values) fairly distribute interaction effects across involved features, and full effects assign the entire interaction effect to each involved feature. Under marginal masking and sensitivity behavior, pure and full effects recover closed and total Sobol indices, respectively.

\textbf{Method mapping.}
Many existing explanation methods arise as specific choices of masking, input behavior, and interaction operators (see Tab.~\ref{tab:method_mapping}). Prediction-based methods such as PDP and SHAP differ only in the interaction operator, while sensitivity-based methods correspond to Sobol indices with pure and full interactions yielding closed and total effects. Risk-based methods such as PFI and SAGE quantify performance degradation, differing again only in the interaction operator. The output behavior operator $\mathcal B_{\mathrm{out}}$ determines based on a kernel choice $K$ how these quantities are represented across the output domain (time-specific, time-resolved, or time-aggregated) and can be chosen independently, yielding valid explanations in all cases due to the additive decomposition established in Thm.~\ref{thm:moebius}. Detailed derivations of various feature-based explanation methods for the scalar case are given in \cite{fumagalli2025unifying}.

\begin{table}[h]
\centering
\caption{Mapping of common explanation methods mentioned in this work to operator choices in the Hilbert-valued explanation framework. Output behavior $\mathcal B_{\mathrm{out}}$ can be chosen independently (time-specific, time-resolved, or time-aggregated based on different kernel choices $K$), yielding valid explanations in all cases.}
\label{tab:method_mapping}
\begin{tabularx}{\textwidth}{l l l l X}
\toprule
\textbf{Method} & $\mathcal M$ & $\mathcal B_{\mathrm{inp}}$ & $\mathcal I$ & Description \\
\midrule

PDP & marginal & prediction & pure 
& average marginal effect\\

c-PDP & conditional & prediction & pure 
& average conditional marginal effect\\

SHAP (int.) & marginal & prediction & partial 
& (interventional) Shapley values \\

SHAP (obs.) & conditional & prediction & partial 
& (observational) Shapley values \\

Closed Sobol & marginal & sensitivity & pure 
& variance explained by features in $S$ \\

Total Sobol & marginal & sensitivity & full 
& variance explained by features in $S$ including all their interactions with $-S$ \\

PFI & marginal & risk & full 
& model performance drop when removing feature $j$ and all its interactions with $-j$ (marginal)\\

c-PFI & conditional & risk & full 
& model performance drop when removing feature $j$ and all its interactions with $-j$ (conditional) \\

SAGE & marginal / conditional & risk & partial 
& global Shapley values based on risk games \\

\bottomrule
\end{tabularx}
\end{table}

\section{Mapping Existing Methods to the Hilbert-valued Explanation Framework}
\label{app:mapping_existing}

This section formalizes how existing feature-based explanation methods for time-dependent outputs can be expressed within the proposed operator framework. Throughout, we consider explanations of a Hilbert-valued prediction function $F^\mathcal H$ based on masking $\mathcal M$, input behavior $\mathcal B_{\mathrm{inp}}$, output behavior $\mathcal B_{\mathrm{out}}$, and interaction operator $\mathcal I$.

\textbf{Time-specific and time-aggregated output behavior.}
For prediction-based explanations, the input behavior is given by
\[
\nu_{\mathrm{inp}}(S)(t) = F_S^\mathcal H(\mathbf X)(t).
\]
Time-specific explanations correspond to
\[
(\mathcal B_{\mathrm{out}}^{\mathrm{spec}} \nu_{\mathrm{inp}})(S,t)
= \nu_{\mathrm{inp}}(S)(t),
\]
while time-aggregated explanations are defined as
\[
(\mathcal B_{\mathrm{out}}^{\mathrm{aggr}} \nu_{\mathrm{inp}})(S)
= \int_{\mathcal T} \nu_{\mathrm{inp}}(S)(t)\,dt.
\]

All existing methods considered here implicitly assume $K = I$, i.e., they do not model dependencies across the output domain.

If $\mathcal B_{\mathrm{inp}}$ is linear in $F^\mathcal H$ and the aggregation operator defining $\mathcal B_{\mathrm{out}}^{\mathrm{aggr}}$ is linear (e.g., integration or averaging), then aggregation commutes with masking. Formally,
\[
\int_{\mathcal T} F_S^\mathcal H(\mathbf X)(t)\,dt
=
\mathbb E\!\left[\int_{\mathcal T} F^\mathcal H(\mathbf X)(t)\,dt \;\middle|\; \mathbf X_S \right].
\]

Hence, aggregating masked predictions is equivalent to computing explanations for an aggregated output (e.g., the mean prediction). 

In practice, existing methods often implement aggregation by first defining a scalar output $\widetilde F$ (e.g., the mean forecast) and then applying standard explanation methods. Within our framework, this corresponds to treating aggregation as part of the output behavior operator.

This equivalence holds for linear aggregation functionals but not in general for nonlinear ones (e.g., maximum or quantiles), where aggregation and masking do not commute.

\textbf{TsSHAP.}
TsSHAP corresponds to prediction-based explanations $\mathcal B_{\mathrm{inp}}^\mathrm{pred}$, partial interaction operator $\mathcal I^{\mathrm{partial}}$, and time-specific output behavior
\[
\mathcal B_{\mathrm{out}} = \mathcal B_{\mathrm{out}}^{\mathrm{spec}}.
\]
Explanations are computed independently for each $t \in \mathcal T$.

\textbf{ShapTime.}
ShapTime explains aggregated forecast quantities (e.g., the mean prediction over the horizon. In practice, this is achieved by defining an aggregated scalar output prior to masking and applying standard Shapley methods. Within our framework, this corresponds to prediction-based input behavior $\mathcal B_{\mathrm{inp}}^\mathrm{pred}$ and time-aggregated output behavior combined with partial interaction operator $\mathcal I^{\mathrm{partial}}$. For linear aggregation (e.g., the mean), both formulations are equivalent.

\textbf{PAX-TS.}
PAX-TS computes feature importance via localized perturbations, measuring changes in selected output quantities (e.g., individual time points or aggregated statistics) when features are modified. Within our framework, this corresponds to prediction-based input behavior $\mathcal B_{\mathrm{inp}}^\mathrm{pred}$ combined with either time-specific or time-aggregated output behavior,
\[
\mathcal B_{\mathrm{out}} \in \{\mathcal B_{\mathrm{out}}^{\mathrm{spec}},\, \mathcal B_{\mathrm{out}}^{\mathrm{aggr}}\},
\]
and the full effects interaction operator $\mathcal I^{\mathrm{full}}$. For linear aggregation functionals, this is equivalent to aggregating time-specific explanations.

\textbf{SurvSHAP(t).}
SurvSHAP(t) provides time-specific and time-resolved explanations for survival models by computing Shapley values independently for each $t \in \mathcal T$. Within our framework, this corresponds to prediction-based input behavior $\mathcal B_{\mathrm{inp}}^\mathrm{pred}$, partial interaction operator $\mathcal I^{\mathrm{partial}}$, and time-specific output behavior
\[
\mathcal B_{\mathrm{out}} = \mathcal B_{\mathrm{out}}^{\mathrm{spec}},
\]
evaluated for all $t$, yielding a time-resolved explanation over the output domain.

\textbf{SurvSHAP-IQ.}
SurvSHAP-IQ extends SurvSHAP(t) to interaction effects. In our framework, this corresponds to using higher-order interaction operators $\mathcal I$ (beyond partial effects), applied to the same time-specific output behavior (see \cite{fumagalli2025unifying} for definitions of higher-order interaction operators).

\textbf{Remarks on masking.}
Some of the above methods, such as PAX-TS, consider time-dependent inputs and employ localized or time-dependent perturbations. While our framework currently focuses on marginal or conditional feature masking, these approaches can be interpreted as alternative masking strategies acting on structured or time-dependent inputs. Extending the framework to explicitly account for time-dependent masking operators is a natural direction for future work.

\textbf{Summary.}
All considered methods correspond to prediction-based input behavior combined with either time-specific or time-aggregated output behavior under $K=I$, i.e., they treat time points independently and do not model dependencies across the output domain. In contrast, the proposed framework generalizes these approaches by allowing non-trivial kernels $K$ for correlation-aware explanations and by incorporating alternative input behaviors (e.g., risk or sensitivity) for global importance measures.

\section{Experiments}\label{app:experiments}

This appendix contains full experimental details for all synthetic and real-data experiments in the paper. All analysis code is available at \href{https://github.com/bips-hb/Hilbert_explanation_framework}{GitHub}.

\subsection{Ground-truth Recovery}\label{app:ground_truth}

\textbf{Experimental setup}

We validate the $\mathcal{H}$-FD framework on an extended version of the synthetic ICU biomarker model (Fig.~\ref{fig:intro}), augmented with a pairwise interaction term to enable interaction recovery assessment:
\begin{equation}
    F(\mathbf{x})(t) = X_1 e^{-0.2t} + X_2 e^{-(t-10)^2/2}
    + X_3 e^{-(t-18)^2/2}
    + \alpha(x_1 - \mu)(x_2 - \mu)\,e^{-(t-5)^2/2},
\end{equation}
with $X_i \sim \mathrm{Uniform}[0,1]$, $\mu = 0.5$, $\alpha = 1.0$, evaluated at $\bm{x}^* = (0.8,\,0.9,\,0.7)$. The interaction term accounts for approximately $2.4\%$ of total variance. Gaussian observation noise with signal-to-noise ratio of $5$ is added during training.

We estimate pure prediction effects via the Möbius transform applied to four black-box model classes fitted to $n$ noisy observations: Ridge regression (deliberately misspecified, without interaction term), random forest, NGBoost, and a multi-layer perceptron (MLP). An oracle
estimator applies the Möbius transform directly to the true model, isolating estimation variance from model approximation error. Experiments are run across $n \in \{50, 100, 250, 500, 1000, 2000, 5000, 10000\}$ training samples, averaged over $30$ Monte Carlo runs.

\textbf{Evaluation metrics}

Recovery is evaluated using two complementary metrics. Let $\hat{f}_S(t)$ denote the estimated pure effect and $f_S(t)$ the analytical ground truth.

\textit{Normalized $L^2$ error.}
\begin{equation}
    \varepsilon_{L^2}(S)
    = \frac{\|\hat{f}_S - f_S\|_{L^2}}{\|f_S\|_{L^2}}
    = \frac{\left(\int (\hat{f}_S(t) - f_S(t))^2\,\mathrm{d}t\right)^{1/2}}
           {\left(\int f_S(t)^2\,\mathrm{d}t\right)^{1/2}},
\end{equation}
measuring pointwise trajectory recovery relative to the signal magnitude. An error of $1.0$ corresponds to a null (zero) predictor; errors exceeding $1.0$ indicate the estimated effect is further from the truth than predicting zero, typically occurring when the wrong sign is learned at small~$n$.

\textit{Relative aggregated error.}
\begin{equation}
    \varepsilon_{\mathrm{agg}}(S) = \frac{|\hat{\Phi}_S - \Phi_S|}{\Phi_S},
    \qquad
    \Phi_S = \int |f_S(t)|\,\mathrm{d}t,
\end{equation}
measuring recovery of the time-aggregated importance $\Phi_S$, the quantity directly used for feature ranking in practice.

\textbf{Effect recovery.} Fig.~\ref{fig:n_recovery_agg_l2} shows normalized $L^2$ error and relative aggregated error as a function of $n$ for all four feature subsets. The oracle converges rapidly for all effects, confirming that the Möbius estimation procedure is consistent given the true model. All non-linear models recover the three main effects with $L^2$ errors below $5\%$ at $n = 2{,}000$ and below $2\%$ at $n = 10{,}000$; relative aggregated errors follow a similar pattern. The interaction term $f_{\{X_1,X_2\}}$ is harder to recover at all sample sizes due to its small signal contribution, but non-linear models consistently converge toward zero error with increasing data. Ridge regression recovers the main effects at a comparable rate but plateaus at $\varepsilon_{L^2} = 1.0$ and $\varepsilon_{\mathrm{agg}} = 1.0$ for the interaction at all $n$, correctly reflecting its structural inability to represent this term. Full numerical results are provided in Tabs.~\ref{tab:l2_full} and~\ref{tab:agg_full}.

\begin{figure}[h]
    \centering
    \includegraphics[width=\textwidth]{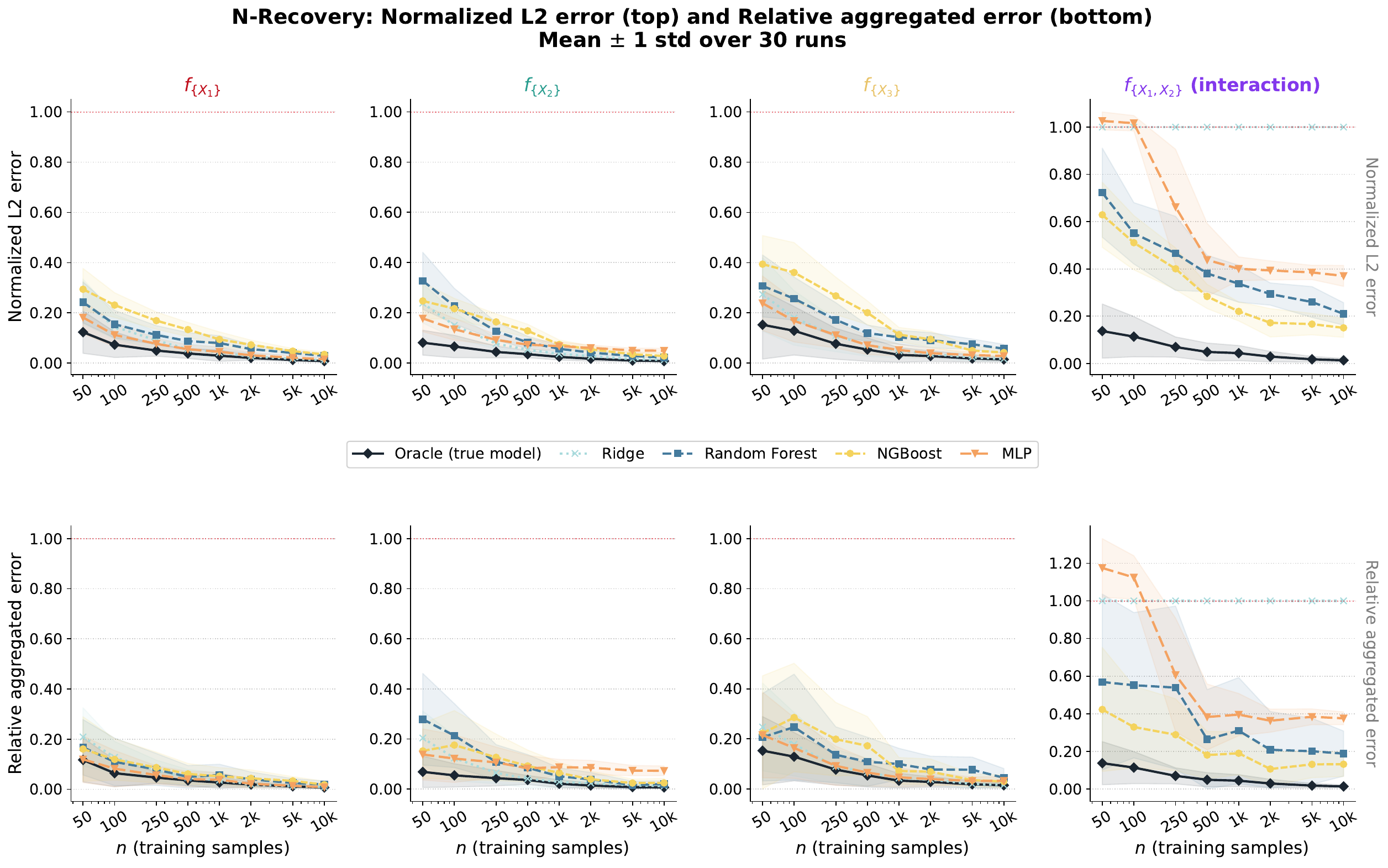}
    \caption{\textbf{N-Recovery}: Normalized $L^2$ error (\emph{row~1}) and relative aggregated error (\emph{row~2}) for the three main effects and the pairwise interaction $f_{\{X_1,X_2\}}$, as a function of training sample size $n$ (mean $\pm$ 1 standard deviation (std) over 30 runs). The oracle lower bound reflects finite background sample noise. Main effects converge reliably across all models; the interaction term is substantially harder to recover, with the MLP showing the slowest convergence.}
    \label{fig:n_recovery_agg_l2}
\end{figure}

\begin{table}[t]
\caption{Normalized $L^2$ error between estimated and oracle pure effects (mean $\pm$ 1 std over 30 independent runs). Each effect trajectory is normalized by the $L^2$ norm of the corresponding oracle effect, so a value of $1.0$ corresponds to a trivially zero prediction. Ridge regression fails to recover the interaction effect $f_{\{X_1,X_2\}}$ at all sample sizes, reflecting model misspecification under the additive linear assumption.}
\label{tab:l2_full}
\centering
\resizebox{\textwidth}{!}{%
\begin{tabular}{llllllllll}
\toprule
Model / Effect & $n=50$ & $n=100$ & $n=250$ & $n=500$ & $n=1$k & $n=2$k & $n=5$k & $n=10$k \\
\midrule
\multicolumn{9}{l}{\textit{Oracle (true model)}} \\
$f_{\{X_1\}}$ & $0.122 \pm 0.068$ & $0.062 \pm 0.030$ & $0.051 \pm 0.031$ & $0.043 \pm 0.020$ & $0.028 \pm 0.019$ & $0.025 \pm 0.007$ & $0.012 \pm 0.006$ & $0.008 \pm 0.005$ \\
$f_{\{X_2\}}$ & $0.093 \pm 0.045$ & $0.074 \pm 0.040$ & $0.046 \pm 0.022$ & $0.034 \pm 0.013$ & $0.022 \pm 0.010$ & $0.016 \pm 0.008$ & $0.009 \pm 0.005$ & $0.007 \pm 0.004$ \\
$f_{\{X_3\}}$ & $0.116 \pm 0.112$ & $0.124 \pm 0.066$ & $0.059 \pm 0.048$ & $0.049 \pm 0.038$ & $0.034 \pm 0.036$ & $0.024 \pm 0.025$ & $0.022 \pm 0.020$ & $0.014 \pm 0.011$ \\
$f_{\{X_1,X_2\}}$ & $0.138 \pm 0.101$ & $0.135 \pm 0.071$ & $0.075 \pm 0.043$ & $0.068 \pm 0.041$ & $0.045 \pm 0.029$ & $0.031 \pm 0.024$ & $0.013 \pm 0.011$ & $0.013 \pm 0.008$ \\
\midrule
\multicolumn{9}{l}{\textit{Ridge}} \\
$f_{\{X_1\}}$ & $0.220 \pm 0.092$ & $0.131 \pm 0.060$ & $0.088 \pm 0.035$ & $0.059 \pm 0.023$ & $0.039 \pm 0.021$ & $0.030 \pm 0.012$ & $0.018 \pm 0.006$ & $0.013 \pm 0.004$ \\
$f_{\{X_2\}}$ & $0.278 \pm 0.067$ & $0.169 \pm 0.054$ & $0.083 \pm 0.020$ & $0.056 \pm 0.022$ & $0.040 \pm 0.014$ & $0.026 \pm 0.009$ & $0.016 \pm 0.004$ & $0.013 \pm 0.005$ \\
$f_{\{X_3\}}$ & $0.281 \pm 0.126$ & $0.202 \pm 0.085$ & $0.095 \pm 0.047$ & $0.067 \pm 0.039$ & $0.053 \pm 0.036$ & $0.034 \pm 0.023$ & $0.028 \pm 0.020$ & $0.017 \pm 0.011$ \\
$f_{\{X_1,X_2\}}$ & $1.000 \pm 0.000$ & $1.000 \pm 0.000$ & $1.000 \pm 0.000$ & $1.000 \pm 0.000$ & $1.000 \pm 0.000$ & $1.000 \pm 0.000$ & $1.000 \pm 0.000$ & $1.000 \pm 0.000$ \\
\midrule
\multicolumn{9}{l}{\textit{Random Forest}} \\
$f_{\{X_1\}}$ & $0.248 \pm 0.067$ & $0.137 \pm 0.027$ & $0.105 \pm 0.026$ & $0.089 \pm 0.038$ & $0.070 \pm 0.020$ & $0.051 \pm 0.018$ & $0.042 \pm 0.013$ & $0.032 \pm 0.009$ \\
$f_{\{X_2\}}$ & $0.387 \pm 0.086$ & $0.266 \pm 0.071$ & $0.135 \pm 0.039$ & $0.099 \pm 0.029$ & $0.063 \pm 0.014$ & $0.041 \pm 0.010$ & $0.027 \pm 0.003$ & $0.022 \pm 0.003$ \\
$f_{\{X_3\}}$ & $0.317 \pm 0.121$ & $0.237 \pm 0.077$ & $0.142 \pm 0.027$ & $0.117 \pm 0.023$ & $0.105 \pm 0.033$ & $0.081 \pm 0.027$ & $0.083 \pm 0.024$ & $0.062 \pm 0.018$ \\
$f_{\{X_1,X_2\}}$ & $0.816 \pm 0.252$ & $0.525 \pm 0.095$ & $0.405 \pm 0.134$ & $0.390 \pm 0.058$ & $0.358 \pm 0.083$ & $0.270 \pm 0.047$ & $0.234 \pm 0.054$ & $0.213 \pm 0.057$ \\
\midrule
\multicolumn{9}{l}{\textit{NGBoost}} \\
$f_{\{X_1\}}$ & $0.309 \pm 0.116$ & $0.237 \pm 0.064$ & $0.170 \pm 0.027$ & $0.151 \pm 0.031$ & $0.090 \pm 0.019$ & $0.067 \pm 0.012$ & $0.048 \pm 0.011$ & $0.036 \pm 0.012$ \\
$f_{\{X_2\}}$ & $0.246 \pm 0.036$ & $0.214 \pm 0.041$ & $0.167 \pm 0.032$ & $0.133 \pm 0.023$ & $0.078 \pm 0.024$ & $0.054 \pm 0.015$ & $0.034 \pm 0.010$ & $0.031 \pm 0.011$ \\
$f_{\{X_3\}}$ & $0.373 \pm 0.118$ & $0.368 \pm 0.062$ & $0.258 \pm 0.052$ & $0.199 \pm 0.056$ & $0.112 \pm 0.022$ & $0.093 \pm 0.029$ & $0.057 \pm 0.022$ & $0.042 \pm 0.018$ \\
$f_{\{X_1,X_2\}}$ & $0.640 \pm 0.126$ & $0.506 \pm 0.098$ & $0.409 \pm 0.096$ & $0.274 \pm 0.061$ & $0.220 \pm 0.033$ & $0.184 \pm 0.044$ & $0.167 \pm 0.047$ & $0.158 \pm 0.038$ \\
\midrule
\multicolumn{9}{l}{\textit{MLP}} \\
$f_{\{X_1\}}$ & $0.186 \pm 0.047$ & $0.106 \pm 0.039$ & $0.080 \pm 0.022$ & $0.056 \pm 0.020$ & $0.042 \pm 0.019$ & $0.034 \pm 0.013$ & $0.023 \pm 0.007$ & $0.016 \pm 0.005$ \\
$f_{\{X_2\}}$ & $0.211 \pm 0.052$ & $0.150 \pm 0.048$ & $0.094 \pm 0.021$ & $0.081 \pm 0.014$ & $0.075 \pm 0.017$ & $0.063 \pm 0.011$ & $0.050 \pm 0.010$ & $0.050 \pm 0.005$ \\
$f_{\{X_3\}}$ & $0.203 \pm 0.075$ & $0.173 \pm 0.064$ & $0.095 \pm 0.028$ & $0.062 \pm 0.022$ & $0.052 \pm 0.027$ & $0.039 \pm 0.017$ & $0.032 \pm 0.016$ & $0.025 \pm 0.010$ \\
$f_{\{X_1,X_2\}}$ & $1.036 \pm 0.049$ & $1.030 \pm 0.031$ & $0.527 \pm 0.210$ & $0.414 \pm 0.177$ & $0.388 \pm 0.061$ & $0.403 \pm 0.036$ & $0.378 \pm 0.036$ & $0.368 \pm 0.044$ \\
\bottomrule
\end{tabular}}
\end{table}

\begin{table}[t]
\caption{Relative aggregated error between estimated and oracle integrated pure effect values $\Phi_S$ (mean $\pm$ 1 std over 30 independent runs), normalized analogously to Tab.~\ref{tab:l2_full}. For most estimator--subset combinations, aggregated errors tend to be lower than the corresponding $L^2$ errors due to partial cancellation of pointwise errors upon integration. For oracle estimates of $f_{\{X_3\}}$ and $f_{\{X_1,X_2\}}$, the two metrics coincide because the synthetic model is separable in those terms: the Möbius estimate inherits the true time-shape exactly, and finite-sample error reduces to a scalar coefficient that contributes identically to both metrics.}
\label{tab:agg_full}
\centering
\resizebox{\textwidth}{!}{%
\begin{tabular}{llllllllll}
\toprule
Model / Effect & $n=50$ & $n=100$ & $n=250$ & $n=500$ & $n=1$k & $n=2$k & $n=5$k & $n=10$k \\
\midrule
\multicolumn{9}{l}{\textit{Oracle (true model)}} \\
$f_{\{X_1\}}$ & $0.117 \pm 0.072$ & $0.049 \pm 0.033$ & $0.047 \pm 0.033$ & $0.039 \pm 0.021$ & $0.025 \pm 0.019$ & $0.023 \pm 0.007$ & $0.012 \pm 0.006$ & $0.007 \pm 0.005$ \\
$f_{\{X_2\}}$ & $0.083 \pm 0.061$ & $0.059 \pm 0.038$ & $0.049 \pm 0.036$ & $0.031 \pm 0.017$ & $0.018 \pm 0.010$ & $0.013 \pm 0.010$ & $0.008 \pm 0.005$ & $0.007 \pm 0.004$ \\
$f_{\{X_3\}}$ & $0.116 \pm 0.112$ & $0.124 \pm 0.066$ & $0.059 \pm 0.048$ & $0.049 \pm 0.038$ & $0.034 \pm 0.036$ & $0.024 \pm 0.025$ & $0.022 \pm 0.020$ & $0.014 \pm 0.011$ \\
$f_{\{X_1,X_2\}}$ & $0.138 \pm 0.101$ & $0.135 \pm 0.071$ & $0.075 \pm 0.043$ & $0.068 \pm 0.041$ & $0.045 \pm 0.029$ & $0.031 \pm 0.024$ & $0.013 \pm 0.011$ & $0.013 \pm 0.008$ \\
\midrule
\multicolumn{9}{l}{\textit{Ridge}} \\
$f_{\{X_1\}}$ & $0.193 \pm 0.118$ & $0.109 \pm 0.071$ & $0.075 \pm 0.039$ & $0.049 \pm 0.036$ & $0.029 \pm 0.027$ & $0.022 \pm 0.016$ & $0.012 \pm 0.010$ & $0.010 \pm 0.007$ \\
$f_{\{X_2\}}$ & $0.217 \pm 0.106$ & $0.128 \pm 0.073$ & $0.075 \pm 0.031$ & $0.036 \pm 0.030$ & $0.032 \pm 0.029$ & $0.026 \pm 0.012$ & $0.017 \pm 0.008$ & $0.015 \pm 0.013$ \\
$f_{\{X_3\}}$ & $0.326 \pm 0.121$ & $0.193 \pm 0.129$ & $0.066 \pm 0.064$ & $0.074 \pm 0.043$ & $0.059 \pm 0.046$ & $0.036 \pm 0.025$ & $0.030 \pm 0.025$ & $0.015 \pm 0.016$ \\
$f_{\{X_1,X_2\}}$ & $1.000 \pm 0.000$ & $1.000 \pm 0.000$ & $1.000 \pm 0.000$ & $1.000 \pm 0.000$ & $1.000 \pm 0.000$ & $1.000 \pm 0.000$ & $1.000 \pm 0.000$ & $1.000 \pm 0.000$ \\
\midrule
\multicolumn{9}{l}{\textit{Random Forest}} \\
$f_{\{X_1\}}$ & $0.174 \pm 0.100$ & $0.083 \pm 0.049$ & $0.048 \pm 0.047$ & $0.039 \pm 0.033$ & $0.038 \pm 0.025$ & $0.038 \pm 0.027$ & $0.025 \pm 0.017$ & $0.025 \pm 0.017$ \\
$f_{\{X_2\}}$ & $0.284 \pm 0.212$ & $0.240 \pm 0.138$ & $0.140 \pm 0.067$ & $0.113 \pm 0.047$ & $0.071 \pm 0.035$ & $0.043 \pm 0.016$ & $0.012 \pm 0.012$ & $0.025 \pm 0.013$ \\
$f_{\{X_3\}}$ & $0.209 \pm 0.205$ & $0.176 \pm 0.164$ & $0.111 \pm 0.063$ & $0.108 \pm 0.076$ & $0.112 \pm 0.057$ & $0.077 \pm 0.046$ & $0.083 \pm 0.050$ & $0.069 \pm 0.042$ \\
$f_{\{X_1,X_2\}}$ & $0.699 \pm 0.592$ & $0.551 \pm 0.306$ & $0.410 \pm 0.380$ & $0.286 \pm 0.202$ & $0.338 \pm 0.329$ & $0.191 \pm 0.207$ & $0.155 \pm 0.103$ & $0.207 \pm 0.141$ \\
\midrule
\multicolumn{9}{l}{\textit{NGBoost}} \\
$f_{\{X_1\}}$ & $0.180 \pm 0.192$ & $0.106 \pm 0.061$ & $0.072 \pm 0.062$ & $0.068 \pm 0.040$ & $0.054 \pm 0.035$ & $0.032 \pm 0.026$ & $0.032 \pm 0.017$ & $0.022 \pm 0.018$ \\
$f_{\{X_2\}}$ & $0.111 \pm 0.061$ & $0.186 \pm 0.152$ & $0.140 \pm 0.075$ & $0.095 \pm 0.061$ & $0.080 \pm 0.045$ & $0.038 \pm 0.027$ & $0.025 \pm 0.016$ & $0.032 \pm 0.022$ \\
$f_{\{X_3\}}$ & $0.246 \pm 0.238$ & $0.284 \pm 0.179$ & $0.224 \pm 0.115$ & $0.145 \pm 0.115$ & $0.079 \pm 0.041$ & $0.059 \pm 0.055$ & $0.044 \pm 0.030$ & $0.034 \pm 0.025$ \\
$f_{\{X_1,X_2\}}$ & $0.394 \pm 0.272$ & $0.425 \pm 0.224$ & $0.240 \pm 0.185$ & $0.204 \pm 0.105$ & $0.217 \pm 0.139$ & $0.114 \pm 0.086$ & $0.125 \pm 0.081$ & $0.131 \pm 0.068$ \\
\midrule
\multicolumn{9}{l}{\textit{MLP}} \\
$f_{\{X_1\}}$ & $0.141 \pm 0.096$ & $0.073 \pm 0.061$ & $0.057 \pm 0.038$ & $0.039 \pm 0.029$ & $0.030 \pm 0.027$ & $0.025 \pm 0.020$ & $0.016 \pm 0.010$ & $0.010 \pm 0.010$ \\
$f_{\{X_2\}}$ & $0.152 \pm 0.123$ & $0.131 \pm 0.081$ & $0.125 \pm 0.042$ & $0.091 \pm 0.051$ & $0.112 \pm 0.026$ & $0.096 \pm 0.024$ & $0.076 \pm 0.021$ & $0.072 \pm 0.017$ \\
$f_{\{X_3\}}$ & $0.137 \pm 0.125$ & $0.161 \pm 0.135$ & $0.069 \pm 0.054$ & $0.048 \pm 0.032$ & $0.053 \pm 0.044$ & $0.035 \pm 0.031$ & $0.033 \pm 0.018$ & $0.026 \pm 0.017$ \\
$f_{\{X_1,X_2\}}$ & $1.232 \pm 0.192$ & $1.162 \pm 0.132$ & $0.440 \pm 0.257$ & $0.325 \pm 0.221$ & $0.372 \pm 0.135$ & $0.361 \pm 0.048$ & $0.384 \pm 0.046$ & $0.374 \pm 0.035$ \\
\bottomrule
\end{tabular}}
\end{table}

\textbf{Time-resolved effect curves and time-aggregated effects.} The top row of Fig.~\ref{fig:effect_comparison} shows representative time-resolved pure effect trajectories at $n = 1{,}000$. All non-linear models closely track the analytical ground truth for the three main effects. For the interaction $f_{\{X_1,X_2\}}$, the peak at $t \approx 5$h is recovered in shape but with larger pointwise variance across models, consistent with the higher $L^2$ errors observed at this sample size. The bottom row of Fig.~\ref{fig:effect_comparison} shows time-aggregated effects $\Phi_S = \int |f_S(t)|\,\mathrm{d}t$ for a representative run. All models reproduce the correct ordering $\Phi_{\{X_1\}} > \Phi_{\{X_2\}} > \Phi_{\{X_3\}} > \Phi_{\{X_1,X_2\}}$ and closely match the analytical values, with the largest relative deviations for the interaction term.

\begin{figure}[ht]
    \centering
    \includegraphics[width=\textwidth]{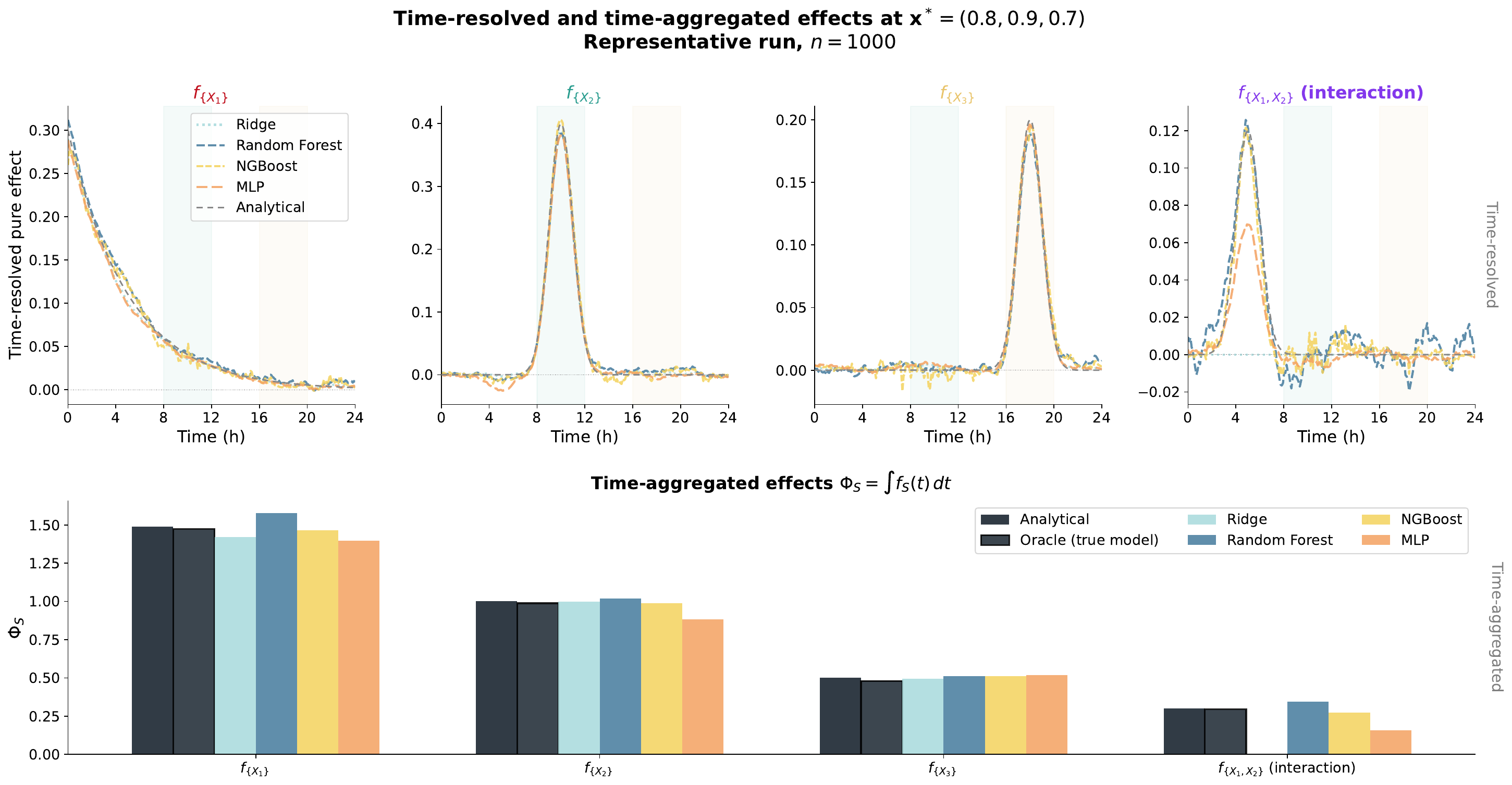}
    \caption{Time-resolved (\emph{row~1}) and time-aggregated (\emph{row~2}) $\mathcal{H}$-FD pure prediction effects at $\mathbf{x}^* = (0.8, 0.9, 0.7)$ for a representative run at $n = 1{,}000$. The time-resolved curves show estimated effects overlaid with the analytical ground truth (grey dashed). Main effects are closely recovered by all models; the interaction $f_{\{X_1,X_2\}}$ shows greater variability, and the MLP underestimates its time-aggregated value.}
    \label{fig:effect_comparison}
\end{figure}

\textbf{Sobol index recovery.} Fig.~\ref{fig:sobol_recovery} validates Thm.~\ref{thm:sobol_indices}: under the
constant kernel $K(t,s) = 1$ and sensitivity input behavior, $\mathcal{H}$-FD recovers the classical time-resolved and time-aggregated Sobol indices. The oracle estimator matches the analytical ground truth to within numerical precision across the full time domain. Non-linear models recover the indices accurately in the regions where each feature dominates, the exponential decay window for $X_1$, the peak at $t \approx 10$h for $X_2$, and the peak at $t \approx 18$h for $X_3$, but exhibit noisy deviations in near-zero regions where no feature dominates (row~1). These deviations arise from normalization: the time-resolved index divides by a near-zero denominator when all pure variance effects are small, amplifying estimation noise. This is an artifact of the ratio form of the Sobol index rather than a failure of effect recovery. Time-aggregated indices (row~2) are correctly recovered by all models, confirming that the integrated quantities are robust to pointwise normalization noise.

\begin{figure}[ht]
    \centering
    \includegraphics[width=\textwidth]{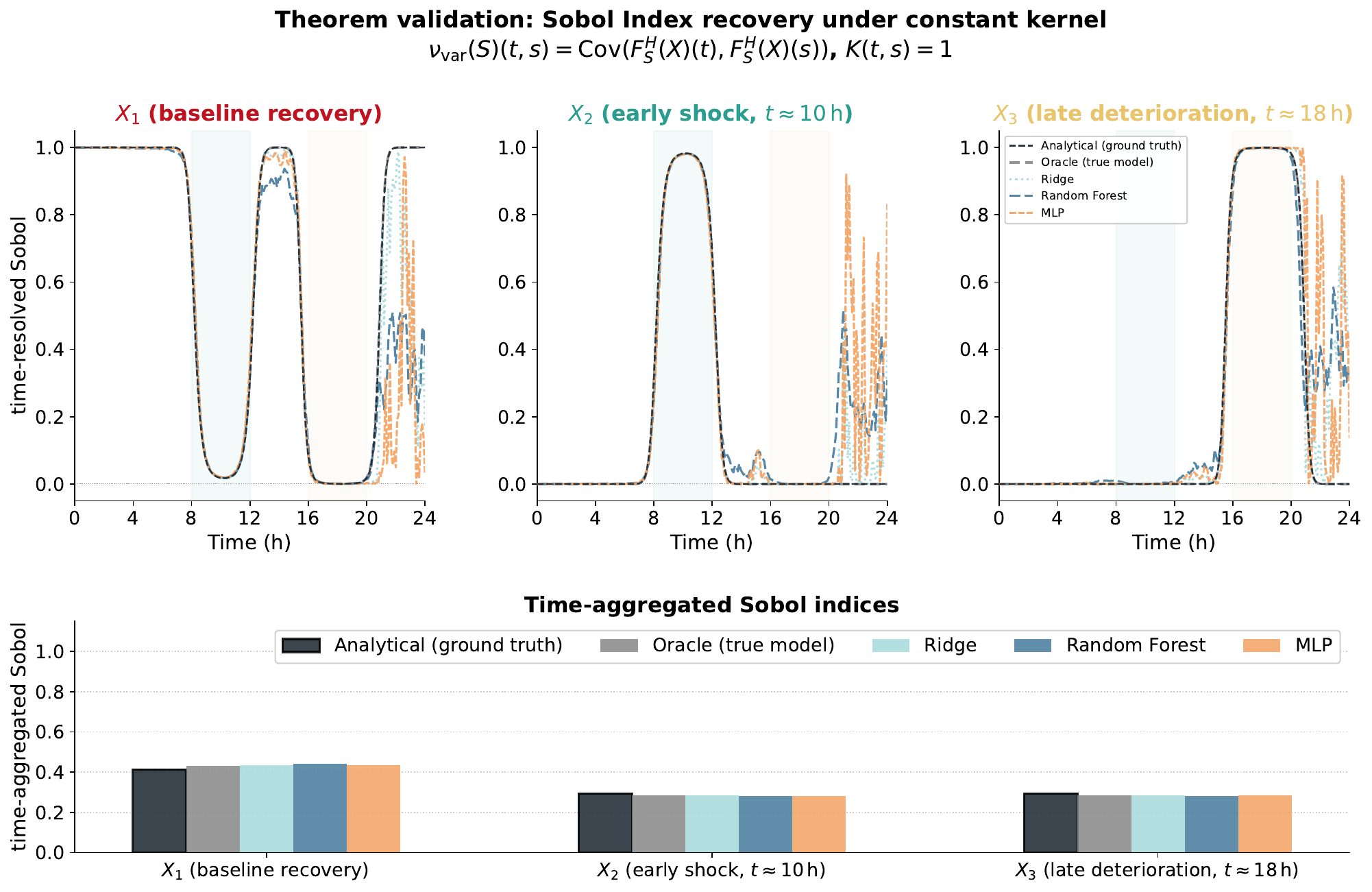}
    \caption{Time-resolved (\emph{row~1}) and time-aggregated (\emph{row~2}) Sobol indices recovered by $\mathcal{H}$-FD under the constant kernel and sensitivity input behavior, compared against the analytical ground truth.}
    \label{fig:sobol_recovery}
\end{figure}

\textbf{Aggregated ranking preservation.} Thm.~\ref{thm:kernel_effect} establishes that the time-aggregated effect under $\mathcal{B}_{\mathrm{out}}$ admits the weighted-integral representation
\[
\int_{\mathcal{T}}(\mathcal{B}_{\mathrm{out}}\,\nu_{\mathrm{inp}})(S)(t)\,\mathrm{d}t
= \int_{\mathcal{T}} w_K(s)\,\nu_{\mathrm{inp}}(S)(s)\,\mathrm{d}s,
\qquad
w_K(s) := \int_{\mathcal{T}} K(t,s)\,\mathrm{d}t.
\]
For the prediction input behavior, $\mathcal{B}_{\mathrm{inp}}$ is linear in
$F^{\mathcal{H}}$, so if
$\nu_{\mathrm{inp}}(\{i\})(s) \geq \nu_{\mathrm{inp}}(\{j\})(s)$ for all
$s \in \mathcal{T}$, the integrated ordering is preserved for any non-negative kernel $K$: the kernel re-weights the time axis without reversing a pointwise-dominant feature. This sufficient condition holds in the three examples of main-text Fig.~\ref{fig:condensed_kernel_guidance}, where the time-aggregated rankings $\Phi_{X_1} > \Phi_{X_2} > \Phi_{X_3}$ (ICU), $\Phi_{X_2} > \Phi_{X_3} > \Phi_{X_1}$ (price pulse), and $\Phi_{X_1} \gg \Phi_{X_2}$ (periodic medication) are preserved across all kernels under the prediction input behavior. By contrast, Remark 1 (\ref{rmk:kernel_effect}) establishes that ranking invariance does not generally hold for nonlinear input behaviors. Although the same weighted-integral representation applies, different kernels may induce different feature rankings under sensitivity or risk input behaviors. We illustrate this directly for the ICU model in Fig.~\ref{fig:ranking_games}: the prediction input behavior preserves the ordering $X_1 > X_2 > X_3$ across identity, OU, and correlation kernels, while the sensitivity and risk input behaviors produce a different ranking under different kernels $X_1 > X_3 > X_2$.

\begin{figure}[ht]
    \centering
    \includegraphics[width=\textwidth]{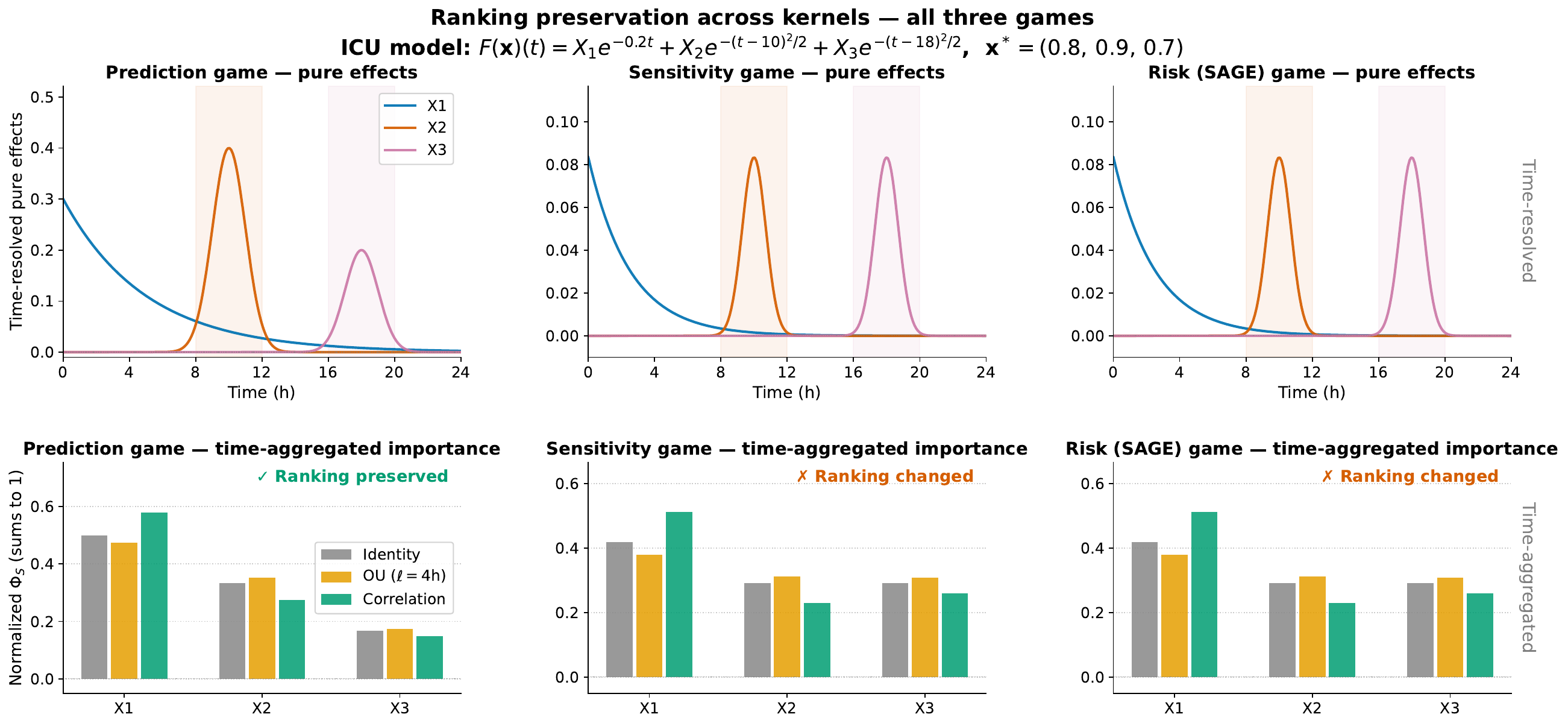}
    \caption{Ranking preservation across input behaviors for the ICU model, under identity, OU ($\ell = 4$h), and correlation kernels. (\emph{row~1}) Time-resolved pure effects; (\emph{row~2}) time-aggregated normalized importance. Under the prediction input behavior (\emph{col.~1}), $X_1 > X_2 > X_3$ is preserved across all kernels, consistent with Thm.~\ref{thm:kernel_effect}. Under the sensitivity and risk input behaviors (\emph{col.~2,3}), rankings change across kernels: the correlation kernel reweights the squared per-feature effects so that $X_3$ overtakes $X_2$, while identity and OU preserve $X_2 > X_3$. Pointwise dominance of individual effects is not preserved under the quadratic input behaviors of variance and MSE, as stated in Remark 1 (\ref{rmk:kernel_effect}).}
    \label{fig:ranking_games}
\end{figure}

\clearpage
\subsection{Kernel Guidance}\label{app:kernel_guidance}

This appendix provides supplementary details for the kernel guidance discussion in the main text (Sec.~\ref{sec:kernel_guidance}, Fig.~\ref{fig:condensed_kernel_guidance}).

\textbf{Kernel role across input behaviors.} For input behaviors defined pointwise over the output domain---prediction $\nu_{\mathrm{inp}}^{\mathrm{pred}}(S)(t) = F_S^{\mathcal{H}}(\mathbf{x})(t)$ and risk $\nu_{\mathrm{inp}}^{\mathrm{risk}}(S)(t)$---the kernel introduces dependencies between output locations: it specifies how feature effects at location $t$ incorporate information from other locations $s \in \mathcal{T}$. For variance-based explanations, the input behavior already captures dependencies between output locations through the covariance surface
\[
\nu_{\mathrm{inp}}^{\mathrm{sens}}(S)(t,s)
= \mathrm{Cov}\!\big(F_S^{\mathcal{H}}(\mathbf{X})(t),\,F_S^{\mathcal{H}}(\mathbf{X})(s)\big),
\]
and the kernel determines how these covariance contributions are aggregated across time pairs. Classical functional Sobol indices correspond to the constant kernel $K(t,s) = 1$, which assigns equal weight to all covariance terms \citep{gamboa2014sensitivity,alexanderian2020variance}.

\textbf{Kernel selection.}
Tab.~\ref{tab:kernel_guidance} summarizes the temporal assumption and intended use case for a selected set of useful kernels. The appropriate choice depends on the research question, output structure, and domain knowledge; there is no
universally correct kernel. For outputs mixing features with different temporal characteristics, kernels may be selected per-feature or combined.

\begin{table}[t]
\caption{\textbf{Kernel selection guide} for $\mathcal{H}$-FD. The appropriate kernel depends on the research question, output structure, and domain knowledge; there is no universally correct choice. For outputs mixing features with different temporal characteristics, kernels may be selected per-feature or combined.}
\label{tab:kernel_guidance}
\centering
\small
\begin{tabular}{p{2.8cm}p{3.2cm}p{4.0cm}p{2.6cm}}
\toprule
Kernel & Temporal assumption & Use when & Domains \\
\midrule
$\delta(t-s)$ (identity)
& Locations independent
& Instantaneous effects; no smoothing desired
& Any; default \\[4pt]
$1$ (constant)
& All interactions equally weighted
& Trajectory-level importance; Sobol indices
& Sensitivity analysis \\[4pt]
$\mathrm{Corr}(F_t, F_s)$ (correlation)
& Weighted by output co-movement
& Features drive distinct trajectory shapes
& Finance, electricity consumption \\[4pt]
$\exp(-|t-s|/\ell)$ (OU)
& Mean-reverting, local smoothing
& Noisy outputs; sustained influence over a window
& Biomedical, intraday volatility \\[4pt]
$\exp(-(t-s)^2/2\sigma^2)$ (Gaussian)
& Smooth symmetric neighbourhoods
& Smooth outputs; no causal ordering required
& Temperature, demand \\[4pt]
$\rho^{|t-s|}$ (AR-type)
& Discrete autoregressive dependence
& Strong lag structure; discrete-time predictions
& Economic time series \\[4pt]
$\exp(-|t-s|/\ell)\,\mathbf{1}_{t \geq s}$ (causal)
& Strict temporal ordering
& Event-driven; causal interpretability required
& Algorithmic trading \\[4pt]
$\exp(-2\sin^2(\pi(t-s)/\tau)/\ell^2)$ (periodic)
& Known periodicity; apply per-feature
& Multiple full periods; genuinely recurring influence
& Circadian, seasonal \\
\bottomrule
\end{tabular}
\end{table}

\clearpage
\subsection{SPY Intraday Volatility}
\label{app:intraday_spy_volatility}

This appendix provides full experimental details and supplementary figures for the SPY intraday volatility case study summarized in the main text (Sec.\ref{sec:real-world}, Fig.~\ref{fig:SPY_main}).

\textbf{Data.} We use one-minute OHLCV (open, high, low, close, volume) bar data for the SPDR S\&P~500 ETF Trust (SPY) sourced from Polygon.io (\url{https://polygon.io}) \citep{polygon2024}, covering 560 trading days from January 3, 2022 to April 30, 2024. The New York Stock Exchange regular session runs 09:30--16:00 ET; after removing the opening and closing auction bars we retain $T = 78$ five-minute intervals per day. The per-interval target is the \emph{absolute log-return}
\[
  y_t \;=\; \bigl|\log(c_t/c_{t-1})\bigr|, \qquad t = 1,\ldots,78,
\]
where $c_t$ denotes the close price of the $t$-th five-minute bar. Absolute log-returns are a standard proxy for intraday volatility at the five-minute frequency in the high-frequency literature \citep{andersen1998answering}. We subtract the cross-day mean at each interval (the \emph{diurnal pattern}) so that the model is trained to explain day-to-day deviations from the typical U-shaped volatility curve (elevated at the open due to overnight news digestion, low through midday, and elevated again at the close due to position squaring), rather than the U-shape itself. The diurnal pattern is estimated on the training split only and then subtracted from both splits.

\textbf{Pre-market features.} Six scalar covariates are constructed from information available before the 09:30 open:
 
\begin{center}
\small
\begin{tabularx}{\textwidth}{lXl}
\toprule
Feature & Description & Source \\
\midrule
\texttt{vix\_prev}      & Prior business day's closing VIX level
                        & CBOE via Yahoo Finance \\
\texttt{overnight\_ret} & Log-return from prior close to 09:30 open
                        & Polygon.io \\
\texttt{ann\_indicator} & Binary: 1 if a scheduled macro announcement
                          falls on this day$^\dagger$
                        & Federal Reserve / BLS / BEA \\
\texttt{day\_of\_week}  & Integer $\in\{0,\ldots,4\}$ (Mon--Fri)
                        & -- \\
\texttt{trailing\_rv}   & Mean daily absolute log-return over the prior
                          5 trading days
                        & Polygon.io \\
\texttt{month}          & Integer $\in\{1,\ldots,12\}$
                        & -- \\
\bottomrule
\end{tabularx}
\end{center}
\vspace{-\baselineskip}
{\footnotesize $^\dagger$\texttt{ann\_indicator} is set to 1 on FOMC meeting dates, CPI release dates, and Non-Farm Payroll (NFP) release dates; all dates are hardcoded from official Federal Reserve, BLS, and BEA schedules.}

\smallskip\noindent
All continuous features (\texttt{vix\_prev}, \texttt{overnight\_ret}, \texttt{trailing\_rv}) are standardized to zero mean and unit variance using statistics computed on the training split only.

\textbf{Model.} We use a single chronological train/test split to respect temporal ordering and prevent look-ahead leakage.
 
\begin{center}
\small
\begin{tabular}{lcrr}
\toprule
Split & Date range & \# Days & \% \\
\midrule
Train & 2022-01-03 -- 2023-09-29 & 448 & 80\% \\
Test  & 2023-10-02 -- 2024-04-30 & 112 & 20\% \\
\bottomrule
\end{tabular}
\end{center}
 
We fit a \emph{multivariate random forest}: a single
\texttt{RandomForestRegressor} from \texttt{scikit-learn}
\citep{scikitlearn} whose targets are the $T = 78$ diurnal-adjusted volatility values stacked into a single matrix $\mathbf{Y} \in \mathbb{R}^{N \times T}$.  Each tree predicts the full trajectory jointly, which naturally captures temporal correlations in the residuals without requiring an explicit sequence model.
 
\textbf{Hyperparameters.}
\begin{center}
\begin{tabular}{ll}
\toprule
Hyperparameter & Value \\
\midrule
\texttt{n\_estimators}      & 300 \\
\texttt{max\_features}      & \texttt{"sqrt"} (default) \\
\texttt{min\_samples\_leaf} & 1 (default) \\
\texttt{min\_samples\_split}& 2 (default) \\
\texttt{max\_depth}         & None (fully grown) \\
\texttt{bootstrap}          & True (default) \\
\texttt{random\_state}      & 42 \\
\texttt{n\_jobs}            & $-1$ (all available cores) \\
\bottomrule
\end{tabular}
\end{center}
 
Hyperparameters were fixed \emph{a priori}; no grid search or
cross-validation was performed.  This choice was deliberate: our goal is to demonstrate the framework's explanatory properties rather than optimize predictive accuracy, and heavy tuning on 560 days of financial data risks over-fitting to the specific market regime covered.  Random
forests with 300 trees and default splitting rules are a well-understood baseline that generalizes reliably in low-sample tabular settings\citep{breiman2001random}. The model achieves a trajectory-level $R^2 = 0.120$ on the held-out
test split ($1 -\mathrm{SS}_\mathrm{res}/\mathrm{SS}_\mathrm{tot}$ over all $112 \times 78$ predictions). This modest value is expected:
the diurnal pattern has been subtracted prior to modelling, leaving only day-to-day deviations from the typical U-shape as the target.

\textbf{Cooperative game set-up.} The full $2^p$-point Möbius transform is computed exactly over the Boolean lattice for all $S \subseteq [p]$. Pure main effects, partial (Shapley) effects, and full effects are then derived in closed form from the Möbius values. We use marginal (interventional) imputation throughout, sampling missing feature values from the training background set. All Monte Carlo estimators use a fixed random seed (seed $=42$) for reproducibility.

The three input behaviors are estimated as follows.

\emph{Local prediction.}
For the High-VIX Announcement profile $\mathbf{x}^*$ and a coalition $S$,
\[
  \nu_{\mathrm{inp}}^{\mathrm{pred}}(S)(t)
  = \mathbb{E}_{\mathbf{X}_{-S}}\bigl[F(\mathbf{x}^*_S, \mathbf{X}_{-S})(t)\bigr],
\]
estimated with $n_{\mathrm{sample}} = 200$ background draws per coalition. Pure effects yield per instance PDP equivalents, and partial effects are interventional SHAP values. The PDP-style plots in Fig.~\ref{fig:SPY_fig3} aggregate per-instance local prediction effects over $N_{\mathrm{PDP}} = 120$ profiles drawn by stratified sampling from the training set (one instance per month plus random fill).

\emph{Global sensitivity.}
\[
  \nu_{\mathrm{inp}}^{\mathrm{sens}}(S)(t,s)
  = \mathrm{Cov}\bigl(F_S^{\mathcal{H}}(\mathbf{X})(t),\,
                       F_S^{\mathcal{H}}(\mathbf{X})(s)\bigr),
\]
estimated by nested Monte Carlo: an outer sample of $n_{\mathrm{outer}} = 100$ draws of $\mathbf{X}_S$ from the training distribution, and for each outer draw an inner sample of $n_{\mathrm{inner}} = 100$ draws of $\mathbf{X}_{-S}$ for the conditional expectation. Pure effects yield closed Sobol indices and full effects yield total Sobol indices (Thm.~\ref{thm:sobol_indices}); partial effects are the corresponding Shapley sensitivity indices.

\emph{Global risk.}
The risk effect is defined as the loss \emph{reduction} relative to the marginal-only predictor $\bar{F}(t) = \mathbb{E}[F(\mathbf{X})](t)$:
\[
  \nu_{\mathrm{inp}}^{\mathrm{risk}}(S)(t)
  = \mathbb{E}\bigl[(Y(t) - \bar{F}(t))^2\bigr]
    - \mathbb{E}_{(\mathbf{X},Y)}\Bigl[\bigl(Y(t)
       - \mathbb{E}_{\mathbf{X}_{-S}}[F(\mathbf{X}_S, \mathbf{X}_{-S})(t)]
       \bigr)^2\Bigr],
\]
estimated by the same nested Monte Carlo scheme with $n_{\mathrm{outer}} = n_{\mathrm{inner}} = 100$. This sign convention matches the SAGE / PFI literature, in which higher importance corresponds to larger reduction in expected loss; pure effects are also set to be non-negative by construction. With $p = 6$ features there are $2^p = 64$ coalitions per input behavior.
 
\textbf{Kernels.} Two kernel choices are compared throughout.
 
\begin{itemize}
  \item \textbf{Identity kernel} $K = I$.  Treats each time step independently.
  \item \textbf{Feature-specific kernel.}  Rather than a single kernel for all features, we assign each feature a kernel that reflects its expected temporal structure.  For all features except \texttt{ann\_indicator}, we use an Ornstein--Uhlenbeck (OU) kernel,
    \[
      [K_{\mathrm{OU}}]_{st}
        = \exp\!\Bigl(-\tfrac{|t - s|}{\ell}\Bigr),
    \]
    with length-scale $\ell = 8$ (in units of five-minute bars, equivalent to 40 minutes), encoding smooth temporal decay. For \texttt{ann\_indicator}, whose effect is expected to be localized to a specific event time with no pre-event leakage, we use a causal kernel,
    \[
      [K_{\mathrm{causal}}]_{st}
        = \exp\!\Bigl(-\tfrac{t - s}{\ell}\Bigr)\,\mathbf{1}[t \geq s],
    \]
    with the same length-scale $\ell = 8$.  This enforces the
    interpretation that the announcement at time $s$ can only affect attributions at $t \geq s$.  The length-scale $\ell = 8$ was chosen to match the typical half-life of a volatility spike following a scheduled macro release (approximately 30--60 minutes in the high-frequency literature \citep{andersen1998answering}), while remaining interpretable on
    the 78-bar trading day grid.  Both kernels are row-normalized before application so that each time step receives a unit-mass weighted average of the effect trajectory.
\end{itemize}

\textbf{Profile selection.} The High-VIX Announcement profile studied in the main text (July 13, 2022) is selected as the median instance among all days satisfying \texttt{ann\_indicator}~$= 1$ and
\texttt{vix\_prev}~$\geq q_{75}$, where $q_{75}$ is the 75th percentile of \texttt{vix\_prev} over the training set (22 days match these criteria). The profile has feature values \texttt{vix\_prev}~$= 27.29$, \texttt{overnight\_ret}~$= -0.0153$, \texttt{ann\_indicator}~$= 1$,
\texttt{day\_of\_week}~$= 2$ (Wednesday), \texttt{trailing\_rv}~$= 8.18 \times 10^{-4}$, and \texttt{month}~$= 7$ (July).

\textbf{Supplementary figures.} Figs.~\ref{fig:SPY_fig1}--\ref{fig:SPY_fig4} provide the full set of results for the SPY experiment. All panels use the following color coding: \texttt{vix\_prev} (blue), \texttt{ann\_indicator} (orange), \texttt{overnight\_ret} (green), \texttt{trailing\_rv} (brown), \texttt{day\_of\_week} (purple), \texttt{month} (red). Solid lines correspond to the OU kernel; dashed lines to the causal kernel (used for \texttt{ann\_indicator} in the OU + causal kernel rows). The dashed vertical grey line marks 14:00 ET, the time of the Beige Book release on July~13,~2022.

Fig.~\ref{fig:SPY_fig1} shows global sensitivity and risk effects, capturing how \texttt{vix\_prev} contributes to output variance and forecast error across the full data distribution rather than at a single instance. \texttt{vix\_prev} dominates both quantities across all effect types and both kernels, consistent with the well-established role of implied volatility as a forward-looking risk signal at the population level. Under the identity kernel its effect trajectory is noisy and roughly flat across the trading day, while the OU kernel reveals a smooth U-shape, reflecting that variance and forecast error are systematically elevated at the open and close. The gap between pure and full importances is largest for \texttt{vix\_prev}, indicating that a substantial portion of its global contribution operates through interactions with other features rather than as a standalone main effect.
 
\begin{figure}[ht]
  \centering
  \includegraphics[width=\textwidth]{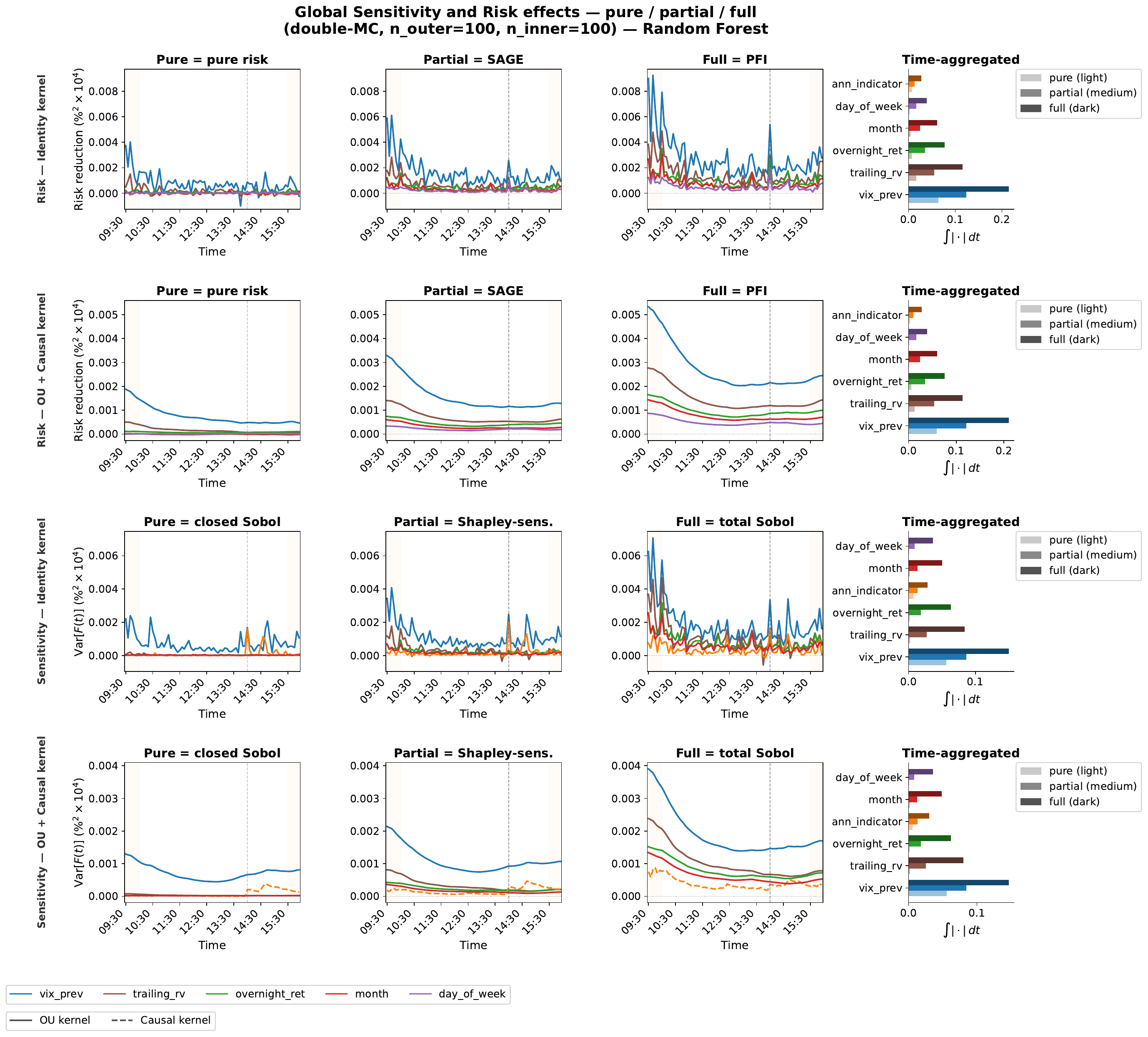}
    \caption{\textbf{Global sensitivity and risk effects --- pure / partial / full.} Effects are computed by nested Monte Carlo ($n_{\mathrm{outer}} = n_{\mathrm{inner}} = 100$) directly from the global input behaviors $\nu_{\mathrm{inp}}^{\mathrm{sens}}$ and $\nu_{\mathrm{inp}}^{\mathrm{risk}}$. (\emph{rows~1--2})  Risk behavior (loss reduction in $\%^2 \times 10^{-4}$) under the identity kernel (\emph{row~1}) and
    feature-specific kernel (\emph{row~2}); (\emph{rows~3--4}) sensitivity input behavior under the identity (\emph{row~3}) and feature-specific (\emph{row~4}) kernels. (\emph{cols.~1--3}) Pure $=$ closed Sobol / pure risk (col.~1), partial $=$ Shapley-sensitivity / SAGE (\emph{col.~2}),
    full $=$ total Sobol / PFI (\emph{col.~3}). (\emph{col.~4}) Time-aggregated bar chart for pure,
    partial, and full importances.}
  \label{fig:SPY_fig1}
\end{figure}

Fig.~\ref{fig:SPY_fig2} shows local prediction effects for the High-VIX Announcement profile. Under the identity kernel, pure effects are noisy and difficult to interpret. The feature-specific kernel clarifies the structure: \texttt{vix\_prev} produces a smooth positive baseline effect
throughout the day, while the causal kernel localizes the effect of \texttt{ann\_indicator} sharply after 14:00 with no pre-event leakage. The partial-to-pure ratio of $1.61\times$ for \texttt{ann\_indicator} indicates that interaction with other features substantially amplifies its attributed effect.

Fig.~\ref{fig:SPY_fig4} confirms that the \texttt{vix\_prev}~$\times$~\texttt{ann\_indicator} pair is the strongest pairwise interactions in terms of time-aggregated effects. Under the identity kernel this manifests as a sharp spike at 14:00. The causal kernel smooths and causally localizes this spike, revealing the regime shift described in the main text: mild redundancy between the two features before the announcement, where both reflect elevated expected volatility, followed by strong synergy afterwards, where the release amplifies the impact of prior market fear. A second interaction, \texttt{vix\_prev}~$\times$~\texttt{overnight\_ret}, is comparable in time-aggregated magnitude but lacks a sharp event-locked profile; remaining pairs are noticeably smaller. Together this confirms that the \texttt{vix\_prev}--\texttt{ann\_indicator} interaction is the structurally dominant \emph{event-driven} higher-order effect on this profile.

\begin{figure}[p]
  \centering
  \includegraphics[width=\textwidth, height=0.44\textheight, keepaspectratio]{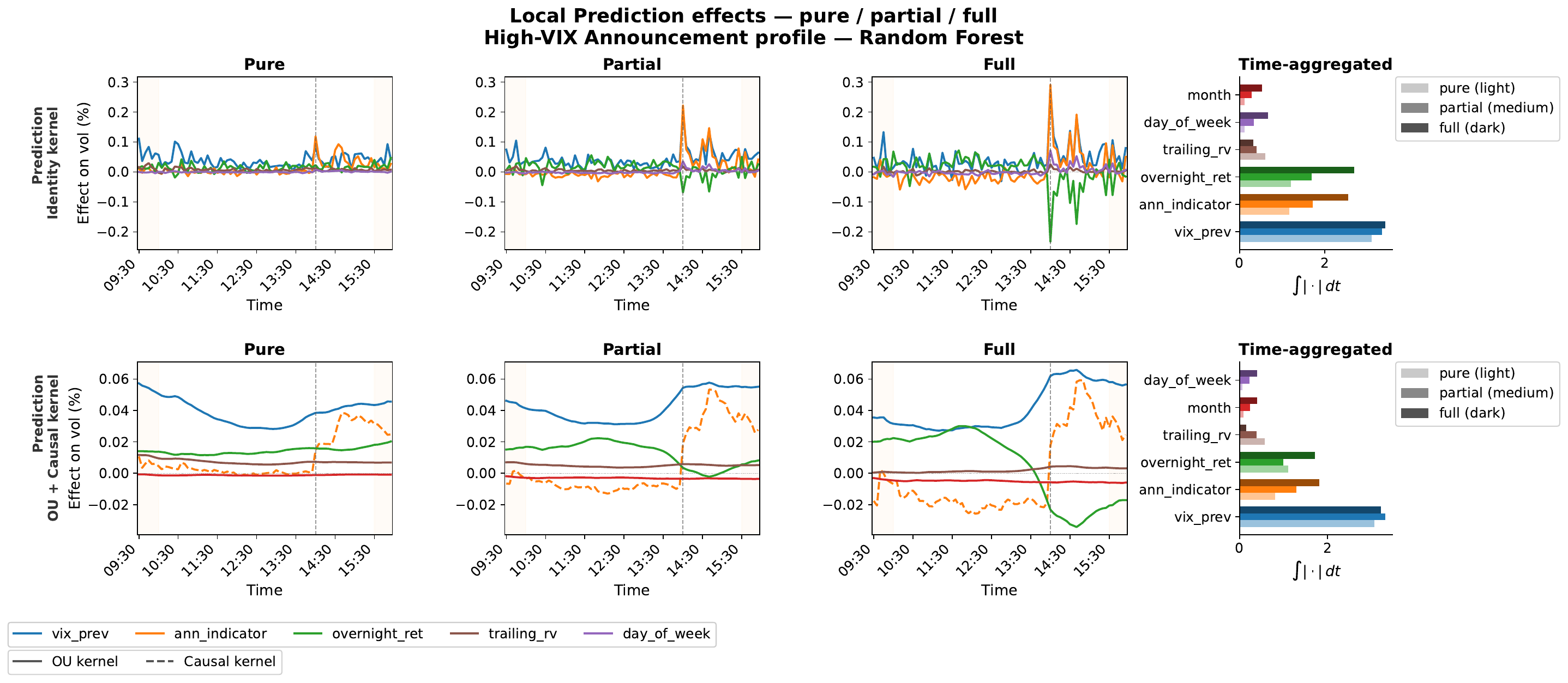}
  \caption{\textbf{Local prediction effects --- pure / partial / full.} Instance: July~13,~2022 (\texttt{vix\_prev}~$=27.3$, \texttt{ann\_indicator}~$=1$), selected as the median among days with \texttt{ann\_indicator}~$=1$ and \texttt{vix\_prev}~$\geq q_{75}$. (\emph{Row~1}) Identity kernel; (\emph{Row~2}) feature-specific kernel (OU for all features except \texttt{ann\_indicator}; causal for \texttt{ann\_indicator}, shown dashed). (\emph{cols.~1--3}) pure, partial, full. (\emph{col.~4}) Time-aggregated bar chart for pure, partial, and full effects.}
  \label{fig:SPY_fig2}
\end{figure}

\begin{figure}[p]
  \centering
  \includegraphics[width=\textwidth, height=0.44\textheight, keepaspectratio]{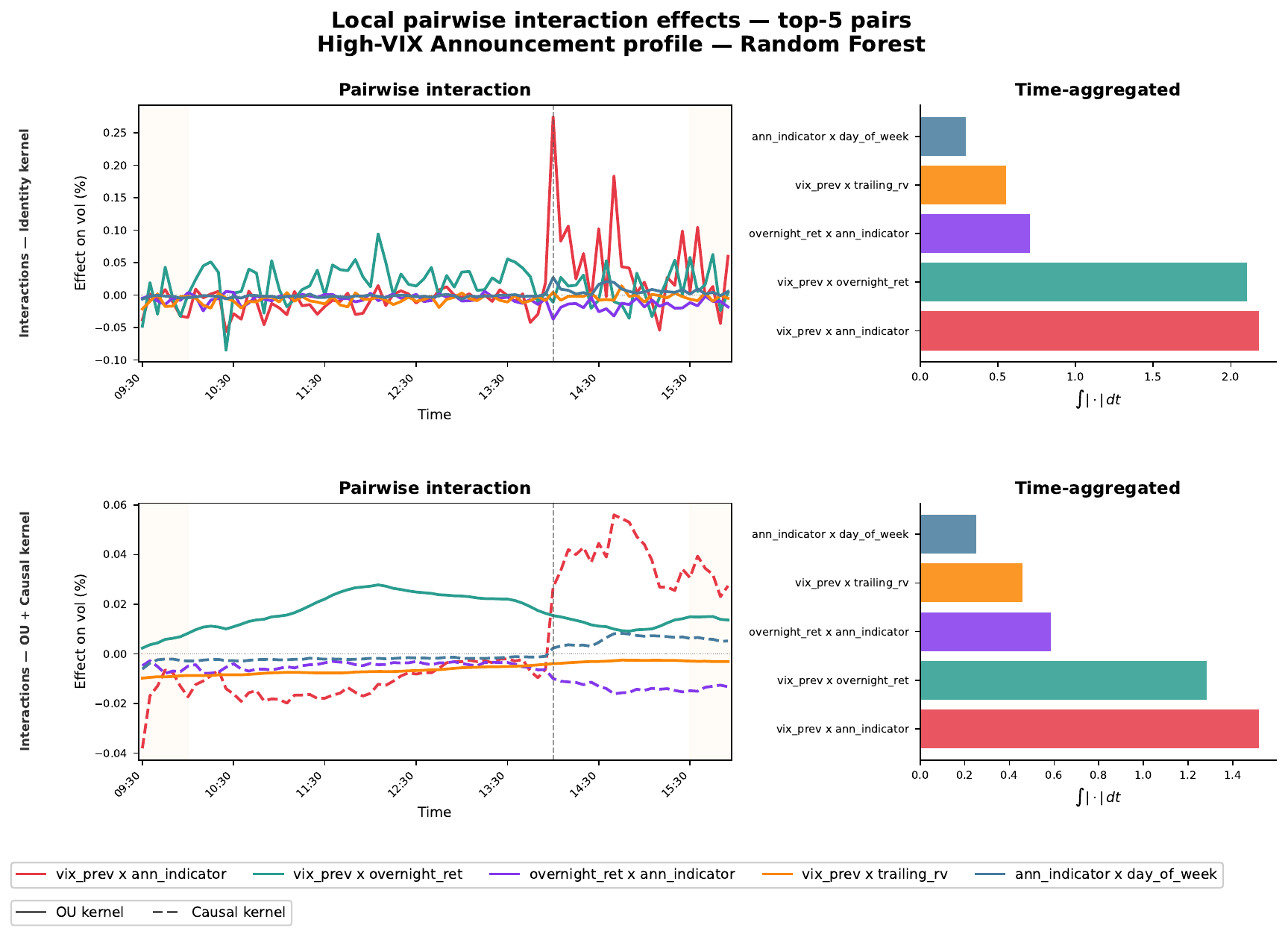}
  \caption{\textbf{Local pairwise interaction effects ---
    top-5 pairs.} Same instance as Fig.~\ref{fig:SPY_fig2}. Top-5 pairs
    are ranked by time-integrated OU-kernel importance. (\emph{row~1}) Identity kernel; (\emph{row~2}) Feature-specific kernel (causal for
    pairs involving \texttt{ann\_indicator}, shown dashed; OU otherwise).
    (\emph{col.~1}) Interaction trajectories over the trading day; (\emph{col.~2}) time-aggregated bar chart.
    The \texttt{vix\_prev}~$\times$~\texttt{ann\_indicator} pair
    is the strongest interaction under both kernels.}
  \label{fig:SPY_fig4}
\end{figure}

Fig.~\ref{fig:SPY_fig3} shows that \texttt{vix\_prev} exhibits a clear monotone relationship with predicted volatility deviations under the OU kernel: higher prior VIX is associated with larger positive effects across the full trading day, and this relationship is consistent across all five selected time points. Under the identity kernel the same
relationship is present but noisier, with greater variability across time points. For \texttt{overnight\_ret} the time-aggregated effect is near zero, suggesting that while the overnight return may matter for specific instances, its effect averages out across the training distribution.
 
\begin{figure}[ht]
  \centering
  \includegraphics[width=\textwidth]{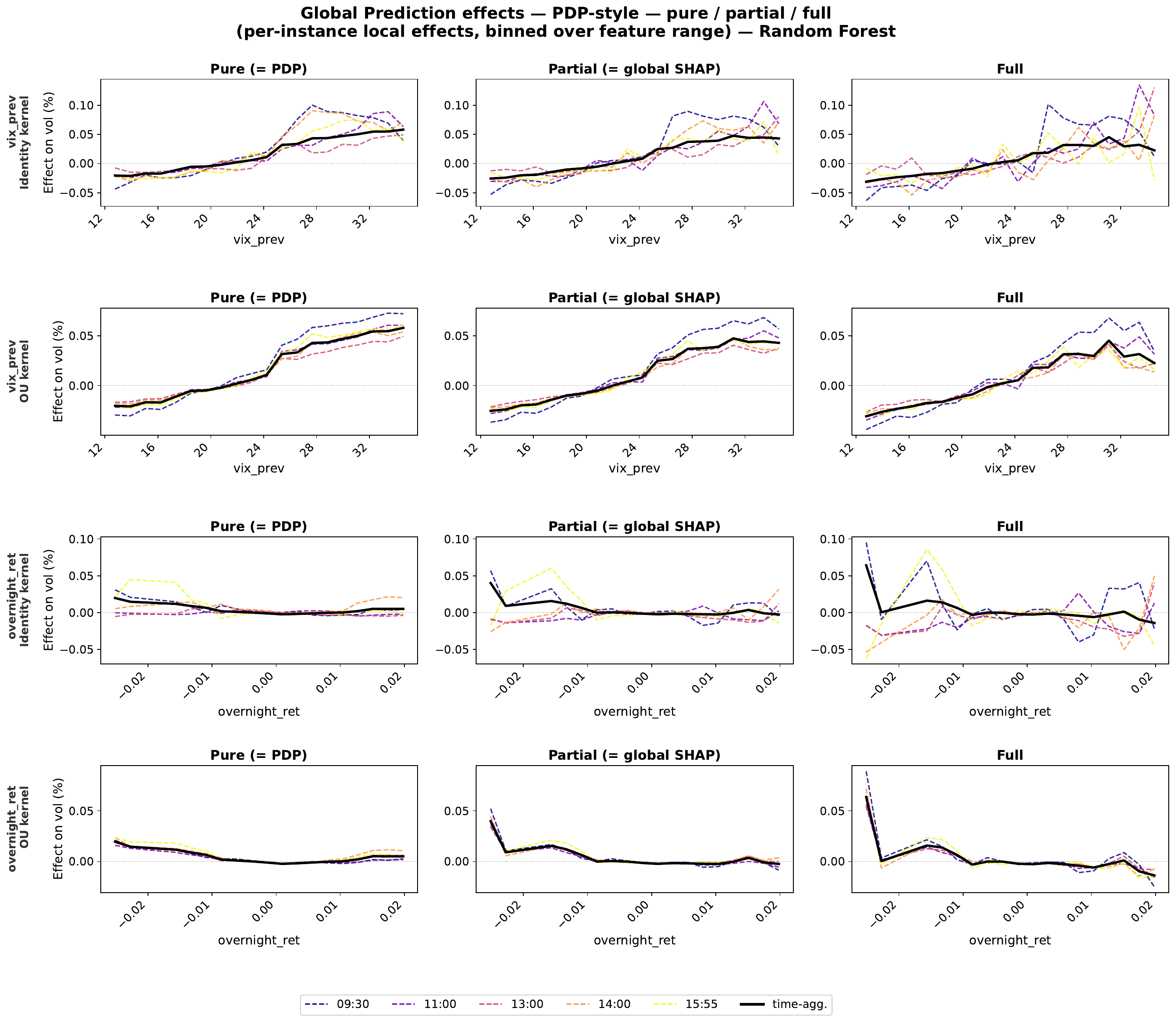}
  \caption{\textbf{Global prediction effects --- PDP-style.} Per-instance pure, partial, and full local prediction effects, computed for $N_{\mathrm{PDP}} = 120$ profiles drawn by stratified sampling from the training set, are binned along the feature axis ($N_{\mathrm{bins}} = 20$) for the two highest-importance features by time-aggregated OU-kernel global prediction behavior (\texttt{vix\_prev} and \texttt{overnight\_ret}); (\emph{rows~1--2}) \texttt{vix\_prev} under the identity (\emph{row~1}) and OU (\emph{row~2}) kernels; (\emph{rows~3--4}) \texttt{overnight\_ret} under identity (\emph{row~3}) and OU (\emph{row~4}) kernels. Coloured dashed lines show the mean effect at five selected time points (09:30, 11:00, 13:00, 14:00, 15:55); the solid black line is the time-aggregated mean.}
  \label{fig:SPY_fig3}
\end{figure}

\clearpage
\subsection{Electricity Demand Comparison}\label{app:energy_comparison}

This appendix provides full experimental details and supplementary figures for the energy demand case study summarized in the main text (Sec.~\ref{sec:real-world}, Fig.~\ref{fig:energy_main}).

\textbf{Data.} We use the UCI Individual Household Electric Power Consumption dataset (IHEPC) \citep{dua2019uci}, which contains measurements of global active power at one-minute resolution for a single French household. We aggregate to hourly means, retaining only days with all $T = 24$ hours
observed, yielding approximately $1{,}400$ complete days. The target trajectory is the hourly mean active power (kW). Additionally, we use the GB National Electricity System Operator historic demand dataset (NESO) \citep{neso2024demand}, which contains half-hourly settlement period demand figures (MW) for Great Britain. We use five years of data (2018--2022), retaining only days with all $T = 48$ half-hourly periods observed, yielding approximately $1{,}800$ complete days. The target trajectory is the half-hourly national demand in megawatts (MW), using the ND (national demand) column, falling back to TSD (transmission system demand) if unavailable. For both datasets the cross-day mean trajectory (the diurnal pattern) is estimated on the training split and subtracted from all splits, so the model is trained on day-to-day deviations from the typical daily profile.

\textbf{Features.} Six features are constructed for IHEPC and seven for NESO, all from information available at the start of the day:

\begin{center}
\small
\begin{tabularx}{\textwidth}{lXcc}
\toprule
Feature & Description & IHEPC & NESO \\
\midrule
\texttt{day\_of\_week}    & Integer $\in\{0,\ldots,6\}$        & \checkmark & \checkmark \\
\texttt{is\_weekend}      & Binary: 1 if Saturday or Sunday    & \checkmark & \checkmark \\
\texttt{month}            & Integer $\in\{1,\ldots,12\}$       & \checkmark & \checkmark \\
\texttt{season}           & Integer $\in\{1,\ldots,4\}$ (meteorological)
& \checkmark & \checkmark \\
\texttt{lag\_daily\_mean} & Mean power/demand of the prior day & \checkmark & \checkmark \\
\texttt{lag\_morning}     & Mean power/demand during prior-day AM peak
& \checkmark & \checkmark \\
\texttt{lag\_evening}     & Mean power/demand during prior-day PM peak
&            & \checkmark \\
\bottomrule
\end{tabularx}
\end{center}

The AM and PM peak windows are defined as 06:00--09:00 and 17:00--21:00 for IHEPC (hours 6--9 and 17--21) and 06:00--09:00 and 17:00--20:30 for NESO (half-hourly periods 12--18 and 34--41). Season is derived from month: winter $= \{12, 1, 2\}$, spring $= \{3, 4, 5\}$, summer $= \{6, 7, 8\}$, autumn $= \{9, 10, 11\}$.

\textbf{Model.} Both datasets use a chronological 80/20 train/test split.
 
\begin{center}
\begin{tabular}{llrr}
\toprule
Dataset & Split & \# Days & \% \\
\midrule
IHEPC & Train & $\approx 1{,}120$ & 80\% \\
      & Test  & $\approx 280$     & 20\% \\
\midrule
NESO  & Train & $\approx 1{,}440$ & 80\% \\
      & Test  & $\approx 360$     & 20\% \\
\bottomrule
\end{tabular}
\end{center}

Both datasets use the same multivariate random forest architecture as described in App.~\ref{app:intraday_spy_volatility}, with identical hyperparameters (300 trees, \texttt{max\_features="sqrt"}, \texttt{random\_state=42}, \texttt{n\_jobs=-1}). Each model predicts the full diurnal-adjusted trajectory jointly. The model achieves trajectory-level $R^2 = 0.191$ for IHEPC and $R^2 = 0.738$ for NESO, computed as $1 - \mathrm{SS}_\mathrm{res}/\mathrm{SS}_\mathrm{tot}$ over all test-split predictions jointly. The large difference reflects the fundamentally different nature of the two settings: NESO national demand is strongly driven by calendar and weather seasonality, which is well-captured by the available features, whereas single-household IHEPC demand is far more idiosyncratic, with individual behavior and appliance usage introducing substantial unexplained variability that no set of day-level features can capture. Both values are sufficient to produce stable and interpretable attributions.

\textbf{Cooperative game set-up.} The set-up follows App.~\ref{app:intraday_spy_volatility} in all respects except the kernel choice. The local prediction input behavior uses $n_{\mathrm{sample}} = 150$ background draws per coalition; PDP plots aggregate per-instance local effects over $N_{\mathrm{PDP}} = 120$ profiles drawn by stratified sampling from the training set. Global sensitivity and risk input behaviors are estimated by nested Monte Carlo with $n_{\mathrm{outer}} = n_{\mathrm{inner}} = 100$, using the same definitions as in App.~\ref{app:intraday_spy_volatility}. With $p = 6$ (IHEPC) and $p = 7$ (NESO) features, there are $2^6 = 64$ and $2^7 = 128$ coalitions per game instance respectively.

\textbf{Kernels.} Two kernel choices are compared throughout.
 
\begin{itemize}
  \item \textbf{Identity kernel} $K = I$.  Treats each time step independently; effects reduce to pointwise attributions.
  \item \textbf{Correlation kernel.}  The empirical Pearson correlation matrix of the raw (non-diurnal-adjusted) training trajectories,
    \[
      K_{st} = \frac{\mathrm{Cov}(Y_s, Y_t)}
                    {\sqrt{\mathrm{Var}(Y_s)\,\mathrm{Var}(Y_t)}},
      \qquad s, t \in [T],
    \]
    clipped to $[-1, 1]$.  This kernel encodes the empirical temporal dependence of demand: strongly correlated time steps receive similar attributions, so that a feature affecting the entire day is credited for its full coherent contribution rather than appearing as isolated
    hourly effects. Both kernels are row-normalized before application.
\end{itemize}

\textbf{Profile selection.} For the local explanations two profiles are studied, one per dataset. The IHEPC typical weekday profile is the median instance among 814 days satisfying \texttt{is\_weekend}~$= 0$ and \texttt{day\_of\_week}~$\in \{1,2,3,4\}$, with feature values \texttt{day\_of\_week}~$= 4$ (Friday), \texttt{month}~$= 11$ (November), \texttt{season}~$= 4$ (autumn), \texttt{lag\_daily\_mean}~$= 1.640$~kW, and \texttt{lag\_morning}~$= 2.324$~kW. The NESO winter weekday profile is the median instance among 322 days satisfying \texttt{is\_weekend}~$= 0$ and \texttt{season}~$= 1$ (winter), with feature values \texttt{day\_of\_week}~$= 0$ (Monday), \texttt{month}~$= 12$ (December), \texttt{lag\_daily\_mean}~$= 33{,}431$~MW, \texttt{lag\_morning}~$= 29{,}082$~MW, and \texttt{lag\_evening}~$= 41{,}313$~MW.

\textbf{Supplementary figures.} Figs.~\ref{fig:energy_fig1_ihepc}--\ref{fig:energy_fig4_neso} show the full set of results.  Color coding: \texttt{day\_of\_week} (blue), \texttt{is\_weekend} (orange), \texttt{month} (green), \texttt{season} (red), \texttt{lag\_daily\_mean} (purple), \texttt{lag\_morning} (brown), \texttt{lag\_evening} (pink, NESO only). Solid lines correspond to the identity kernel; dashed lines to the correlation kernel.  Blue and red shading marks AM and PM peak windows.
 
Figs.~\ref{fig:energy_fig1_ihepc}--\ref{fig:energy_fig1_neso} show global sensitivity and risk effects under both kernels. For IHEPC, \texttt{lag\_daily\_mean}, \texttt{lag\_morning}, and \texttt{month} dominate both sensitivity and risk across all effect types. Under the identity kernel their effects show separate morning and evening peaks, while the correlation kernel merges these into a single elevated region spanning both peaks, reflecting the block-diagonal covariance structure. Time-aggregated pure, partial and full time-aggregated effects diverge substantially, suggesting that forecast variance and error are sensitive to feature interactions. For NESO, \texttt{month} and \texttt{season} dominate, with substantially larger magnitudes than IHEPC consistent with national-level demand being driven by weather and calendar seasonality. Under the identity kernel effects
vary across the day, but the correlation kernel collapses them to near-flat trajectories, reflecting the near-uniform covariance structure of the NESO dataset. The gap between time-aggregated pure and full effects for both risk and sensitivity is large for \texttt{season} and \texttt{month} in particular, indicating strong, focused higher-order interactions for these two features in particular at the national grid level, while other features participate less in interactions.
 
\begin{figure}[ht]
  \centering
  \includegraphics[width=\textwidth]{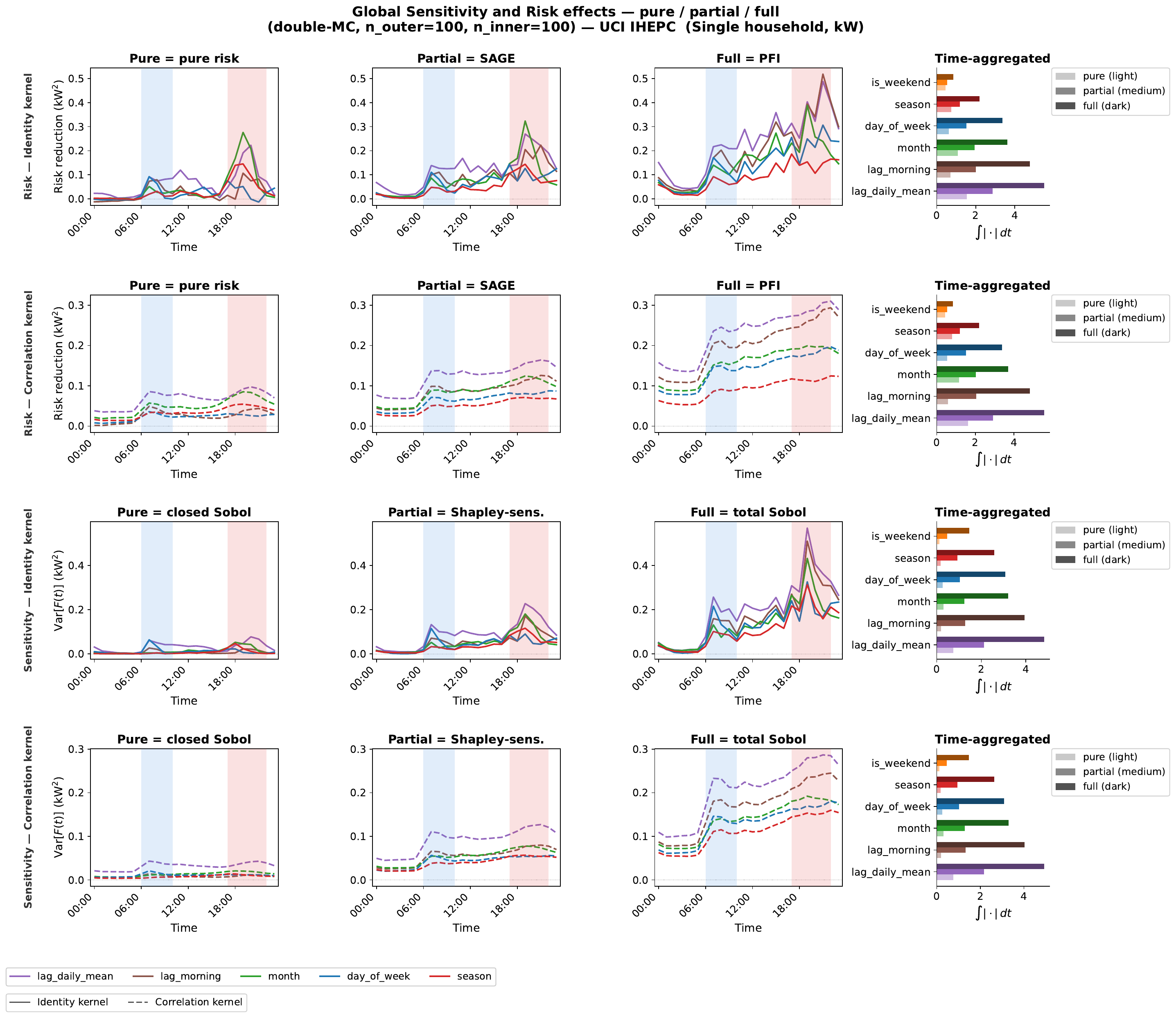}
  \caption{\textbf{Global sensitivity and risk effects --- IHEPC (single household, kW).} Effects are computed by nested Monte Carlo ($n_{\mathrm{outer}} = n_{\mathrm{inner}} = 100$) directly from the global sensitivity and risk games. (\emph{rows~1--2}) Risk game (loss reduction, kW$^2$) under the identity kernel (\emph{row~1}) and correlation kernel (\emph{row~2}); (\emph{rows~3--4}) sensitivity game ($\mathrm{Var}[F(t)]$, kW$^2$) under the identity (\emph{row~3}) and correlation (\emph{row~4}) kernels. (\emph{cols.~1--3}) Pure (closed Sobol / pure risk), partial (Shapley-sensitivity / SAGE), full (total Sobol / PFI); (\emph{col.~4}) time-aggregated bar chart. Blue and red shading marks AM and PM peak windows.}
  \label{fig:energy_fig1_ihepc}
\end{figure}
 
\begin{figure}[ht]
  \centering
  \includegraphics[width=\textwidth]{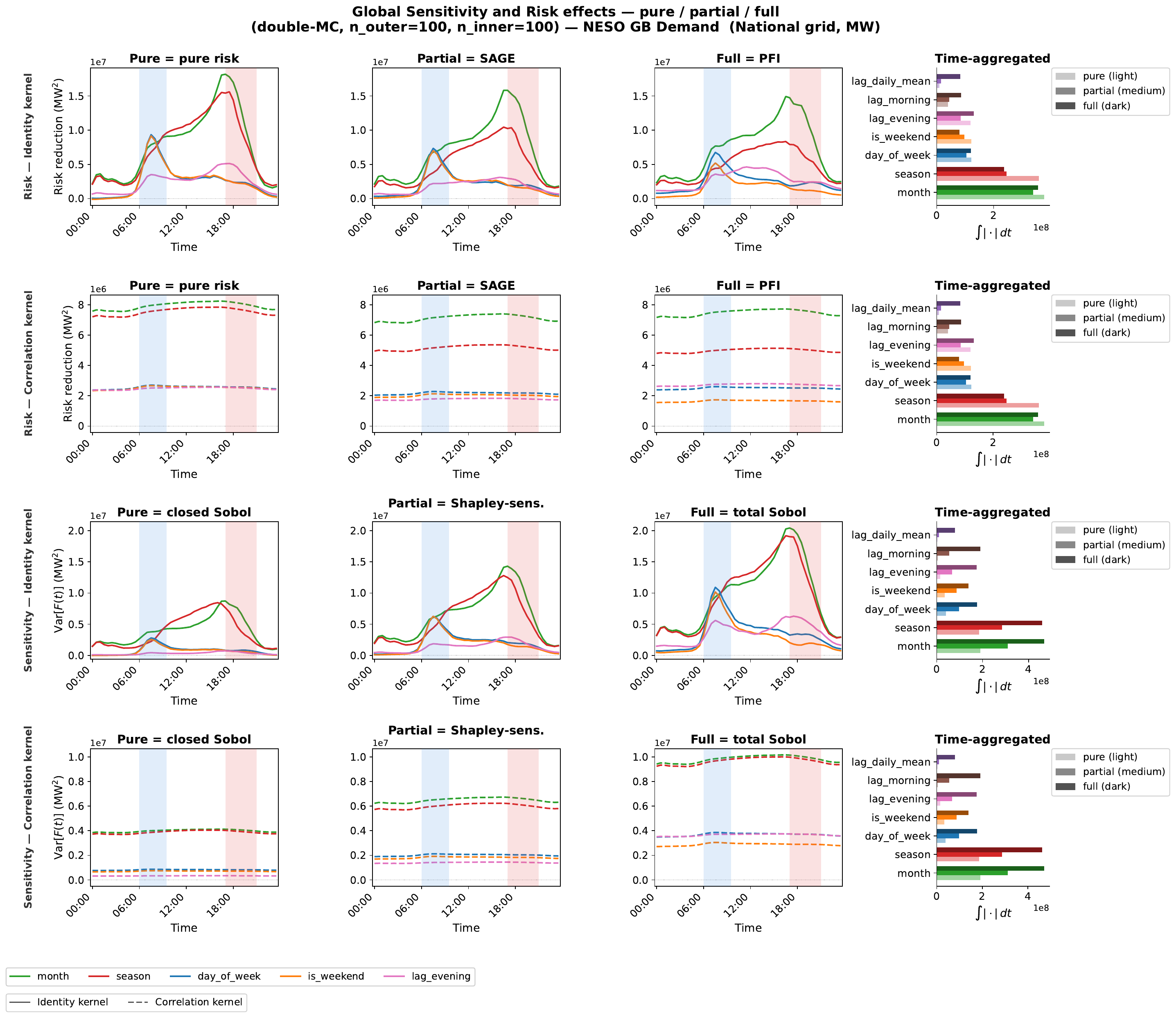}
    \caption{\textbf{Global sensitivity and risk effects --- NESO (national grid, MW).} Effects are computed by nested Monte Carlo ($n_{\mathrm{outer}} = n_{\mathrm{inner}} = 100$) directly from the global sensitivity and risk games. (\emph{rows~1--2}) Risk game (loss reduction, MW$^2 \times 10^7$) under the identity kernel (\emph{row~1}) and correlation kernel (\emph{row~2}); (\emph{rows~3--4}) sensitivity game ($\mathrm{Var}[F(t)]$, MW$^2 \times 10^7$) under the identity (\emph{row~3}) and correlation (\emph{row~4}) kernels. (\emph{Cols.~1--3}) Pure (closed Sobol / pure risk), partial (Shapley-sensitivity / SAGE), full (total Sobol / PFI); \emph{col.~4}: time-aggregated bar chart. Blue and red shading marks AM and PM peak windows.}
  \label{fig:energy_fig1_neso}
\end{figure}

\textbf{Sensitivity network plots.} Fig.~\ref{fig:energy_sens_networks} summarizes the global sensitivity input behavior as a network: node size encodes the time-aggregated attribution of each feature, edge width encodes pairwise interaction strength, and edge color encodes the sign of the underlying Möbius coefficient (teal for positive, red for negative). Three columns show the pure, partial, and full effects respectively. A salient pattern is that negative edges (red) appear almost exclusively in the pure-effect column and largely vanish in the partial and full columns. This is expected. Pure effects are raw Möbius coefficients, which under exact Sobol orthogonality (independent features) equal the non-negative variance contributions, but pick up signed contributions whenever features are empirically dependent or when sampling noise is present. Partial effects (Shapley sensitivity) and full effects (total Sobol) are non-negative weighted sums of Möbius coefficients at all coalitions containing the feature in question, and these aggregations average out the signed pure-effect components. In our setting, the visible negative edges in the IHEPC pure column connect calendar features to each other and lag features, and in the NESO pure column they connect lag features to one another and season and month, both consistent with the known empirical correlation between these features.

\begin{figure}[ht]
  \centering
  \includegraphics[width=\textwidth]{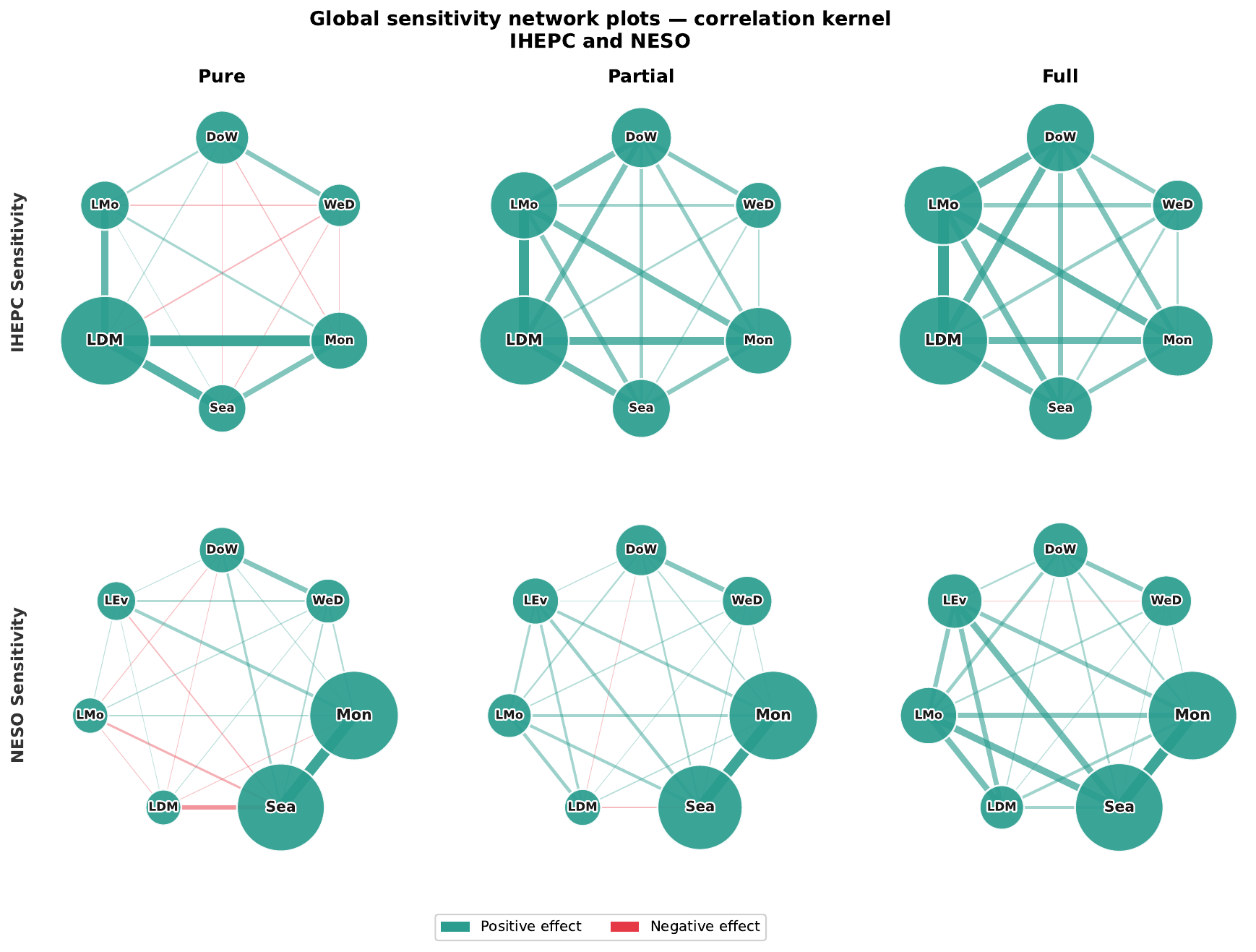}
  \caption{\textbf{Global sensitivity network plots --- correlation kernel.} (\emph{row~1}) IHEPC; (\emph{row~2}) NESO; (\emph{cols.~1--3}) pure, partial, and full effects. Node size encodes time-aggregated feature importance; edge width encodes pairwise interaction strength; edge color encodes the sign of the interaction (teal for positive, red for negative). Negative edges are concentrated in the pure-effect column and largely vanish after Shapley (partial) or total-Sobol (full) aggregation, which are non-negative weighted sums of Möbius coefficients. Feature abbreviations: \texttt{DoW} = \texttt{day\_of\_week}, \texttt{WeD} = \texttt{is\_weekend}, \texttt{Mon} = \texttt{month}, \texttt{Sea} = season, \texttt{LDM} = \texttt{lag\_daily\_mean}, \texttt{LMo} = \texttt{lag\_morning}, \texttt{LEv} = \texttt{lag\_evening} (NESO only).}
  \label{fig:energy_sens_networks}
\end{figure}

Figs.~\ref{fig:energy_local_pred_ihepc}--\ref{fig:energy_local_pred_neso} show local prediction effects for the selected profiles. For the IHEPC typical weekday, several features contribute comparably (\texttt{month}, \texttt{lag\_daily\_mean}, \texttt{day\_of\_week}, \texttt{lag\_morning}), with no single dominant driver. Under the identity kernel, effects are concentrated in the morning and evening peak windows, while the correlation kernel smooths these into broader positive contributions throughout the day with attenuated peaks. For the NESO winter weekday, contributions are similarly distributed across multiple features (notably \texttt{season}, \texttt{day\_of\_week}, \texttt{lag\_evening}, and \texttt{month}), with the correlation kernel producing roughly constant positive effects across the day, consistent with winter demand being uniformly elevated relative to the annual mean. The correlation kernel substantially reduces the apparent effect magnitude (note the compressed $y$-axis), reflecting the fact that coherent daily shifts are re-expressed as a single weighted attribution rather than summed over independent time steps.

\begin{figure}[ht]
  \centering
  \includegraphics[width=\textwidth]{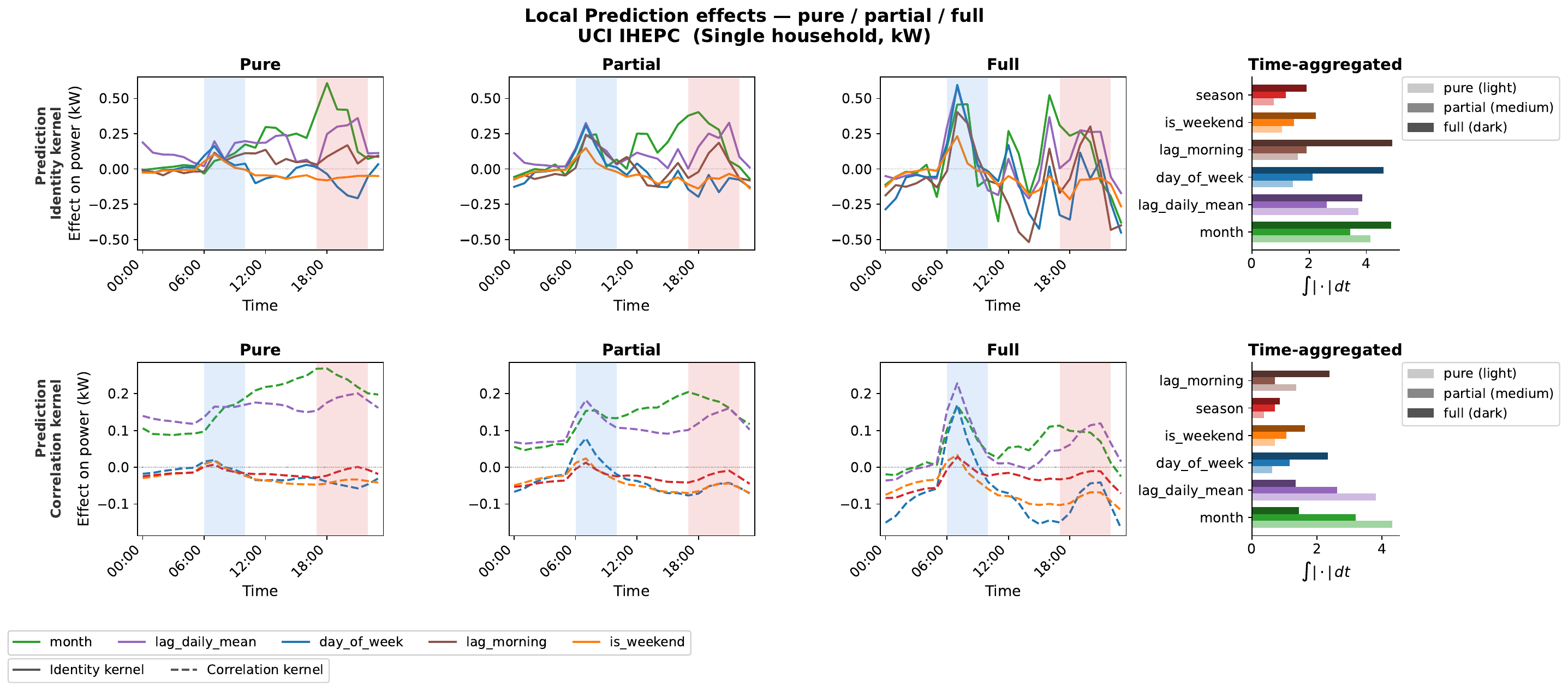}
  \caption{\textbf{Local prediction effects --- IHEPC typical weekday.} Instance: \texttt{day\_of\_week} $= 4$, \texttt{month} $= 11$, \texttt{lag\_daily\_mean} $= 1.640$\,kW, selected as the median among 814 days with \texttt{is\_weekend} $= 0$ and \texttt{day\_of\_week} $\in \{1,2,3,4\}$. (\emph{row~1}) Identity kernel; (\emph{row~2}) correlation kernel; (\emph{cols.~1--3}) pure, partial, full; (\emph{col.~4}) time-aggregated bar chart.}
  \label{fig:energy_local_pred_ihepc}
\end{figure}

\begin{figure}[ht]
  \centering
  \includegraphics[width=\textwidth]{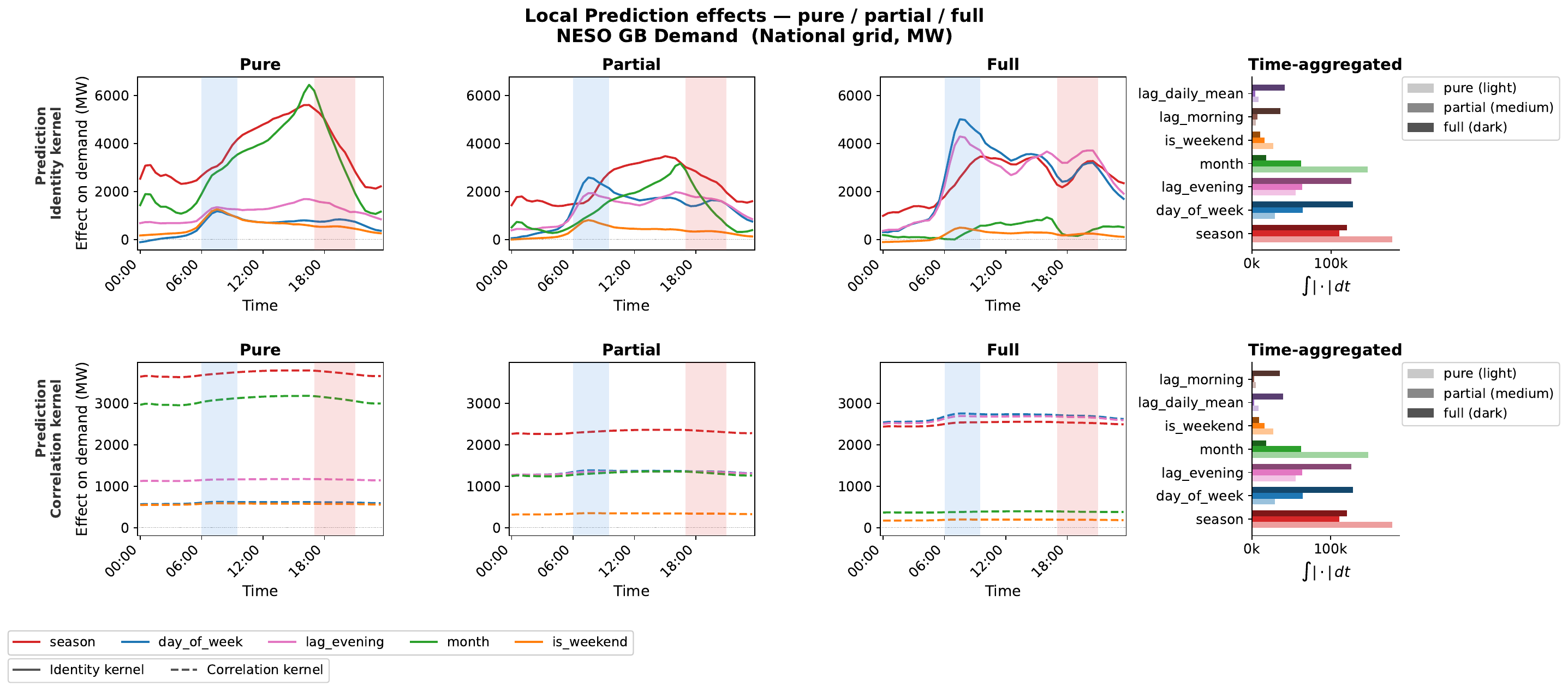}
  \caption{\textbf{Local prediction effects --- NESO winter weekday.} Instance: \texttt{day\_of\_week} $= 0$, \texttt{month} $= 12$, \texttt{lag\_daily\_mean} $= 33{,}431$\,MW, selected as the median among 322 days with \texttt{is\_weekend} $= 0$ and \texttt{season} $= 1$ (winter). (\emph{row~1}) Identity kernel; (\emph{row~2}) correlation kernel. (\emph{cols.~1--3}) Pure, partial, full; (\emph{col.~4}) time-aggregated bar chart.}
  \label{fig:energy_local_pred_neso}
\end{figure}

Figs.~\ref{fig:energy_fig4_ihepc}--\ref{fig:energy_fig4_neso} show local pairwise interaction effects. For IHEPC the depicted pairs, which are the top-5 pairs by time-aggregated importance, frequently involve lag features interacting with calendar features (e.g., \texttt{month} $\times$ \texttt{lag\_morning}, \texttt{day\_of\_week} $\times$ \texttt{lag\_morning}), suggesting that the strength of autoregressive momentum depends on the time of year or day. Under the identity kernel interaction trajectories are noisy, but exhibit morning and evening peaks; the correlation kernel smooths these into coherent daily profiles. For NESO the \texttt{month} $\times$ \texttt{season} pair dominates by a wide margin, reflecting the near-collinearity of these two features and their joint encoding of seasonal demand patterns. Under the correlation kernel all interaction trajectories collapse to flat lines, consistent with the uniform covariance structure.
 
\begin{figure}[p]
  \centering
  \includegraphics[width=\textwidth]{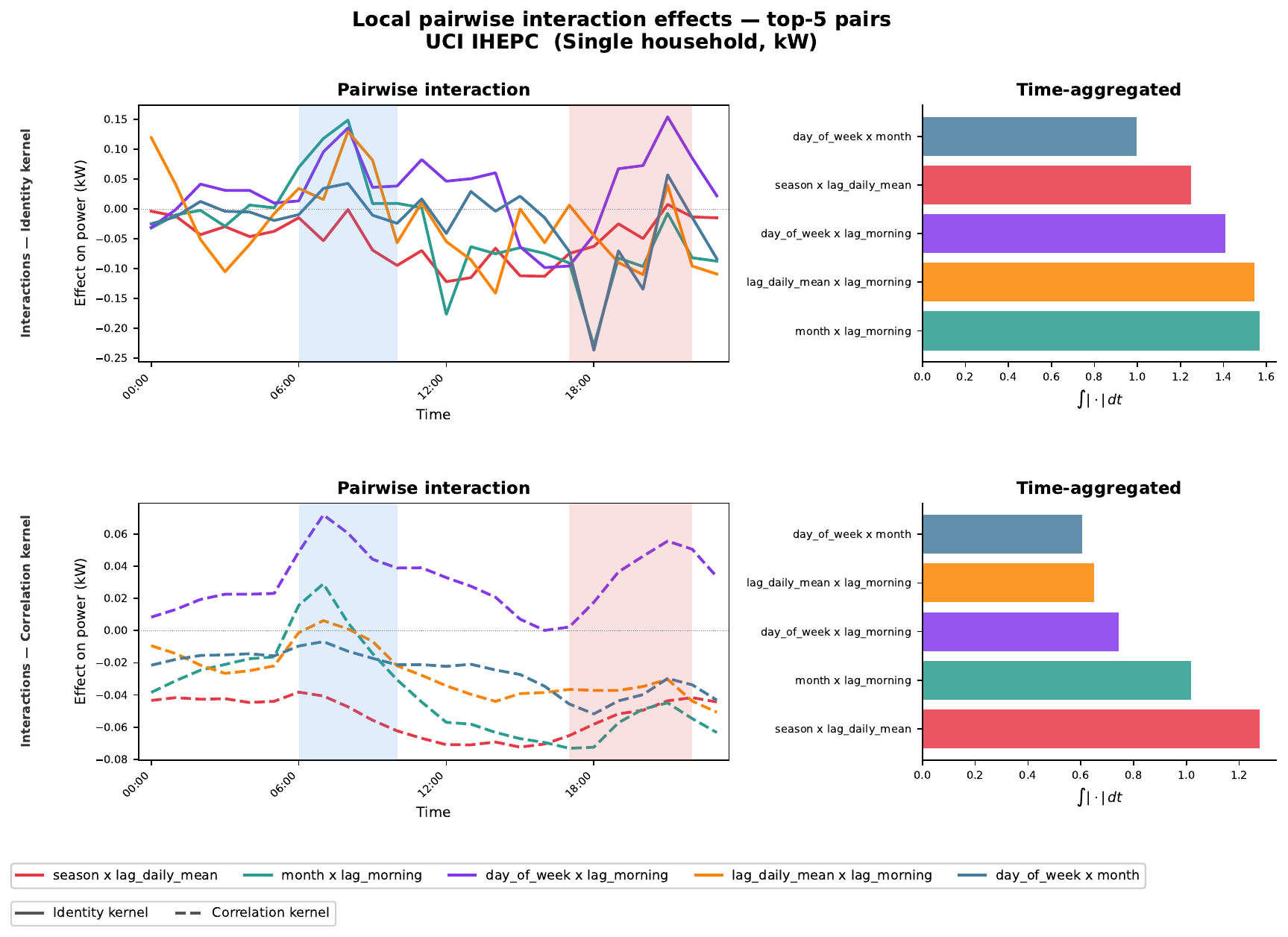}
  \caption{%
    \textbf{Local pairwise interaction effects --- top-5 pairs.} Instance: \texttt{day\_of\_week} $= 4$, \texttt{month} $= 11$, \texttt{lag\_daily\_mean} $= 1.640$\,kW, selected as the median among 814 days with \texttt{is\_weekend} $= 0$ and \texttt{day\_of\_week} $\in \{1,2,3,4\}$.  Top-5 pairs are selected by time-integrated correlation-kernel importance. (\emph{row~1}) Identity kernel; (\emph{row~2}) correlation kernel (dashed lines). (\emph{col.~1}) Interaction trajectories over the 24-hour day; (\emph{col.~2}) time-aggregated bar chart. Blue and red shading marks AM and PM peak windows.
  }
  \label{fig:energy_fig4_ihepc}
\end{figure}
 
\begin{figure}[p]
  \centering
  \includegraphics[width=\textwidth]{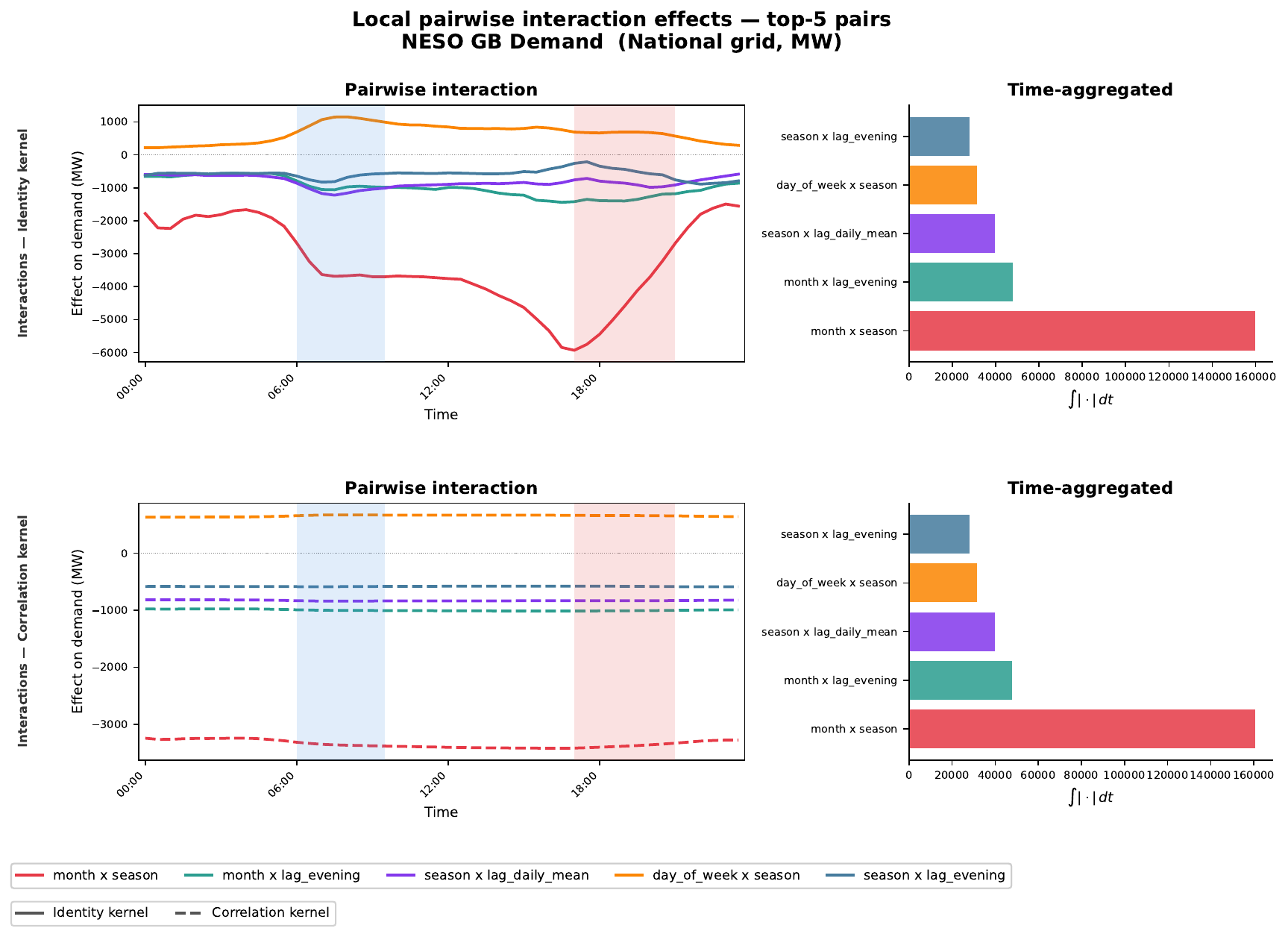}
  \caption{%
    \textbf{Local pairwise interaction effects --- top-5
    pairs.} Instance: \texttt{day\_of\_week} $= 0$, \texttt{month} $= 12$, \texttt{lag\_daily\_mean} $= 33{,}431$\,MW, selected as the median among 322 days with \texttt{is\_weekend} $= 0$ and \texttt{season} $= 1$ (winter). Top-5 pairs are ranked by time-integrated correlation-kernel importance.
    (\emph{row~1}) Identity kernel; (\emph{row~2}) correlation kernel (dashed lines). (\emph{col.~1}) Interaction trajectories over the 48 half-hourly periods; (\emph{col.~2}) time-aggregated bar chart. The \texttt{month}~$\times$~\texttt{season} pair dominates by a wide margin under both kernels. Blue and red shading marks AM and PM peak windows.
  }
  \label{fig:energy_fig4_neso}
\end{figure}

Figs.~\ref{fig:energy_fig3_ihepc}--\ref{fig:energy_fig3_neso} show global prediction effects in PDP style. For IHEPC the top-2 features by identity-kernel SHAP importance are \texttt{lag\_daily\_mean} and \texttt{month}, with \texttt{lag\_daily\_mean} showing a clear monotone relationship: higher prior demand predicts higher demand deviations. The correlation kernel preserves this monotone shape but tightens the effect and time point spread, indicating that the relationship is more coherent across the day rather than time-step-specific. \texttt{month} exhibits a U-shaped curve, capturing the January peak, summer trough, and second winter peak. For NESO the top-2 features are \texttt{month} and \texttt{lag\_evening}. \texttt{month} exhibits a U-shaped curve again. The correlation kernel suppresses time point variability and cleanly reveals these seasonal patterns for both features.
 
\begin{figure}[ht]
  \centering
  \includegraphics[width=\textwidth]{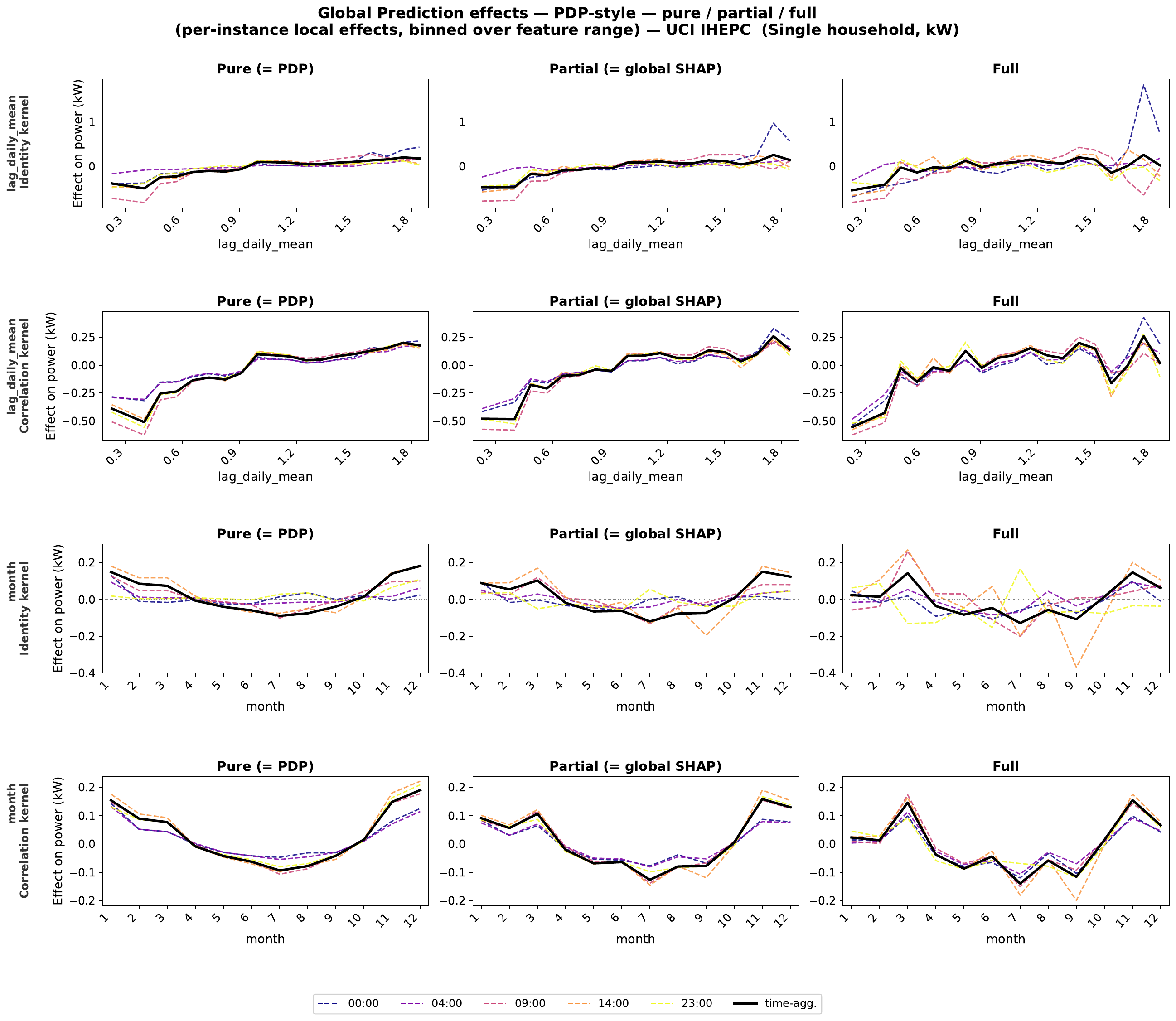}
  \caption{\textbf{Global prediction effects --- PDP-style --- IHEPC.} Per-instance pure, partial, and full local prediction effects, computed for $N_{\mathrm{PDP}} = 120$ profiles drawn by stratified sampling from the training set, are binned over the feature range ($N_{\mathrm{bins}} = 20$) for the two highest-importance features by identity-kernel SHAP importance. (\emph{rows~1--2}) \texttt{lag\_daily\_mean} under the identity (\emph{row~1}) and correlation (\emph{row~2}) kernels; (\emph{rows~3--4}) \texttt{month} under the identity (\emph{row~3}) and correlation (\emph{row~4}) kernels. Coloured dashed lines show mean effects at five selected time points (00:00, 04:00, 09:00, 14:00, 23:00); the solid black line is the time-aggregated mean.}
  \label{fig:energy_fig3_ihepc}
\end{figure}
 
\begin{figure}[ht]
  \centering
  \includegraphics[width=\textwidth]{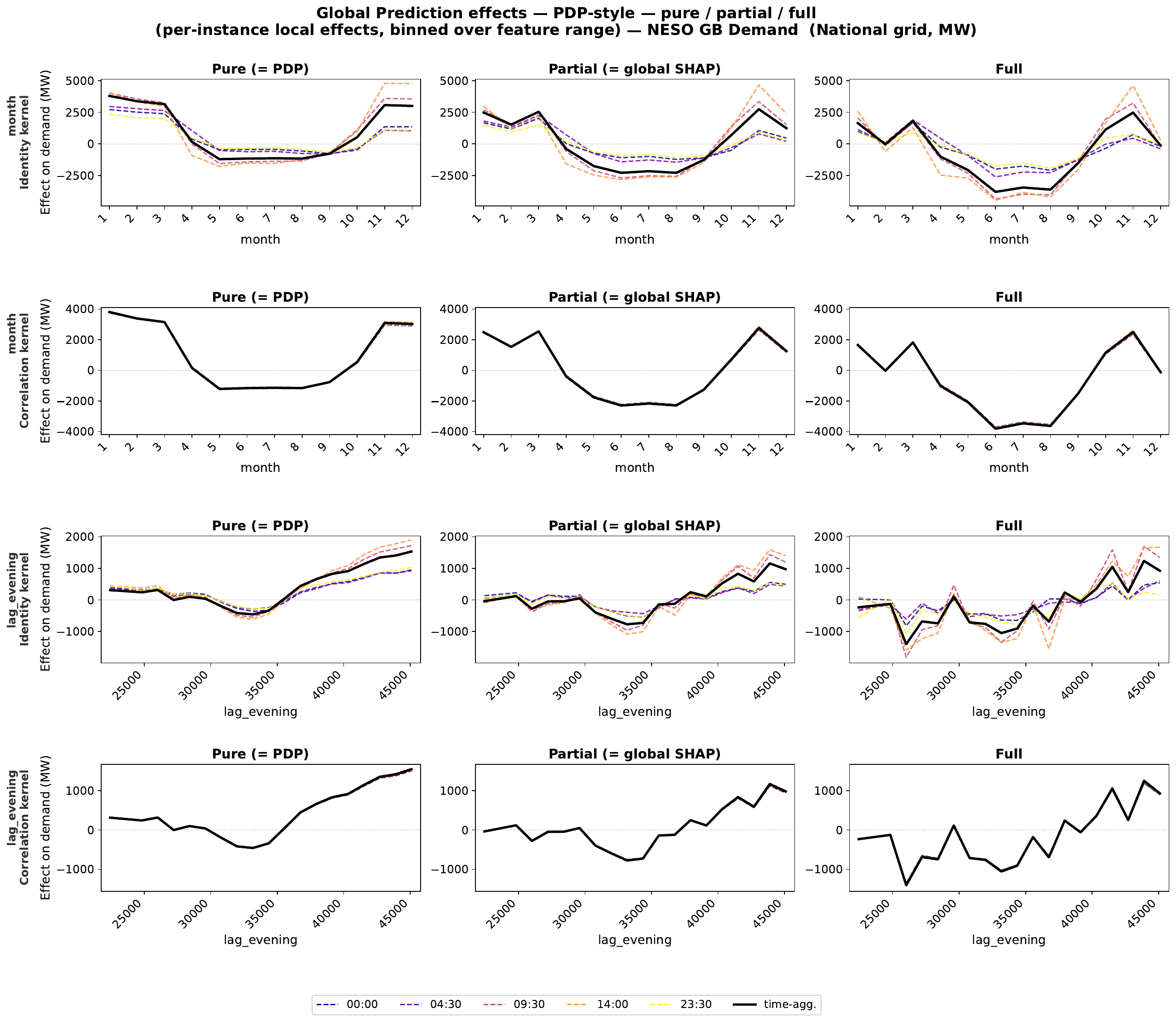}
  \caption{%
     \textbf{Global prediction effects --- PDP-style --- pure / partial / full - NESO.} Per-instance pure, partial, and full local prediction effects, computed for $N_{\mathrm{PDP}} = 120$ profiles drawn by stratified sampling from the training set, are binned over the feature range ($N_{\mathrm{bins}} = 20$) for the two highest-importance features by identity-kernel SHAP importance; (\emph{rows~1--2}) \texttt{month} under the identity (\emph{row~1}) and correlation (\emph{row~2}) kernels; (\emph{rows~3--4}) \texttt{lag\_evening} under the identity (\emph{row~3}) and correlation (\emph{row~4}) kernels. Coloured dashed lines show mean effects at five selected time points (00:00, 04:00, 09:00, 14:00, 23:00); the solid black line is the time-aggregated mean.
  }
  \label{fig:energy_fig3_neso}
\end{figure}

\clearpage
\subsection{Data Availability}\label{app:data_available}
Intraday five-minute OHLCV bar data for SPY were obtained from Polygon.io (\url{https://polygon.io}) covering January~2022 to April~2024. Access requires a Polygon.io subscription. The derived feature matrix and diurnal-adjusted volatility trajectories used in the experiments are available in the supplementary material. The IHEPC dataset is publicly available from the UCI Machine Learning Repository~\cite{dua2019uci}. GB historic demand data are publicly available from the National Energy System Operator data portal~\cite{neso2024demand}. All analysis code is available at \href{https://github.com/bips-hb/Hilbert_explanation_framework}{GitHub}.

\subsection{Computational Details}\label{app:computation}

\textbf{Computing environment.} All experiments were run on a 64-bit Linux platform running Ubuntu 22.04 LTS with two AMD EPYC Genoa 9534 64-core processors (128 cores, 256 threads total), 1.5~TB of RAM, and eight NVIDIA RTX 6000 Ada Generation GPUs (each with 48~GB memory). GPUs were not used: all model training and cooperative-game computations rely on CPU-only \texttt{scikit-learn} routines, parallelized over cores via \texttt{n\_jobs=-1}.

\textbf{Approximate wall-clock times.}
The synthetic ground-truth recovery experiment (Sec.~\ref{app:ground_truth}), comprising $30$ Monte Carlo runs $\times\,8$ sample sizes $\times\,4$ model classes plus the oracle, with exact $2^p$ Möbius computation at each setting, completes in approximately $30$--$45$~minutes. The SPY case study (Sec.~\ref{app:intraday_spy_volatility}) completes in approximately $15$~minutes for a first run, dominated by the global-game cache ($30$ instances $\times\,3$ game types). The IHEPC energy experiment (Sec.~\ref{app:energy_comparison}) takes approximately $20$~minutes for a first run, and the NESO experiment approximately $45$~minutes; NESO is slower due to its larger coalition space ($2^7=128$ vs.~$2^6=64$ for IHEPC) and its finer half-hourly resolution ($T=48$ vs.~$T=24$).

\section{Broader Impact and Ethics Statement}\label{app:ethics}

This work introduces a methodological framework for explaining machine learning models with time-dependent outputs. As a post-hoc, model-agnostic explanation method, it does not itself produce predictions or decisions, but rather aids interpretation of existing models. We see two main avenues of positive impact. First, in high-stakes domains where models with functional outputs are increasingly deployed — clinical trajectory prediction, energy demand forecasting, financial risk modeling — our framework supports more transparent model auditing by making temporal dependencies in feature attributions explicit, which scalar pointwise methods obscure. Second, by unifying existing approaches under a common operator view, the framework lowers the barrier to comparing explanations across methods and may reduce ad-hoc method selection in applied work.

Limitations and risks of explanation methods generally apply here. Explanations can create unwarranted confidence in model behavior, particularly when the underlying model is itself unreliable or trained on biased data; our framework does not address these upstream issues and should not be treated as a substitute for model validation, fairness analysis, or domain expertise. The kernel choice introduced by our framework adds an additional modeling decision: a misspecified kernel can produce attributions that are technically valid but misleading for the question at hand (for example, applying a symmetric kernel to a causal process, as we illustrate in our synthetic experiments). We address this through explicit kernel-selection guidance (Tab.~\ref{tab:kernel_guidance}), but practitioners retain responsibility for justifying their choices. As with all Shapley-based methods, our explanations rely on a reference distribution and inherit known concerns about marginal versus conditional masking under feature dependence; users should select the masking strategy appropriate to their interpretive goal.

Our experiments use publicly available datasets (UCI IHEPC, GB NESO historic demand) and a standard licensed financial dataset (Polygon.io intraday equity bars). None involve human subjects, personally identifiable information, or sensitive demographic attributes. The IHEPC dataset contains household-level electricity consumption from a single anonymized household; we use it solely as a benchmark for the methodological framework and draw no inferences about individuals. We do not foresee direct dual-use concerns: the framework is a diagnostic tool for existing models rather than an enabling capability for new ones.

%%%%%%%%%%%%%%%%%%%%%%%%%%%%%%%%%%%%%%%%%%%%%%%%%%%%%%%%%%%%

%\clearpage
%\input{checklist.tex}

\end{document}